\def\eqref#1{equation~\ref{#1}}
\def\1{\bm{1}}
\DeclareMathAlphabet{\mathsfit}{\encodingdefault}{\sfdefault}{m}{sl}
\SetMathAlphabet{\mathsfit}{bold}{\encodingdefault}{\sfdefault}{bx}{n}
\newcommand{\KL}{D_{\mathrm{KL}}}
\theoremstyle{plain}
\newtheorem{theorem}{Theorem}[section]
\newtheorem{lemma}{Lemma}[section]
\newtheorem{corollary}{Corollary}[section]
\theoremstyle{definition}
\theoremstyle{remark}
\newcommand{\rom}[1]{\uppercase\expandafter{\romannumeral #1\relax}}
\title{CLARE: Conservative Model-Based Reward Learning for Offline Inverse Reinforcement Learning}
\author{Sheng Yue\textsuperscript{\rm 1}\thanks{Part of this work was done when Sheng Yue, Wei Shao, and Sen Lin worked at Arizona State University.}, Guanbo Wang\textsuperscript{\rm 2}, Wei Shao\textsuperscript{\rm 3}\footnotemark[1], Zhaofeng Zhang\textsuperscript{\rm 4}, Sen Lin\textsuperscript{\rm 5}\footnotemark[1], Ju Ren\textsuperscript{\rm 1,6}\thanks{Corresponding author: \texttt{renju@tsinghua.edu.cn}},\\
{\bf  Junshan Zhang\textsuperscript{\rm 3}}\\
\textsuperscript{\rm 1}Tsinghua University, \textsuperscript{2}Tongji University, \textsuperscript{3}University of California, Davis, \\
\textsuperscript{4}Arizona State University, \textsuperscript{5}Ohio State University, \textsuperscript{6}Zhongguancun Laboratory
}
\begin{document}

	\maketitle
	
	\begin{abstract}
	 This work aims to tackle a major challenge in offline Inverse Reinforcement Learning (IRL), namely the \emph{reward extrapolation error}, where the learned reward function may fail to explain the task correctly and misguide the agent in unseen environments due to the intrinsic covariate shift. Leveraging both expert data and lower-quality diverse data, we devise a principled algorithm (namely CLARE) that solves offline IRL efficiently via integrating ``conservatism'' into a learned reward function and utilizing an estimated dynamics model. Our theoretical analysis provides an upper bound on the return gap between the learned policy and the expert policy, based on which we  characterize the impact of covariate shift by examining  subtle two-tier tradeoffs between the {\color{black}``exploitation''} (on both expert and diverse data) and {\color{black}``exploration''} (on the estimated dynamics model). We show that CLARE can provably alleviate the reward extrapolation error by striking the right {\color{black}``exploitation-exploration''} balance therein. Extensive experiments corroborate the significant performance gains of CLARE over existing state-of-the-art algorithms on MuJoCo continuous control tasks (especially with a small offline dataset), and the learned reward is highly instructive for further learning (\href{https://github.com/shaunyue/clare}{source code}).
	\end{abstract}
	
	\section{Introduction}
	\label{sec:introduction}
	
	The primary objective of Inverse Reinforcement Learning (IRL) is to learn  a reward function from demonstrations \citep{arora2021survey,russell1998learning}. In general, conventional  IRL methods rely on extensive online trials and errors that can be costly or require a fully known transition model \citep{abbeel2004apprenticeship, ratliff2006maximum,ziebart2008maximum,syed2007game,boularias2011relative,osa2018algorithmic}, struggling to scale in many real-world applications. To tackle this problem, this paper studies \emph{offline IRL}, with focus on learning from a  previously collected dataset without online interaction with the environment. Offline IRL holds tremendous promise for  safety-sensitive applications where manually identifying an appropriate reward is difficult but historical datasets of human demonstrations are readily available (e.g., in healthcare, autonomous driving, robotics, etc.). In particular, since the learned reward function is a succinct representation of an expert’s intention, it is useful for policy learning (e.g., in offline Imitation Learning (IL) ~\citep{chan2021scalable}) as well as a number of broader applications (e.g., task description~\citep{ng2000algorithms} and transfer learning ~\citep{herman2016inverse}).

	
    This work aims to address a major challenge in offline IRL, namely the \emph{reward extrapolation error}, where the learned reward function may fail to correctly explain the task and misguide the agent in unseen environments. This issue results from the partial coverage of states in the restricted expert demonstrations (i.e., covariate shift) as well as the high-dimensional and expressive function approximation for the reward. It is further exacerbated due to no reinforcement signal for   supervision and the intrinsic \emph{reward ambiguity} therein.\footnote{The reward ambiguity refers to the fact that same behavior can be optimal for many reward functions.}
	In fact, similar challenges related to the extrapolation error \emph{in the value function} have been widely observed in offline (forward) RL, e.g., in \citet{kumar2020conservative,yu2020mopo,yu2021combo}. Unfortunately, to the best of our knowledge, this challenge remains not well understood in offline IRL, albeit there is some recent progress~\citep{zolna2020offline,garg2021iq,chan2021scalable}. Thus motivated, the key question this paper seeks to answer is: ``How to devise offline IRL algorithms  that can ameliorate the reward extrapolation error effectively?''

	We answer this question by introducing a principled offline IRL algorithm, named \underline{c}onservative mode\underline{l}-b\underline{a}sed \underline{r}eward l\underline{e}arning (CLARE), leveraging not only (limited) higher-quality expert data but also (potentially abundant) lower-quality diverse data to enhance the coverage of the state-action space for combating covariate shift. {\color{black}CLARE addresses the above-mentioned challenge by appropriately \emph{integrating conservatism} into the learned reward to alleviate the possible misguidance in out-of-distribution states, and improves the reward generalization ability by utilizing a learned dynamics model.} More specifically, CLARE iterates between \emph{conservative reward updating} and \emph{safe policy improvement}, and the reward function is updated via improving its values on \emph{weighted} expert and diverse state-actions while in turn cautiously penalizing those generated from model rollouts. As a result, it can encapsulate the expert intention while conservatively evaluating out-of-distribution state-actions, which in turn encourages the policy to visit data-supported states and follow expert behaviors and hence achieves safe policy search. 
	
	
	
	\begin{figure}[ht]
		\centering
		\vspace{-.9em}
		\includegraphics[width=0.8\columnwidth]{./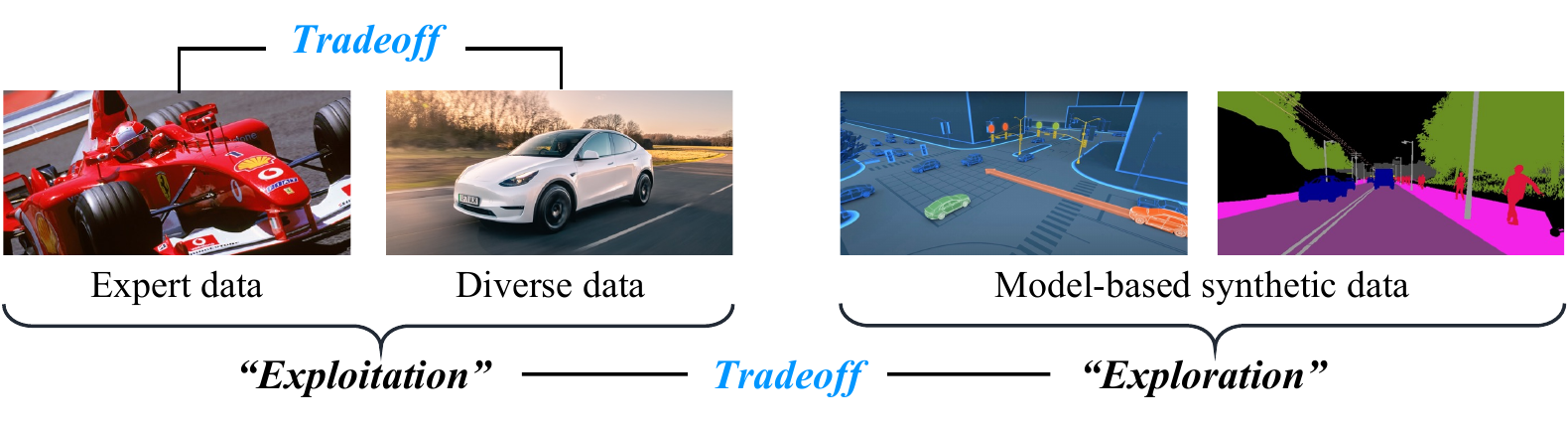}
		\label{figure:tradeoff}
		\vspace{-.9em}
		\caption{An illustration of the two-tier tradeoffs in CLARE. }
		\vspace{-.5em}
	\end{figure}
	
	Technically, there are highly nontrivial two-tier tradeoffs that CLARE has to delicately calibrate: {\color{black}``balanced exploitation'' of the expert and diverse data, and ``exploration'' of the estimated model.}\footnote{\color{black}The exploration in the context of this manuscript refers to enhancing the generalization capability of the algorithm by escaping the offline data manifold via model rollout.} As illustrated in Fig. \ref{figure:tradeoff}, The first tradeoff arises because CLARE relies on both  exploiting expert demonstrations to infer the reward and exploiting  diverse data to handle the covariate shift caused by the insufficient state-action coverage of limited demonstration data.
    At a higher level, CLARE needs to judiciously explore the estimated model to escape the offline data manifold for better generalization. To this end, we first introduce the  new \emph{pointwise weight parameters} for  offline data points (state-action pairs) to capture the subtle two-tier exploitation-exploration tradeoffs. Then, we rigorously quantify its impact on the performance by providing an upper bound on the return gap between the learned policy and   the expert policy. Based on the theoretical quantification, we derive the optimal weight parameters whereby CLARE can {\color{black}strike the balance appropriately} to minimize the return gap. Our findings reveal that the reward function obtained by CLARE can effectively capture the expert intention and provably ameliorate the extrapolation error in offline IRL. 
	
	Finally,  extensive experiments are carred out to compare  CLARE with state-of-the-art offline IRL and offline IL algorithms on MuJoCo continuous control tasks. Our results demonstrate that even using small offline datasets, CLARE obtains significant performance gains over existing algorithms in continuous, high-dimensional environments. We also show that the learned reward function can explain the expert behaviors well and is highly instructive for further  learning.

	\section{Preliminaries}
	\label{sec:preliminaries}
	
	\textbf{Markov decision process (MDP)} can be specified by tuple $M\doteq\langle\mathcal{S},\mathcal{A},T,R,\mu,\gamma\rangle$, consisting of state space $\mathcal{S}$, action space $\mathcal{A}$, transition function $T:\mathcal{S}\times\mathcal{A}\rightarrow\mathcal{P}(\mathcal{S})$, reward function ${R}:\mathcal{S}\times\mathcal{A}\rightarrow\mathbb{R}$, initial state distribution $\mu:\mathcal{S}\rightarrow[0,1]$, and discount factor $\gamma\in(0,1)$. A stationary stochastic policy maps states to distributions over actions as $\pi:\mathcal{S}\rightarrow\mathcal{P}(\mathcal{A})$. We define the normalized state-action occupancy measure (abbreviated as occupancy measure) of policy $\pi$ under transition dynamics $T$ as $\rho^{\pi}(s,a)\doteq(1-\gamma)\sum^{\infty}_{h=0}\gamma^h\Pr(s_h=s|T,\pi,\mu)\pi(a|s)$. The objective of reinforcement learning (RL) can be expressed as maximizing expected cumulative rewards: $\max_{\pi\in\Pi}J(\pi)\doteq\mathbb{E}_{s,a\sim\rho^{\pi}}[{R}(s,a)]$, where $\Pi$ is the set of all stationary stochastic policies that take actions in $\mathcal{A}$ given states in $\mathcal{S}$.\footnote{For convenience, we omit a constant multiplier, $1/(1-\gamma)$, in the objective for conciseness, i.e., the complete objective function is given by $\max_{\pi\in\Pi} \mathbb{E}_{s,a\sim\rho^{\pi}}[{R}(s,a)/(1-\gamma)]$.}
	
	\textbf{Maximum entropy IRL (MaxEnt IRL)} aims to learn the reward function from expert demonstrations and reason about the \emph{stochasticity} therein \citep{ziebart2008maximum,ho2016generative}. Based on demonstrations sampled from expert policy $\pi^E$, the MaxEnt IRL problem is given by
	\begin{align}
		\label{prob:minimax}
		\min_{{r}\in\mathcal{R}}\left(\max_{\pi\in\Pi}\alpha{H}(\pi)+\mathbb{E}_{s,a\sim\rho^{\pi}}[{r}(s,a)]\right)-\mathbb{E}_{s,a\sim\rho^E}[{r}(s,a)] + \psi(r),
	\end{align}
	with ${H}(\pi)\doteq -\iint\rho^{\pi}(s,a)\log\pi(a|s)\dif s \dif a$ being the $\gamma$-discounted causal entropy, $\mathcal{R}$ a family of reward functions, $\alpha\ge0$ the weight parameter, and $\psi:\mathbb{R}^{\mathcal{S}\times\mathcal{A}}\rightarrow\mathbb{R}\cup\{\infty\}$ a convex reward regularizer \citet{fu2018learning,qureshi2018adversarial}.  
	Problem (\ref{prob:minimax}) looks for a reward function assigning higher rewards to the expert policy and lower rewards to other policies, along with the best policy under the learned reward function. Although enjoying strong theoretical justification and achieving great performance in many applications, MaxEnt IRL has to solve a forward RL problem in the inner loop that involves extensive online interactions with the environment.

	\textbf{Offline IRL} is the setting where the algorithm is neither allowed to interact with the environment nor provided reinforcement signals. It only has access to static dataset $\mathcal{D}=\mathcal{D}_E\cup\mathcal{D}_B$ consisting of expert dataset $\mathcal{D}_E\doteq\{(s_i,a_i,s'_i)\}^{D_E}_{i=1}$ and diverse dataset $\mathcal{D}_B\doteq\{(s_i,a_i,s'_i)\}^{D_B}_{i=1}$ collected by expert policy $\pi^E$ and behavior policy $\pi^B$, respectively. The goal of offline IRL is to infer a reward function capable of explaining the expert's preferences from the given dataset.
	
	\section{CLARE: conservative model-based reward learning}
	\label{sec:learning_conservative_reward}

A naive solution for offline IRL is to retrofit MaxEnt IRL to the offline setting via estimating a dynamics model using offline data (e.g., in \citet{tanwani2013inverse,herman2016inverse}). Unfortunately, it has been reported that this naive paradigm often suffers from unsatisfactory performance in high-dimensional and continuous environments \cite{jarrett2020strictly}. The underlying reasons for this issue include: (1) the dependence on   full knowledge of the reward feature function, and (2) the lack of effective mechanisms to tackle the reward extrapolation error caused by covariate shift (as stated in \cref{sec:introduction}). 
Nevertheless, we believe that utilizing a learned dynamics model is beneficial because it is expected to provide broader generalization by learning on additional model-generated synthetic data \citep{yu2020mopo,yu2021combo,lin2021model}.  With this insight, this work focuses on the model-based offline IRL method that is robust to covariate shift while enjoying the model's generalization ability.
	
As illustrated in Fig. \ref{figure:tradeoff}, there are two-tier subtle tradeoffs that need to be carefully balanced between  exploiting the offline data and exploring model-based synthetic data. On one hand,  the higher-quality expert demonstrations are exploited to infer the intention and abstract the reward function therein, while  the lower-quaity diverse data is exploited to enrich data support. On the other hand, it is essential to prudently explore the estimated dynamics model to improve the generalization capability while mitigating overfitting errors in inaccurate regions. To this end, we devise \underline{c}onservative mode\underline{l}-b\underline{a}sed \underline{r}eward l\underline{e}arning (CLARE) based on  MaxEnt IRL, where the new \emph{pointwise weight parameters} are introduced for each offline state-action pair to capture the tradeoffs subtly. We elaborate further in what follows.

	As outlined below, CLARE iterates between  \emph{(I) conservative reward updating} and  \emph{(II) safe policy improvement},  under a dynamics model (denoted by $\widehat{T}$) learned from offline dataset.
	
	
	\textbf{\emph{(I) Conservative reward updating.}} Given current policy $\pi$, dynamics model $\widehat{T}$, and offline datasets $\mathcal{D}_E$ and $\mathcal{D}_B$, CLARE updates reward funtion ${r}$ based on the following loss:
	\begin{align}
		\label{eqn:reward_loss}
		L({r}|\pi)\doteq \underbrace{\vphantom{\big(\big)}{\color{blue}Z_{\beta}} \mathbb{E}_{s,a\sim{\color{blue}\hat{\rho}^{\pi}}}[{r}(s,a)]}_\textrm{penalized on model rollouts} - \underbrace{\vphantom{\big(\big)}\mathbb{E}_{s,a\sim\tilde{\rho}^E}[{r}(s,a)]}_\textrm{increased on expert data} - \underbrace{\vphantom{\big(\big)}\mathbb{E}_{s,a\sim{\color{blue}\tilde{\rho}^D}}[{\color{blue}{\beta}(s,a)}{r}(s,a)]}_{\mathclap{\textrm{weighting expert and diverse data}}}+\underbrace{\vphantom{\big(\big)}{\color{blue}Z_\beta}\psi({r})}_{\mathclap{\textrm{regularizer}}}, 
	\end{align}
	where ${\tilde{\rho}^D(s,a)\doteq(|\mathcal{D}_E(s,a)| + |\mathcal{D}_B(s,a)|)/(D_E+D_B)}$ is the empirical distribution of $(s,a)$ in the union dataset $\mathcal{D}=\mathcal{D}_E\cup\mathcal{D}_B$ and $\tilde{\rho}^E\doteq |\mathcal{D}_E(s,a)|/D_E$ is that for expert dataset $\mathcal{D}_E$;  $\hat{\rho}^{\pi}$ is the occupancy measure when rolling out $\pi$ with dynamics model $\widehat{T}$; and $\psi$ denotes a convex regularizer mentioned above. One key step is to add an additional term weighting the reward of each offline state-action by $\beta(s,a)$, which is a ``fine-grained control'' for the exploitation of the offline data. For the data deserving more exploitation (e.g., expert behaviors with sufficient data support), we can set a relatively large $\beta(s,a)$; otherwise, we decrease its value.  Besides, it can also control the exploration of the model subtly  (consider that if we set all $\beta(s,a)=0$, \cref{eqn:reward_loss} reduces to MaxEnt IRL, enabling the agent to explore the model without restrictions). Here, $Z_{\beta}\doteq 1 +  \mathbb{E}_{s',a'\sim\tilde{\rho}^D}[{\beta}(s',a')]$ is a normalization term. The new ingredients beyond MaxEnt IRL are highlighted in blue.
	
  Observe that in \cref{eqn:reward_loss},  by decreasing the reward loss, CLARE pushes up the reward on good offline state-action that characterized by larger $\beta(s,a)$, while pushing down the reward on potentially out-of-distribution ones that generated from model rollouts. This is similar   to  COMBO \citep{yu2021combo}
  in spirit,  a state-of-the-art offline forward RL algorithm,  and  results in a \emph{conservative reward function}. It can encourage the policy to cautiously exploring the state-actions beyond offline data manifold, thus capable of mitigating the misguidance issue and guiding safe policy search. In \cref{sec:theoretical_analysis}, we will derive a closed-form optimal $\beta(s,a)$ that enables CLARE to achieve {\color{black}a proper exploration-exploitation trade-off} by minimizing a return gap from the expert policy.
	
	
	
	
	\textbf{\emph{(II) Safe policy improvement.}} Given updated reward function ${r}$,   the policy is improved by solving
	\begin{align}
		\label{eqn:policy_improvement}
		\max_{\pi\in\Pi} L(\pi|{r})\doteq Z_{\beta} \mathbb{E}_{s,a\sim\hat{\rho}^{\pi}}[{r}(s,a)] + \alpha\widehat{{H}}(\pi),
	\end{align}
	where $\alpha \ge 0$ is a weight parameter, and  $\widehat{{H}}(\pi)\doteq-\iint\hat{\rho}^{\pi}(s,a)\log\pi(a|s)\dif s \dif a$ is the $\gamma$-discounted causal entropy induced by the policy and learned dynamics model. Due to the embedded expert intention and conservatism in the reward function, the policy is updated safely by carrying out conservative model-based exploration. One can use any well-established MaxEnt RL approach to solve this problem by simulating with model $\widehat{T}$ and reward function ${r}$. It is worth noting that  for Problem (\ref{eqn:policy_improvement}) in this step,  the practical implementation of CLARE works well with a small number of updates in each iteration (see Sections \ref{sec:practical_implementation} and \ref{sec:experiment}).
	
	
	
	\section{Theoretical analysis of CLARE}
	\label{sec:theoretical_analysis}
	
	In this section, we focus on answering the following question: ``How to set $\beta(s,a)$ for each offline state-action pair to {\color{black}strike the two-tier exploitation-exploration balance appropriately}?'' To this end, we first quantify the impact of the tradeoffs via bounding the return gap between the learned policy and expert policy. Then, we derive the optimal weight parameters to minimize this gap. All the detailed proofs can be found in Appendix~\ref{sec:proof}. Notably, this section works with finite state and action spaces, but our algorithms and experiments run in high-dimensional and continuous environments.
	
	\subsection{Convergence analysis}
	\label{sec:convergence_analysis}
	
	We first characterize the policy learned by CLARE, in terms of $\beta(s,a)$ and empirical distributions $\tilde{\rho}^E$ and $\tilde{\rho}^D$. Before proceeding, it is easy to see CLARE is iteratively solving the min-max problem:
	\begin{align}
		\label{prob:clare}
		\min_{{r}\in\mathcal{R}}\max_{\pi\in\Pi}\underbrace{\alpha\widehat{{H}}(\pi) + Z_{\beta} \mathbb{E}_{\hat{\rho}^{\pi}}\big[{r}(s,a)\big] - \mathbb{E}_{\tilde{\rho}^D}\big[{\beta}(s,a){r}(s,a)\big]- \mathbb{E}_{\tilde{\rho}^E}\big[{r}(s,a)\big]+Z_\beta\psi({r})}_{\doteq L(\pi,{r})}.
	\end{align}
	For dynamics $T$, define the set of occupancy measures satisfying \emph{Bellman flow constraints} as 
	\begin{align}
		\mathcal{C}_T\doteq \bigg\{\rho\in\mathbb{R}^{|\mathcal{S}||\mathcal{A}|}:\rho\ge0~\text{and}~\sum_{a}\rho(s,a)=\mu(s)+\gamma\sum_{s',a}T(s|s',a)\rho(s',a)~\forall s\in\mathcal{S}\bigg\}.
	\end{align}
	We first provide the following results for  switching between policies and occupancy measures, which allow us to use $\pi_\rho$ to denote the unique policy for occupancy measure $\rho$.
	\begin{lemma}[Theorem 2 in \citet{syed2008apprenticeship}]
		\label{lem:equivalence}
		If $\rho\in\mathcal{C}_T$, then $\rho$ is the occupancy measure for stationary policy $\pi_\rho(a|s)\doteq \rho(s,a)/\sum_{a'}\rho(s,a')$, and $\pi_\rho$ is the only stationary policy with occupancy measure $\rho$.
	\end{lemma}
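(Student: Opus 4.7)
The plan is to follow the classical Syed--Bowling--Schapire argument: first construct $\pi_\rho$ from $\rho$ pointwise, then show that $\rho$ itself satisfies the fixed-point equation that defines the occupancy measure of $\pi_\rho$, and finally invoke uniqueness of that fixed point. To set notation, let $\rho(s)\doteq\sum_{a'}\rho(s,a')$. Define $\pi_\rho(a\mid s)\doteq \rho(s,a)/\rho(s)$ whenever $\rho(s)>0$, and extend $\pi_\rho(\cdot\mid s)$ arbitrarily (e.g., uniformly over $\gA$) at states with $\rho(s)=0$; this extension will not affect occupancies.

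The first step is to rewrite the Bellman flow constraint defining $\mathcal{C}_T$ in a form that makes it a fixed-point equation \emph{for the policy $\pi_\rho$}. Summing $\pi_\rho(a\mid s)\rho(s)=\rho(s,a)$ and substituting into the constraint yields, for every $(s,a)$ with $\rho(s)>0$,
\begin{align}
\rho(s,a) \;=\; \pi_\rho(a\mid s)\Bigl(\mu(s)+\gamma\sum_{s',a'}T(s\mid s',a')\rho(s',a')\Bigr),
\end{align}
and for $\rho(s)=0$ both sides vanish, so the equation still holds. Thus $\rho$ is a non-negative solution of the linear system $\rho=\mathcal{B}_{\pi_\rho}\rho$ where $\mathcal{B}_{\pi_\rho}\rho(s,a)\doteq \pi_\rho(a\mid s)[\mu(s)+\gamma\sum_{s',a'}T(s\mid s',a')\rho(s',a')]$.

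The second step is to show that $\mathcal{B}_{\pi_\rho}$ admits a unique fixed point in $\mathbb{R}^{|\mathcal{S}||\mathcal{A}|}$. Writing $\mathcal{B}_{\pi_\rho}\rho=\gamma P_{\pi_\rho}\rho + m$ for an appropriate stochastic-like operator $P_{\pi_\rho}$ (column-stochastic in state) and constant $m$ depending on $\mu$, the map is an affine $\gamma$-contraction in the total-variation/$\ell_1$ norm because $\gamma<1$. The Banach fixed-point theorem then gives existence and uniqueness of the solution; by the standard derivation, that unique fixed point is precisely $\rho^{\pi_\rho}$ (the occupancy measure of $\pi_\rho$ under $T$ and $\mu$). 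Combining this with the previous step forces $\rho=\rho^{\pi_\rho}$, establishing the first claim.

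For uniqueness of the policy, suppose $\pi$ is any stationary policy with occupancy measure $\rho$. Then by the definition of occupancy measure, $\rho(s,a)=\rho(s)\pi(a\mid s)$, so on every state $s$ with $\rho(s)>0$ we must have $\pi(a\mid s)=\rho(s,a)/\rho(s)=\pi_\rho(a\mid s)$; states with $\rho(s)=0$ are visited with probability zero and so do not affect the occupancy, which is the usual sense in which $\pi_\rho$ is ``the only'' such policy. The main obstacle I anticipate is precisely the careful handling of these zero-measure states: one must verify that the Bellman flow equation is consistent there (both sides are zero) and that redefining $\pi_\rho$ at them leaves $\rho^{\pi_\rho}$ unchanged, both of which follow from the non-negativity constraint in $\mathcal{C}_T$ and the contraction argument above.
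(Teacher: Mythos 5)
The paper does not actually prove this lemma; it imports it verbatim as Theorem~2 of Syed et al.\ (2008), so there is no in-paper proof to compare against. Your reconstruction is correct and complete: rewriting the flow constraint as $\rho=\mathcal{B}_{\pi_\rho}\rho$, observing that the affine map $\mathcal{B}_{\pi_\rho}$ is a $\gamma$-contraction in $\ell_1$ (its linear part is $\gamma$ times a column-stochastic transition operator, so the $\ell_1$ norm of the image of a difference is at most $\gamma$ times the $\ell_1$ norm of the difference), and identifying the unique fixed point with the occupancy measure of $\pi_\rho$ forces $\rho=\rho^{\pi_\rho}$; uniqueness of the policy on the support of $\rho$ is then immediate from $\rho(s,a)=\rho(s)\pi(a\mid s)$, and your handling of states with $\rho(s)=0$ (both sides of the flow equation vanish there, and the policy's values at unvisited states do not affect the occupancy) is exactly the right care to take. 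This is a slightly different route from Syed et al.'s original argument, which unrolls the flow equations inductively over time steps to match visitation frequencies term by term; the contraction argument is more compact and makes the uniqueness of the fixed point explicit rather than implicit. One cosmetic caveat you inherit from the paper rather than introduce: the paper defines $\rho^\pi$ with a $(1-\gamma)$ normalization while its stated flow constraint uses $\mu(s)$ rather than $(1-\gamma)\mu(s)$, so the fixed point of your operator is the unnormalized occupancy measure; the statement and your proof are consistent after a uniform rescaling by $1-\gamma$.
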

	\begin{lemma}[Lemma 3.2 in \citet{ho2016generative}]
		\label{lem:entropy_equivalence}
		Denote $\bar{{H}}(\rho)\doteq-\sum_{s,a}\rho(s,a)\log\frac{\rho(s,a)}{\sum_{a'}\rho(s,a')}$. Then, $\bar{{H}}$ is strictly concave, and for all $\pi\in\Pi$ and $\rho\in\mathcal{C}_T$, ${H}(\pi)=\bar{{H}}(\rho^{\pi})$ and $\bar{{H}}(\rho) = {H}(\mathcal{\pi_\rho})$ hold true, where $\pi_\rho(a|s)\doteq \rho(s,a)/\sum_{a'}\rho(s,a')$.
	\end{lemma}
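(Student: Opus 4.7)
The plan is to dispatch the two equalities first using Lemma~\ref{lem:equivalence}, and then attack the strict concavity, which is the subtle part. For $H(\pi)=\bar{H}(\rho^{\pi})$, I apply Lemma~\ref{lem:equivalence} to $\rho^{\pi}\in\mathcal{C}_T$ to obtain $\pi(a|s)=\rho^{\pi}(s,a)/\sum_{a'}\rho^{\pi}(s,a')$, substitute this identity into the (finite-state form of) $H(\pi)=-\sum_{s,a}\rho^{\pi}(s,a)\log\pi(a|s)$, and recognize the resulting expression as exactly $\bar{H}(\rho^{\pi})$. The symmetric identity $\bar{H}(\rho)=H(\pi_\rho)$ then follows immediately by applying the first equality to $\pi=\pi_\rho$ together with $\rho=\rho^{\pi_\rho}$, again from Lemma~\ref{lem:equivalence}.

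For concavity, I rewrite $\bar{H}(\rho)=\sum_{s,a}\varphi\bigl(\rho(s,a),\,\textstyle\sum_{a'}\rho(s,a')\bigr)$ with $\varphi(x,y)\doteq -x\log(x/y)$. The map $\varphi$ is the negative of the perspective of the convex function $x\mapsto x\log x$, so it is jointly concave on the positive quadrant; since $\rho\mapsto\sum_{a'}\rho(s,a')$ is linear, composition with $\varphi$ and summation over $(s,a)$ yields a concave functional on $\mathcal{C}_T$, with boundary behavior handled by the standard convention $\varphi(0,y)=0$.

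For the strict part, I take $\rho_1\neq\rho_2$ in $\mathcal{C}_T$, $t\in(0,1)$, $\rho_t\doteq t\rho_1+(1-t)\rho_2$, and suppose for contradiction that $\bar{H}(\rho_t)=t\bar{H}(\rho_1)+(1-t)\bar{H}(\rho_2)$. A standard argument then forces $\bar{H}$ to be affine on the whole segment $[\rho_1,\rho_2]$, hence each summand $\varphi$-term must be affine in the mixing parameter. Because the perspective of $x\log x$ is strictly convex except along rays through the origin, this requires, for every $(s,a)$, that $\bigl(\rho_1(s,a),\sum_{a'}\rho_1(s,a')\bigr)$ and $\bigl(\rho_2(s,a),\sum_{a'}\rho_2(s,a')\bigr)$ lie on a common ray through the origin. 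Equivalently $\pi_{\rho_1}(a|s)=\pi_{\rho_2}(a|s)$ for every $(s,a)$, so Lemma~\ref{lem:equivalence} makes $\rho_1$ and $\rho_2$ both equal to the unique occupancy measure of this common policy, giving $\rho_1=\rho_2$ and a contradiction. The main obstacle is precisely this strict step: a direct Hessian computation for $\varphi$ is degenerate ($\varphi$ is $1$-homogeneous along rays), so strict concavity of $\bar{H}$ cannot be extracted locally from $\varphi$ alone, and must be recovered via the global injectivity of $\rho\mapsto\pi_\rho$ on $\mathcal{C}_T$ supplied by Lemma~\ref{lem:equivalence}.
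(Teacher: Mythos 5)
The paper does not actually prove this lemma---it is imported verbatim as Lemma~3.2 of \citet{ho2016generative}---so there is no in-paper argument to compare against; judged on its own, your proof is correct and is essentially the standard one. The two equalities follow exactly as you say from Lemma~\ref{lem:equivalence} plus direct substitution, and writing $\bar{H}$ as a sum of negated perspectives of $x\log x$ composed with the linear marginal map gives concavity. More importantly, you have correctly isolated the point that is usually glossed over: the perspective $\varphi(x,y)=-x\log(x/y)$ is $1$-homogeneous, hence \emph{not} strictly concave on the positive quadrant, so strictness cannot be read off termwise; it only holds on $\mathcal{C}_T$, where equality in each term forces $\pi_{\rho_1}(a|s)=\pi_{\rho_2}(a|s)$ and the uniqueness of the occupancy measure of a fixed policy then collapses $\rho_1=\rho_2$. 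The one small loose end is the equality-case analysis at states $s$ where $\sum_{a'}\rho_1(s,a')=0$ but $\sum_{a'}\rho_2(s,a')>0$ (or vice versa): there the mixing weight $\lambda$ degenerates to $0$ or $1$ and the common-ray conclusion is vacuous, so $\pi_{\rho_1}(\cdot|s)$ is simply undetermined at such $s$. This is harmless---one defines the common policy to agree with $\pi_{\rho_i}$ wherever $\rho_i$ has positive state marginal, notes that the set of states visited by an occupancy measure is forward-closed under its own policy, and concludes $\rho_1=\rho_2$ as before---but it deserves a sentence rather than silence.
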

	Based on \cref{lem:equivalence} and \cref{lem:entropy_equivalence}, we have the follow results on the learned policy.
	\begin{theorem}
		\label{thm:true_problem_general}
		Assume that ${\beta}(s,a)\ge-\tilde{\rho}^E(s,a)/\tilde{\rho}^D(s,a)$ holds for $(s,a)\in\mathcal{D}$. For Problem (\ref{prob:clare}), the following relationship holds:
		\begin{align}
			\label{eqn:true_problem_general}
			\min_{{r}\in\mathcal{R}}\max_{\pi\in\Pi}L(\pi,{r}) = \max_{\hat{\rho}\in\mathcal{C}_{\widehat{T}}}\alpha\bar{{H}}(\hat{\rho}) - Z_\beta D_\psi\bigg(\hat{\rho},\frac{\tilde{\rho}^E + {\beta} \tilde{\rho}^D}{Z_\beta}\bigg),
    		\end{align}
		with $D_\psi(\rho_1,\rho_2)\doteq\psi^*(\rho_2-\rho_1)$, where $\psi^*$ is the convex conjugate of $\psi$.
	\end{theorem}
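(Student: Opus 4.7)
The plan is to reduce Problem~(\ref{prob:clare}) to the standard Fenchel-duality form by (i) combining the linear-in-$r$ terms, (ii) translating the outer $\max_{\pi}$ into a $\max$ over Bellman-flow occupancy measures, and (iii) swapping min and max via a minimax theorem so that the inner minimization over $r$ is recognized as $-Z_\beta\,\psi^{\ast}$. Concretely, I would first group the three terms of $L(\pi,r)$ that are linear in $r$:
\begin{align*}
L(\pi,r) = \alpha\widehat{H}(\pi) + Z_\beta\sum_{s,a}\bigl[\hat{\rho}^{\pi}(s,a) - \tilde{\rho}^{M}(s,a)\bigr]r(s,a) + Z_\beta\,\psi(r),
\end{align*}
where $\tilde{\rho}^{M}(s,a)\doteq(\tilde{\rho}^{E}(s,a)+\beta(s,a)\tilde{\rho}^{D}(s,a))/Z_\beta$. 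The hypothesis $\beta(s,a)\ge -\tilde{\rho}^{E}(s,a)/\tilde{\rho}^{D}(s,a)$ on $\mathcal{D}$ (and the obvious non-negativity elsewhere, since $\tilde{\rho}^{D}(s,a)=0$ off the support) ensures $\tilde{\rho}^{M}\ge 0$, and the choice of $Z_\beta=1+\mathbb{E}_{\tilde{\rho}^{D}}[\beta]$ makes it sum to one, so it is a legitimate distribution on $\mathcal{S}\times\mathcal{A}$.

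Next, I would apply \cref{lem:equivalence} and \cref{lem:entropy_equivalence} to switch from $\pi\in\Pi$ to its occupancy measure $\hat{\rho}\in\mathcal{C}_{\widehat{T}}$ (under dynamics $\widehat{T}$) with $\widehat{H}(\pi)=\bar{H}(\hat{\rho})$, giving
\begin{align*}
\min_{r\in\mathcal{R}}\max_{\pi\in\Pi}L(\pi,r) = \min_{r\in\mathcal{R}}\max_{\hat{\rho}\in\mathcal{C}_{\widehat{T}}}\Bigl\{\alpha\bar{H}(\hat{\rho}) + Z_\beta\langle\hat{\rho}-\tilde{\rho}^{M},r\rangle + Z_\beta\,\psi(r)\Bigr\}.
\end{align*}
The objective is strictly concave in $\hat{\rho}$ (by strict concavity of $\bar{H}$ from \cref{lem:entropy_equivalence}, plus a linear term) and convex in $r$ (linear plus the convex regularizer $\psi$); since $\mathcal{C}_{\widehat{T}}$ is convex and compact in the finite-state-action setting, Sion's minimax theorem applies and I can exchange the order of $\min_r$ and $\max_{\hat{\rho}}$.

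After the swap, the inner minimization decouples from $\bar{H}(\hat{\rho})$ and, by definition of the convex conjugate,
\begin{align*}
\min_{r\in\mathcal{R}}\bigl[\psi(r) + \langle\hat{\rho}-\tilde{\rho}^{M},r\rangle\bigr] = -\sup_{r\in\mathcal{R}}\bigl[\langle\tilde{\rho}^{M}-\hat{\rho},r\rangle - \psi(r)\bigr] = -\psi^{\ast}(\tilde{\rho}^{M}-\hat{\rho}) = -D_\psi(\hat{\rho},\tilde{\rho}^{M}),
\end{align*}
which, after multiplying by $Z_\beta$ and re-absorbing, yields exactly the right-hand side of \cref{eqn:true_problem_general}.

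The main obstacle I anticipate is the rigorous justification of the minimax swap: Sion's theorem requires (quasi-)convexity/concavity plus topological regularity, and the reward class $\mathcal{R}$ may be unbounded (e.g., $\mathcal{R}=\mathbb{R}^{\mathcal{S}\times\mathcal{A}}$). I would handle this either by assuming coercivity of $\psi$ so that the minimizer of the inner problem lies in a compact sublevel set (a standard hypothesis in the IRL literature, e.g., \citet{ho2016generative}), or by first restricting $r$ to a sufficiently large compact convex set, invoking Sion, and then passing to the limit using the fact that $\psi^{\ast}$ is the monotone limit of its restrictions. A secondary subtlety is that the identity $D_\psi(\hat{\rho},\tilde{\rho}^{M})=\psi^{\ast}(\tilde{\rho}^{M}-\hat{\rho})$ uses the sign convention declared in the statement, so I would verify at the outset that the $\beta$-condition guarantees $\tilde{\rho}^{M}$ is a valid distribution (so the conjugate is evaluated at an admissible argument).
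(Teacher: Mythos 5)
Your proposal is correct and follows essentially the same route as the paper's proof: combine the linear-in-$r$ terms into a single signed measure $(\tilde{\rho}^E+\beta\tilde{\rho}^D)/Z_\beta$, pass from policies to occupancy measures in $\mathcal{C}_{\widehat{T}}$ via \cref{lem:equivalence} and \cref{lem:entropy_equivalence}, swap $\min$ and $\max$ by a minimax theorem, and recognize the inner minimization as $-Z_\beta\psi^{\ast}(\tilde{\rho}^M-\hat{\rho})$. Your extra care about the compactness of $\mathcal{R}$ (the paper simply asserts it) and your sign bookkeeping are both sound — indeed your signs are more consistent with the theorem statement than the paper's own write-up, which carries a $+Z_\beta\psi^{\ast}$ typo through its final display.
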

	Notably, by selecting appropriate forms of reward regularizers $\psi$, $D_\psi$ can belong to a wide-range of statistical distances. For example, if $\psi(r)=\alpha r^2$, then $D_\psi (\rho_1,\rho_2)=\frac{1}{4\alpha}\chi^2(\rho_1,\rho_2)$; if $\psi$ restricts $r\in [-R^{\max},R^\mathrm{max}]$, then $D_\psi (\rho_1,\rho_2)=2R^\mathrm{max} D_\mathrm{TV}(\rho_1,\rho_2)$ \citep{garg2021iq}. \cref{thm:true_problem_general} implies that CLARE implicitly  seeks a policy \emph{under $\widehat{T}$} whose occupancy measure stays close to an interpolation of the empirical distributions of expert dataset $\mathcal{D}_E$ and union offline dataset $\mathcal{D}$. The interpolation reveals that CLARE is trying to trade off the exploration of the model and exploitation of offline data by selecting proper weight parameters $\beta(s,a)$. For example, if $\beta(s,a)=0$ for all $(s,a)\in\mathcal{D}$, CLARE will completely follow the occupancy measure of the (empirical) expert policy by explore the model freely. 
	In contrast, if $\beta(s,a)$ increases with $\tilde{\rho}^D(s,a)$, the learned policy will look for richer data support. 
	
	\textbf{\emph{Remarks.}} Looking deeper into \cref{eqn:true_problem_general},  the target occupancy measure can be  expressed equivalently as $\frac{(1+\beta D_E/D)\tilde{\rho}^E+(\beta D_S/D)\tilde{\rho}^B}{Z_\beta}$, after rearranging terms in the above interpolation. As a result, CLARE also subtly balances the exploitation between the expert and diverse datasets to extract potentially valuable information in the sub-optimal data. 
	
	\subsection{Striking the right exploration-exploitation balance}
	\label{sec:max_lower_bound}
	
	Next, we show how to set $\beta(s,a)$ properly to achieve the right two-tier balance. 
	
	Recall that  $J(\pi) \doteq \mathbb{E}_{s,a\sim\rho^{\pi}}[R(s,a)]$ is the return achieved by policy $\pi$. The next result provides a upper bound on the return gap between $J(\pi)$ and $J(\pi^E)$, which hinges on the intrinsic trade-offs.
	\begin{theorem}
		\label{thm:true_problem_lb}
		Suppose $|R(s,a)|\le1$ for any $s\in\mathcal{S},a\in\mathcal{A}$. For any stationary policy $\pi$, let $\hat{\rho}^\pi$ denote the occupancy measure of $\pi$ under estimated model $\widehat{T}$. We have that
		\begin{align}
			\label{eqn:true_problem_lb}
			J(\pi^E)-J(\pi) \le C\cdot \mathbb{E}_{s,a\sim\hat{\rho}^\pi}\left[D_\mathrm{TV}\big(T(\cdot|s,a),\widehat{T}(\cdot|s,a)\big)\right]+ 2\left(D_\mathrm{TV}(\hat{\rho}^\pi,\tilde{\rho}^E) + D_\mathrm{TV}(\tilde{\rho}^E,\rho^E)\right),
		\end{align}
		where $C\doteq \frac{2\gamma}{1-\gamma}$, and $\rho^E$ is the occupancy measure of expert policy $\pi^E$ under true dynamics $T$.
	\end{theorem}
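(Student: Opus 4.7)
The plan is to sandwich the return gap between a dynamics-mismatch term and two occupancy-distance terms via successive triangle inequalities in TV, then control the dynamics term by a standard simulation-style lemma. To begin, I would introduce the model-based return $\hat{J}(\pi) \doteq \mathbb{E}_{(s,a)\sim\hat{\rho}^\pi}[R(s,a)]$ and split
\begin{align*}
J(\pi^E) - J(\pi) \;=\; \underbrace{\big(J(\pi^E) - \hat{J}(\pi)\big)}_{(\mathrm{A})} \;+\; \underbrace{\big(\hat{J}(\pi) - J(\pi)\big)}_{(\mathrm{B})}.
\end{align*}
For part (\textrm{A}), I would insert the empirical expert occupancy $\tilde{\rho}^E$ as an intermediate distribution and apply the elementary inequality $|\mathbb{E}_P[f] - \mathbb{E}_Q[f]| \le 2\|f\|_\infty D_\mathrm{TV}(P,Q)$ with $\|R\|_\infty \le 1$, yielding $(\mathrm{A}) \le 2 D_\mathrm{TV}(\rho^E, \tilde{\rho}^E) + 2 D_\mathrm{TV}(\tilde{\rho}^E, \hat{\rho}^\pi)$, which produces the last two terms in \cref{eqn:true_problem_lb}. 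For part (\textrm{B}), the same inequality immediately gives $(\mathrm{B}) \le 2 D_\mathrm{TV}(\hat{\rho}^\pi, \rho^\pi)$, so the task reduces to bounding the TV distance between the occupancy measures induced by the same policy $\pi$ under the true and estimated dynamics by the per-step model error.

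For this reduction, I would prove, by induction on $h$, that the $h$-step state-action distributions satisfy $D_\mathrm{TV}(d^\pi_h, \hat{d}^\pi_h) \le \sum_{k=0}^{h-1} \mathbb{E}_{(s,a)\sim\hat{d}^\pi_k}\!\big[D_\mathrm{TV}\big(T(\cdot|s,a), \widehat{T}(\cdot|s,a)\big)\big]$. The inductive step rewrites $d^\pi_h - \hat{d}^\pi_h = K^\pi_T(d^\pi_{h-1} - \hat{d}^\pi_{h-1}) + (K^\pi_T - K^\pi_{\widehat{T}})\hat{d}^\pi_{h-1}$, where $K^\pi_T$ and $K^\pi_{\widehat{T}}$ are the stochastic kernels over state-action pairs induced by $\pi$, and then uses that stochastic kernels are $L^1$-contractive together with TV subadditivity to peel off one step. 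Summing this bound against the geometric weights $(1-\gamma)\gamma^h$ defining $\hat\rho^\pi$ and $\rho^\pi$ and swapping the order of summation gives
\begin{align*}
D_\mathrm{TV}(\hat{\rho}^\pi, \rho^\pi) \;\le\; \frac{\gamma}{1-\gamma}\, \mathbb{E}_{(s,a)\sim\hat{\rho}^\pi}\!\left[D_\mathrm{TV}\big(T(\cdot|s,a), \widehat{T}(\cdot|s,a)\big)\right],
\end{align*}
so multiplying by $2$ recovers exactly the leading constant $C = 2\gamma/(1-\gamma)$. Combining (\textrm{A}) and (\textrm{B}) yields the claim.

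The main obstacle is this simulation-style lemma for occupancy measures: one has to set up the per-step recursion on the state-action chain carefully and carry the $(1-\gamma)$ normalization through the double sum so that the final expectation is taken against $\hat{\rho}^\pi$ (not $\rho^\pi$) and the constant collapses cleanly to $\gamma/(1-\gamma)$. The rest is routine, consisting of two applications of the triangle inequality in TV together with the boundedness $\|R\|_\infty \le 1$, and no properties of the offline datasets $\mathcal{D}_E$ or $\mathcal{D}_B$ or of the weights $\beta(s,a)$ are needed for this bound; it holds for any policy $\pi$ and is used in the next stage to select $\beta$ via \cref{thm:true_problem_general}.
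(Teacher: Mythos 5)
Your proposal is correct and follows essentially the same route as the paper: the same chain of triangle inequalities reducing the return gap to $2\bigl(D_\mathrm{TV}(\rho^\pi,\hat{\rho}^\pi)+D_\mathrm{TV}(\hat{\rho}^\pi,\tilde{\rho}^E)+D_\mathrm{TV}(\tilde{\rho}^E,\rho^E)\bigr)$ via $|R|\le 1$, combined with the same simulation-style lemma bounding $D_\mathrm{TV}(\rho^\pi,\hat{\rho}^\pi)$ by $\frac{\gamma}{1-\gamma}\,\mathbb{E}_{\hat{\rho}^\pi}\bigl[D_\mathrm{TV}(T,\widehat{T})\bigr]$ through a per-step recursion and a swap of the geometric sums (the paper's Lemmas A.3--A.5). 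The only cosmetic difference is that you run the recursion directly on state-action marginals while the paper first factors out the shared policy and works with state marginals; you also correctly identify the one delicate point, namely that the final expectation lands on $\hat{\rho}^\pi$ rather than $\rho^\pi$.
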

	\textbf{\emph{Remarks.}} \cref{thm:true_problem_lb} indicates that a good policy learned from the estimated model not only follows the expert behaviors but also keeps in the ``safe region'' of the learned model, i.e., visiting the state-actions with less model estimation inaccuracy. {\color{black}Under the \emph{concentration} assumption, the following holds with probability greater than $1-\delta$:}
	\begin{align*}
		J(\pi^E)-J(\pi) \le  \underbrace{\mathbb{E}_{s,a\sim\hat{\rho}^\pi}\Bigg[\frac{CC_\delta}{\sqrt{|\mathcal{D}_E(s,a)|+|\mathcal{D}_B(s,a)|}}\Bigg]}_{\textrm{(a)}}+ 2\underbrace{D_\mathrm{TV}(\hat{\rho}^\pi,\tilde{\rho}^E)}_{\textrm{(b)}} + 2\underbrace{D_\mathrm{TV}(\tilde{\rho}^E,\rho^E)}_{\textrm{(c)}},
	\end{align*}
	where $\mathcal{D}(s,a)\doteq\{(s',a')\in\mathcal{D}:s'=s,a'=a\}$. It aligns well with the aforementioned exploration-exploitation balance: 1) Term (a) captures the exploitation of offline data support; 2) Term (b) captures the exploitation of expert data and the exploration of the model (recall that $\hat{\rho}^\pi$ is the occupancy measure of rolling out $\pi$ with $\widehat{T}$); and 3) Term (c) captures the distributional shift in offline learning.  Importantly, the result in \cref{thm:true_problem_lb} connects the true return of a policy with its occupancy measure on the learned model. This gives us a criteria to evaluate the performance of a policy from offline. Define $c(s,a)\doteq C \cdot D_\mathrm{TV}(T(\cdot|s,a),$ $\widehat{T}(\cdot|s,a))$ and $c^\mathrm{min}\doteq\min_{s,a}c(s,a)$. Subsequently, we derive the policy that minimizes the RHS of \cref{eqn:true_problem_lb}.
	
	\begin{theorem}
		\label{thm:optimal_rho}
		Under the same conditions as in \cref{thm:true_problem_lb},   the optimal occupancy measure  minimizing the upper bound of \cref{eqn:true_problem_lb} is given  as follows:
		\begin{align}
			\label{eqn:optimal_occupancy}
			\hat{\rho}^*(s,a)=
			\begin{cases}
			    \tilde{\rho}^E(s,a)+\Delta_\rho,&\textit{if}~c(s,a) \le c^\mathrm{min},\\
				0,&\textit{if}~c(s,a)> c^\mathrm{min}+2,\\
				\tilde{\rho}^E(s,a),&\textit{otherwise}.
			\end{cases}
		\end{align}
		where $\Delta_\rho\doteq\frac{ \sum_{s',a'}{\1}[c(s',a')-c^\mathrm{min}>2]\cdot\tilde{\rho}^E(s',a')}{|\mathcal{N}_\mathrm{min}|}$ and $\mathcal{N}_\mathrm{min}\doteq\{(s,a)\in\mathcal{D}:c(s,a)\le c^\mathrm{min}\}$.
	\end{theorem}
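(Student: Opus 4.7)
The plan is to cast the minimization of the RHS of \cref{eqn:true_problem_lb} over $\hat{\rho}^\pi$ as a convex program in a probability vector $\hat{\rho}$ on $\mathcal{S}\times\mathcal{A}$. Since the term $D_{\mathrm{TV}}(\tilde{\rho}^E,\rho^E)$ does not depend on $\hat{\rho}^\pi$, and using the identity $2D_{\mathrm{TV}}(\hat{\rho},\tilde{\rho}^E)=\sum_{s,a}|\hat{\rho}(s,a)-\tilde{\rho}^E(s,a)|$, the objective reduces to
\begin{align*}
F(\hat{\rho})\doteq \sum_{s,a} c(s,a)\,\hat{\rho}(s,a) + \sum_{s,a}\bigl|\hat{\rho}(s,a)-\tilde{\rho}^E(s,a)\bigr|,\qquad \hat{\rho}\ge 0,\ \sum_{s,a}\hat{\rho}(s,a)=1.
\end{align*}
Splitting each absolute value into its two linear pieces makes the per-coordinate contribution piecewise linear in $\hat{\rho}(s,a)$ with slope $c(s,a)+1$ above the kink at $\tilde{\rho}^E(s,a)$ and slope $c(s,a)-1$ below it.

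Next, I would apply KKT with a multiplier $\lambda$ for the normalization constraint and $\mu(s,a)\ge 0$ for non-negativity. The natural guess $\lambda=-(c^{\min}+1)$ makes the upper slope vanish exactly at the minimizers of $c(\cdot,\cdot)$, and partitions the state-actions into three regimes: \emph{(i)} when $c(s,a)\le c^{\min}$, the upper slope $c(s,a)+1+\lambda$ is non-positive, so moving mass above $\tilde{\rho}^E(s,a)$ weakly decreases $F$; \emph{(ii)} when $c^{\min}<c(s,a)\le c^{\min}+2$, the upper slope is strictly positive while the lower slope is non-positive, so the optimum sits precisely at the kink $\hat{\rho}(s,a)=\tilde{\rho}^E(s,a)$; and \emph{(iii)} when $c(s,a)>c^{\min}+2$, both slopes are strictly positive, so the optimum is driven to the boundary $\hat{\rho}(s,a)=0$. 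Regime \emph{(iii)} releases exactly $\sum_{s,a}\1[c(s,a)-c^{\min}>2]\,\tilde{\rho}^E(s,a)$ units of mass, which must be redeposited on regime \emph{(i)} to preserve $\sum_{s,a}\hat{\rho}(s,a)=1$.

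Because the objective in regime \emph{(i)} is flat in how the extra mass is apportioned (the effective slope is zero there), any feasible redistribution is optimal; in particular, the uniform split over $\mathcal{N}_{\min}$ yields the stated $\Delta_\rho$ and reproduces \cref{eqn:optimal_occupancy}. Primal feasibility is then immediate: $\hat{\rho}^*\ge 0$ because $\tilde{\rho}^E\ge 0$ and $\Delta_\rho\ge 0$, and total mass balances across regimes \emph{(i)} and \emph{(iii)} by construction.

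The main obstacle is the \emph{non-uniqueness} of the minimizer and the implicit relaxation of the feasible set. The objective $F$ is flat along any mass reshuffling within $\mathcal{N}_{\min}$, so the theorem should be interpreted as exhibiting one canonical minimizer rather than the unique one. One also has to argue that allocating mass to $(s,a)\notin\mathcal{D}$ never strictly improves $F$: for any such pair the slope analysis is identical, and the corner case $c(s,a)=c^{\min}$ is cost-neutral, preserving the same family of minimizers. A final subtlety worth flagging is that the derivation drops the Bellman-flow constraint $\hat{\rho}\in\mathcal{C}_{\widehat{T}}$ and optimizes over arbitrary probability measures, so the closed-form $\hat{\rho}^*$ should be viewed as a target distribution for downstream policy search rather than as an occupancy measure guaranteed to be realizable under $\widehat{T}$.
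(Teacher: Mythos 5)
Your proposal is correct and lands on the same reformulation as the paper --- drop the $\hat{\rho}$-independent term $D_\mathrm{TV}(\tilde{\rho}^E,\rho^E)$, write $2D_\mathrm{TV}(\hat{\rho},\tilde{\rho}^E)$ as an $\ell_1$ penalty, and minimize the resulting piecewise-linear convex function over the probability simplex rather than over $\mathcal{C}_{\widehat{T}}$ --- but the verification step is genuinely different. The paper runs a purely primal argument: substituting $\delta=\rho-\tilde{\rho}^E$, it fixes the sign pattern $\mathcal{N}_1=\{i:\delta_i>0\}$, constructs an explicit candidate $\delta'_{\mathcal{N}_1}$, proves $g(\delta)\ge g(\delta'_{\mathcal{N}_1})$ by a chain of term-by-term inequalities, and then optimizes over the choice of $\mathcal{N}_1$ to show $\mathcal{N}_1^*=\{i:c_i=c^{\min}\}$ wins. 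You instead produce a dual certificate: the multiplier $\lambda=-(c^{\min}+1)$ places the reduced per-coordinate slopes in three regimes (flat above the kink when $c=c^{\min}$, sign change at the kink when $c^{\min}<c\le c^{\min}+2$, strictly positive on both sides when $c>c^{\min}+2$), and convexity makes this KKT point globally optimal. Your route is shorter and makes two things explicit that the paper leaves implicit: the minimizer is non-unique (any reapportionment of the released mass within $\mathcal{N}_{\min}$ is equally good, and the case $c(s,a)=c^{\min}+2$ is degenerate), and the closed form is only a target distribution since the Bellman-flow constraint has been relaxed away, so $\hat{\rho}^*$ need not be realizable under $\widehat{T}$. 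The paper's argument buys elementarity (no appeal to duality or subdifferentials) at the cost of length. To make your version fully rigorous you would only need to state the subdifferential optimality condition at the kink and at the boundary $\hat{\rho}(s,a)=0$ precisely, but the regime analysis as written already covers every coordinate, so there is no substantive gap.
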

	As shown in \cref{thm:optimal_rho}, the ``optimal'' policy leaned on model $\widehat{T}$ conservatively explores the model by avoiding the visit of risky state-actions. Meantime, it cleverly exploits the accurate region, such that it does not deviate large from the expert. Now, we are ready to derive the optimal values of the weight parameters.
	\begin{corollary}
		\label{coro:opt_beta}
		{\color{black}Suppose that when $\tilde{\rho}^D(s,a)=0$, $c(s,a)>c^{\min}$ holds for each $(s,a)\in\mathcal{S}\times\mathcal{A}$.} Under the same condition as in \cref{thm:optimal_rho}, if $\beta(s,a)$ are set as
		\begin{align}
			\label{eqn:weight_comp_theory}
			\beta^*(s,a)=
			\begin{cases}
				\frac{\Delta_\rho}{\tilde{\rho}^D(s,a)},~&\textit{if}~c(s,a) \le  c^\mathrm{min}~\textrm{and}~\tilde{\rho}^D(s,a)>0,\\
				-\frac{\tilde{\rho}^E(s,a)}{\tilde{\rho}^D(s,a)},~&\textit{if}~c(s,a)> c^\mathrm{min}+2~\textit{and}~\tilde{\rho}^D(s,a)>0,\\
				0,~&\textit{otherwise},
			\end{cases}
		\end{align}
		then 
	it follows that
		\begin{align}
			\min_{r\in\mathcal{R}}\max_{\pi\in\Pi}L(\pi,r) = \max_{\pi}\alpha\bar{{H}}(\hat{\rho}^\pi) - Z_\beta D_\psi(\hat{\rho}^\pi,\hat{\rho}^*).
		\end{align}
	\end{corollary}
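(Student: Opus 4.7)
The plan is to obtain the corollary as a direct specialization of \cref{thm:true_problem_general}: I want to show that plugging the prescribed $\beta^\star$ into the interpolation $\frac{\tilde\rho^E+\beta\tilde\rho^D}{Z_\beta}$ that appears on the right-hand side of \cref{eqn:true_problem_general} reproduces exactly the occupancy measure $\hat\rho^\star$ from \cref{thm:optimal_rho}. Since the max is taken over $\hat\rho\in\mathcal{C}_{\widehat T}$, using \cref{lem:equivalence} to rewrite it as a max over policies $\pi$ with $\hat\rho^\pi$ (and \cref{lem:entropy_equivalence} to identify $\bar H(\hat\rho^\pi)=\widehat H(\pi)$ in spirit) then yields the stated identity.

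First I would check admissibility: \cref{thm:true_problem_general} requires $\beta(s,a)\ge-\tilde\rho^E(s,a)/\tilde\rho^D(s,a)$ on $\mathcal{D}$. All three branches of $\beta^\star$ satisfy this (the first is nonnegative since $\Delta_\rho\ge 0$, the second saturates the bound, and the third is $0$), so \cref{eqn:true_problem_general} applies. Second, I would compute $Z_{\beta^\star}=1+\sum_{s,a}\tilde\rho^D(s,a)\beta^\star(s,a)$ by splitting the support according to the three cases in \eqref{eqn:weight_comp_theory}. The $\{c(s,a)\le c^{\min}\}$ branch contributes $|\mathcal{N}_{\min}|\Delta_\rho$, which by the very definition of $\Delta_\rho$ equals $\sum_{s,a}\mathbf{1}[c(s,a)-c^{\min}>2]\tilde\rho^E(s,a)$; the $\{c(s,a)>c^{\min}+2\}$ branch contributes exactly the negative of this quantity; and the remaining branch contributes $0$. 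Hence the telescoping yields $Z_{\beta^\star}=1$.

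Third, I would verify pointwise that $\frac{\tilde\rho^E(s,a)+\beta^\star(s,a)\tilde\rho^D(s,a)}{Z_{\beta^\star}}=\hat\rho^\star(s,a)$. On $\{c\le c^{\min}\}$ the corollary's hypothesis (\emph{viz.}\ $\tilde\rho^D=0\Rightarrow c>c^{\min}$) forces $\tilde\rho^D(s,a)>0$, and direct substitution gives $\tilde\rho^E(s,a)+\Delta_\rho$, matching the first case of \eqref{eqn:optimal_occupancy}. On $\{c>c^{\min}+2\}$, the same substitution cancels $\tilde\rho^E(s,a)$ when $\tilde\rho^D(s,a)>0$, and when $\tilde\rho^D(s,a)=0$ one has $\tilde\rho^E(s,a)=0$ as well (since $\tilde\rho^E\le\frac{D_E+D_B}{D_E}\tilde\rho^D$), so both sides are $0$. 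The intermediate regime $c^{\min}<c\le c^{\min}+2$ is covered by the ``otherwise'' branch in both \eqref{eqn:weight_comp_theory} and \eqref{eqn:optimal_occupancy}, where the ratio equals $\tilde\rho^E(s,a)$ on both sides (again using that $\tilde\rho^D=0$ implies $\tilde\rho^E=0$).

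The only genuinely delicate step is the third one: the assumption that $\tilde\rho^D(s,a)=0$ implies $c(s,a)>c^{\min}$ must be invoked exactly where the definition of $\beta^\star$ says ``otherwise'' (so as not to leave the case $c\le c^{\min}$ with $\tilde\rho^D=0$ undefined), and one must simultaneously exploit the set-inclusion $\mathrm{supp}(\tilde\rho^E)\subseteq\mathrm{supp}(\tilde\rho^D)$ to conclude $\tilde\rho^E=0$ wherever $\tilde\rho^D=0$. Once these are sorted out, substituting $\beta=\beta^\star$ and $Z_{\beta^\star}=1$ into the right-hand side of \eqref{eqn:true_problem_general} and invoking \cref{lem:equivalence,lem:entropy_equivalence} to rewrite $\max_{\hat\rho\in\mathcal{C}_{\widehat T}}$ as $\max_{\pi}$ over induced occupancies yields the claimed identity, completing the proof.
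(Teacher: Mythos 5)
Your proposal is correct and takes essentially the same route as the paper: the paper's (much terser) proof also works by identifying $\beta^*(s,a)\tilde{\rho}^D(s,a)$ with $\delta^*(s,a)=\hat{\rho}^*(s,a)-\tilde{\rho}^E(s,a)$ from the proof of \cref{thm:optimal_rho} (which immediately gives $Z_{\beta^*}=1$ since $\sum_{s,a}\delta^*(s,a)=0$) and then substituting into \cref{thm:true_problem_general}. Your write-up simply makes explicit the admissibility check, the role of the hypothesis $\tilde{\rho}^D(s,a)=0\Rightarrow c(s,a)>c^{\min}$, and the support inclusion $\mathrm{supp}(\tilde{\rho}^E)\subseteq\mathrm{supp}(\tilde{\rho}^D)$, all of which the paper leaves implicit.
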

	\cref{coro:opt_beta} provides the value of $\beta(s,a)$ for each $(s,a)\in\mathcal{D}$ such that the learned reward function can guide the policy to minimize the return gap in \cref{eqn:true_problem_lb}. It indicates that the right exploitation-exploration trade-off can be provably balanced via setting the weight parameters properly. In particular, $\beta^*$ assigns positive weight to the offline state-action with accurate model estimation and negative weight to that with large model error. It enables CLARE to learn a conservative reward function that pessimistically evaluates the our-of-distribution states and actions, capable of ameliorating the extrapolation error in unseen environments. However, the optimal weights require the model error, $c(s,a)$, which is typically hard to obtain (especially in high-dimensional and continuous spaces). \cref{sec:practical_implementation} will solve this problem by extending this result with the aid of the model ensembles and uncertainty quantification techniques.

	\section{Practical implementation}
	\label{sec:practical_implementation}
	
	\begin{algorithm}[t]
		\caption{Conservative model-based reward learning (CLARE)}
		\label{alg:clare}
		\KwIn{expert data $\mathcal{D}_E$, diverse data $\mathcal{D}_B$, bar $u$, learning rate $\eta$, policy regularizer weight $\lambda$}
		Learn dynamics model $\widehat{T}$ represented by an ensemble of neural networks using all offline data\;
		Set weight $\beta(s,a)$ for each offline state-action tuple $(s,a)\in\mathcal{D}_E\cup\mathcal{D}_B$ by \cref{eqn:weight_comp_prac}\;
		Initialize the policy $\pi_\theta$ and reward function $r_\phi$ parameterized by $\theta$ and $\phi$ respectively\;
		\While{not done}{
			(Safe policy improvement) Run a MaxEnt RL algorithm for some steps with model $\widehat{T}$ and current reward function $r_\phi$ to update policy $\pi_\theta$, based on $L(\pi_\theta|r_\phi) - \lambda {\KL}(\pi^b\|\pi_\theta)$\;
			(Conservative reward updating) Update $r_\phi$ by $\phi\leftarrow\phi-\eta\nabla_\phi L(r_\phi|\pi_\theta)$ for a few steps\;
		}
	\end{algorithm}

	\textbf{Learning dynamics models.} Following the state-of-the-art model-based methods \citep{yu2020mopo,yu2021combo}, we model the transition dynamics by an ensemble of neural networks, each of which outputs a Gaussian distribution over next states, i.e., $\{\widehat{T}_i(s'|s,a)=\mathcal{N}(\mu_i(s,a),\Sigma_i(s,a))\}^N_{i=1}$. 

	\textbf{Weights in continuous environments.} The ideas of achieving CLARE in continuous environments are 1) to approximately see the offline data as sampled from a large discrete space, and 2) to use an uncertainty quantification technique for quantifying the model error. Specifically, because state-action pairs are basically different from each other in this setting, we let $\tilde{\rho}^D(s,a)=1/D$ and $\tilde{\rho}^E(s,a)=1/D_E$, and employ the uncertainty estimator, $c(s,a)=\max_{i\in[N]}\|\Sigma_i(s,a)\|_F$, proposed in \citet{yu2020mopo} for model error evaluation. Guided by the analytical results in \cref{coro:opt_beta}, we compute the weights for each $(s,a)\in\mathcal{D}$ via slight relaxation as follows:
	\begin{align}
		\label{eqn:weight_comp_prac}
		\beta(s,a)=
		\begin{cases}
			\frac{N'' D}{N' D_E},&~\textit{if}~c(s,a)\le u, \\
			-\frac{D}{D_E}\cdot{\1}[(s,a)\in\mathcal{D}_E],&~\textit{if}~c(s,a)>u,\\
			0,&~\textit{otherwise},
		\end{cases}
	\end{align}
	where $N'\doteq\sum_{(s,a)\in\mathcal{D}}{\1}[c(s,a)\le u]$ and $N''\doteq\sum_{(s,a)\in\mathcal{D}_E}{\1}[c(s,a)>u]$. Here, coefficient $u$ is a user-chosen hyper-parameter for controlling the conservatism level of CLARE. If one wants the learned policy to be trained more conservatively on offline data support, $u$ should be small; otherwise, $u$ can be chose to be large for better exploration. 
	
	\textbf{Reward and policy regularizers.} In the experiments, we use $\psi(r)=r^2$ as the reward regularizer. Additionally, when updating the policy, we use a KL divergence as a regularizer with empirical behavior policy $\pi^b$ induced by a subset of the offline dataset, $\mathcal{D}'\subset \mathcal{D}$, as follows:
	\begin{align*}
		{\KL}(\pi^b\|\pi) \doteq \mathbb{E}_{s\in\mathcal{D}'}\Big[\mathbb{E}_{a\sim\pi^b(\cdot|s)}\big[\log\pi^b(a|s) \big] - \mathbb{E}_{a\sim\pi^b(\cdot|s)}\left[\log\pi(a|s) \right] \Big], 
	\end{align*}
	where $\pi^b(a|s)=\frac{\sum_{(s',a')\in\mathcal{D}'}{\1}[s'=s,a'=a]}{\sum_{(s',a')\in\mathcal{D}'}{\1}[s'=s]}$ if $(s,a)\in\mathcal{D}'$, and $\pi^b(a|s)=0$ otherwise. It can be implemented by adding $-\mathbb{E}_{s,a\sim\mathcal{D}'}[\log\pi(a|s)]$ to the actor loss. The intuition is to encourage the actor to perform in support of the real data for accelerating safe policy improvement. While this regularization lacks theoretical guarantees, we empirically find that it can indeed speed up the training.
	
	\textbf{Practical algorithm design.} The pseudocode of CLARE is depicted in Algorithm \ref{alg:clare}. The policy improvement phase can be implemented by the standard implementation of SAC \citep{haarnoja18soft} with a change of the additional policy regularizer. We elaborate more details in the Appendix~\ref{sec:experimental_details}. 
	
	\section{Experiments}
	\label{sec:experiment}
	
	Next, we use experimental studies to evaluate CLARE and answer the following key questions: 
	(1) How does CLARE perform on the standard offline RL benchmarks in comparison to  existing state-of-the-art algorithms?
	(2) How does CLARE perform given different dataset sizes?
	(3) How does the ``conservatism level'', $u$, affect the performance?
	(4) How fast does CLARE converge?
	(5) Can the learned reward function effectively explain the expert intention?
	
	\begin{figure}[b]
		\centering
		\vspace{-1.5em}
		\subfigure{\label{subfig:walker}\includegraphics[width=0.245\columnwidth]{./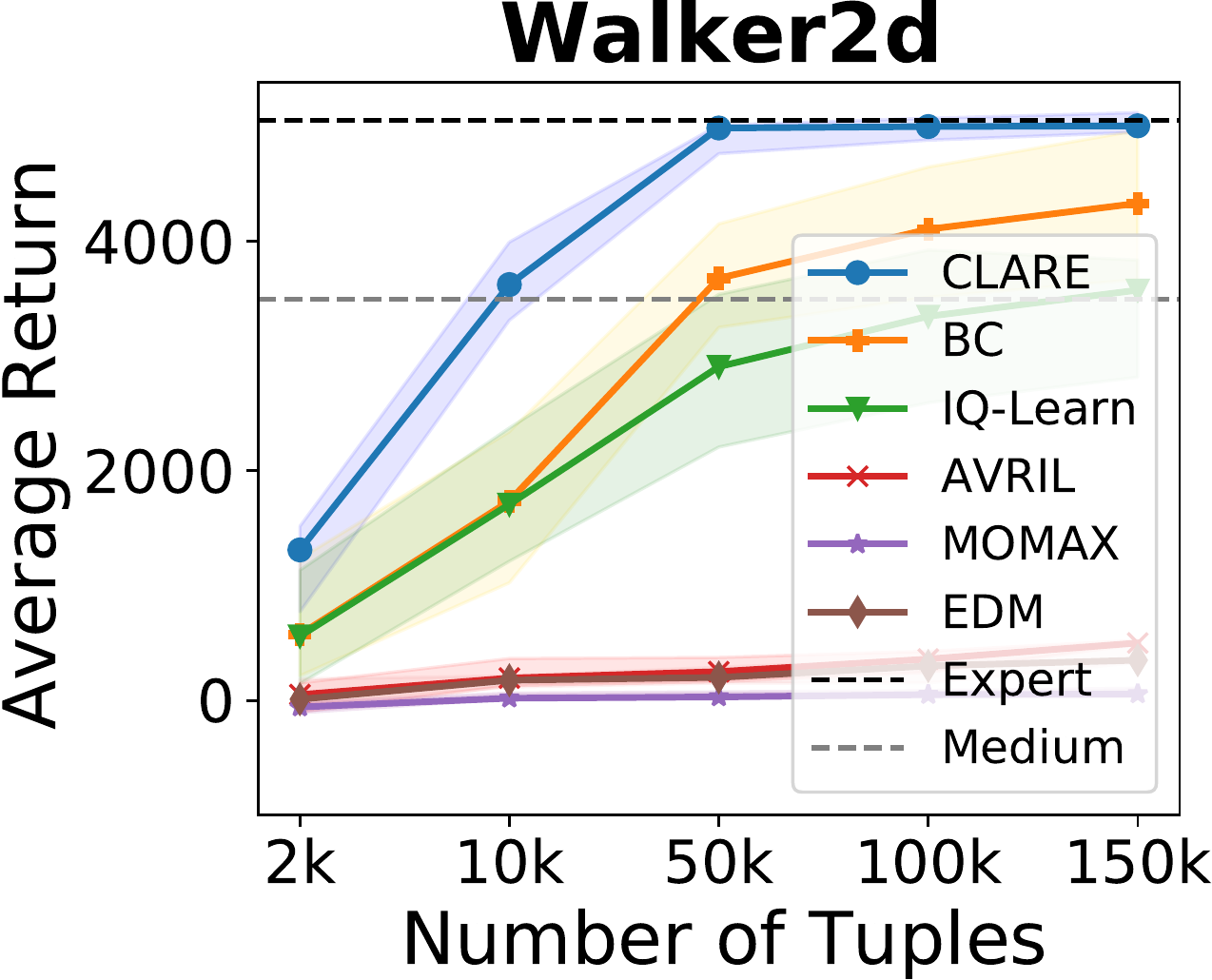}}
		\subfigure{\label{subfig:hopper}\includegraphics[width=0.245\columnwidth]{./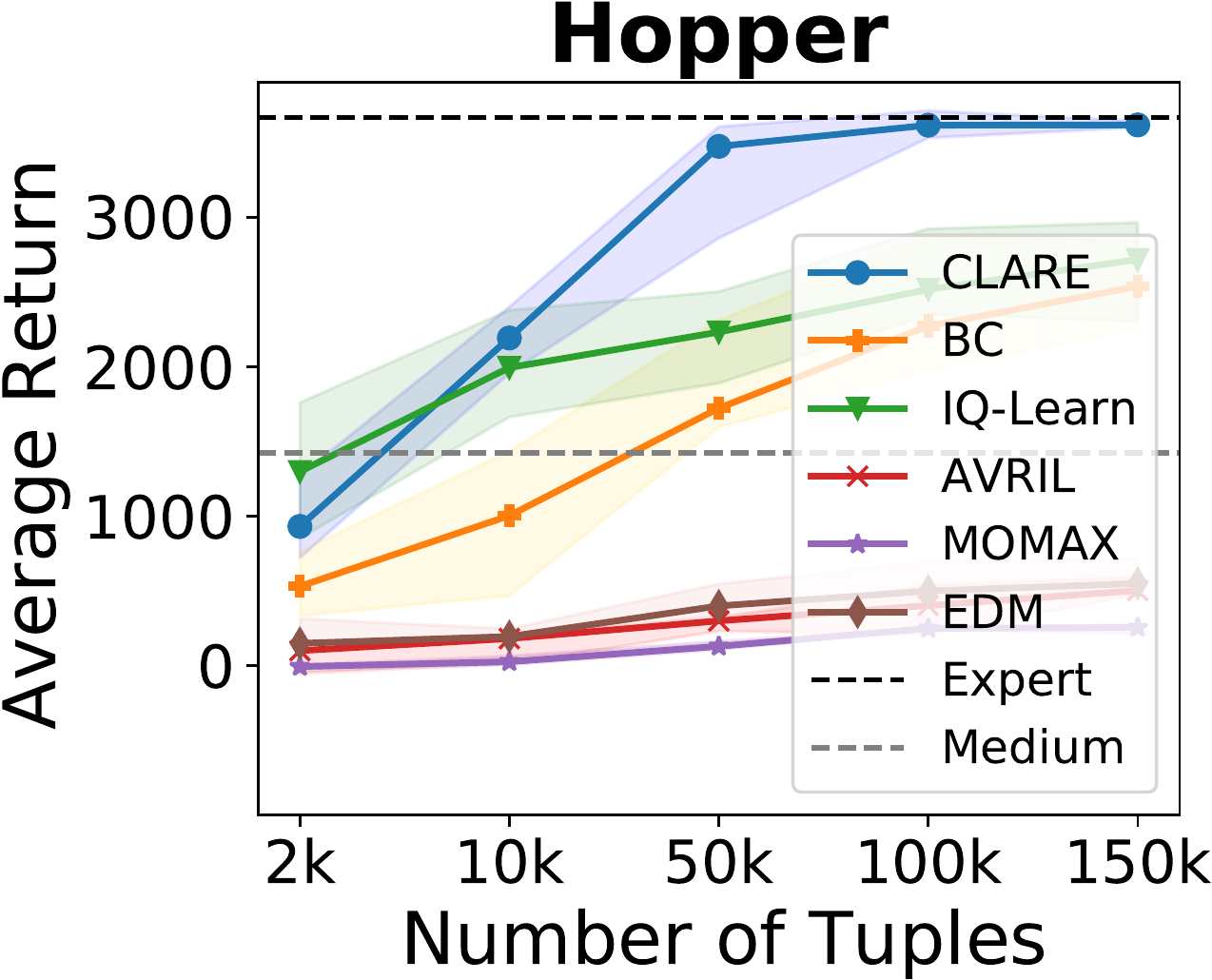}}
		\subfigure{\label{subfig:ant}\includegraphics[width=0.245\columnwidth]{./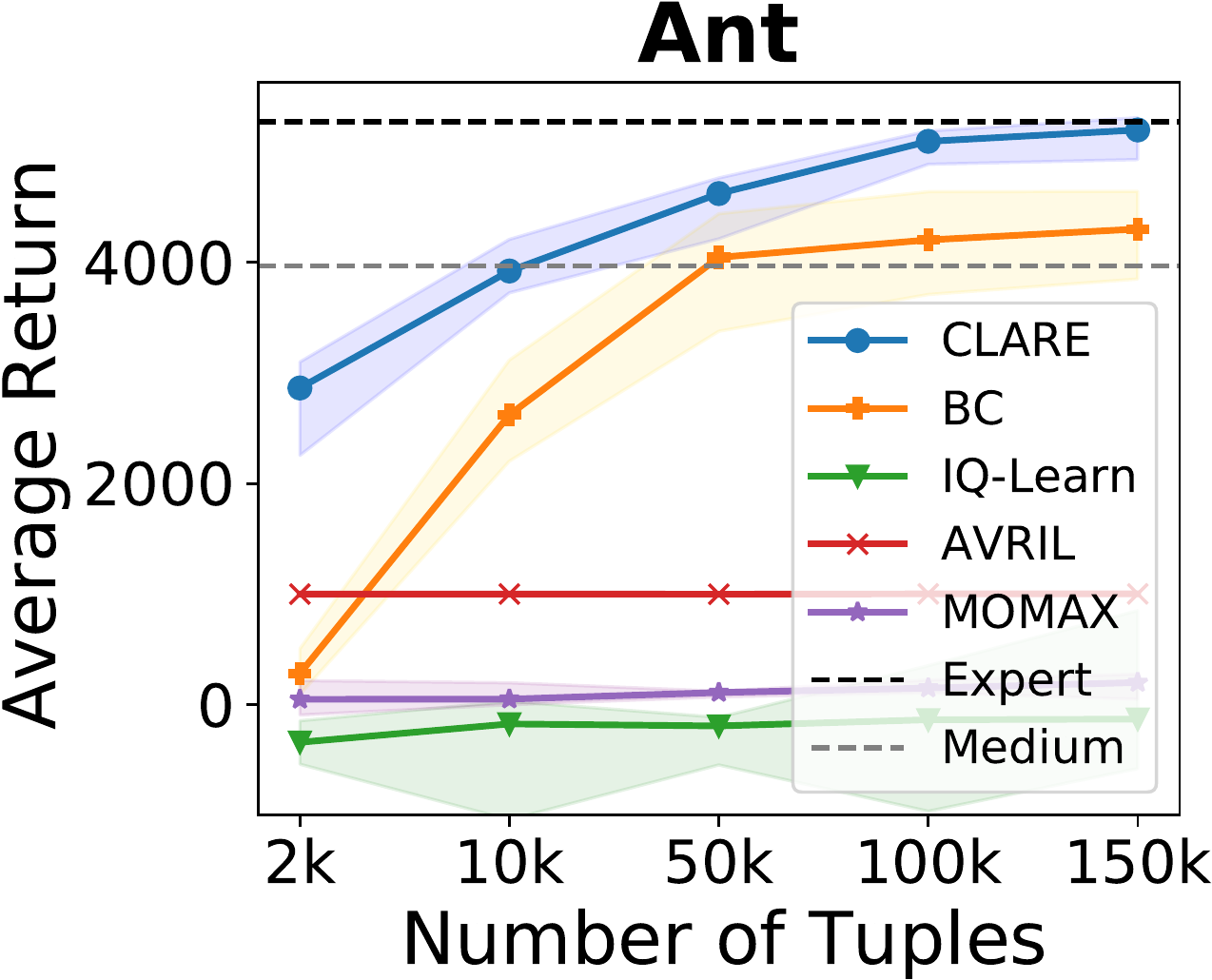}}
		\subfigure{\label{subfig:halfcheetah}\includegraphics[width=0.245\columnwidth]{./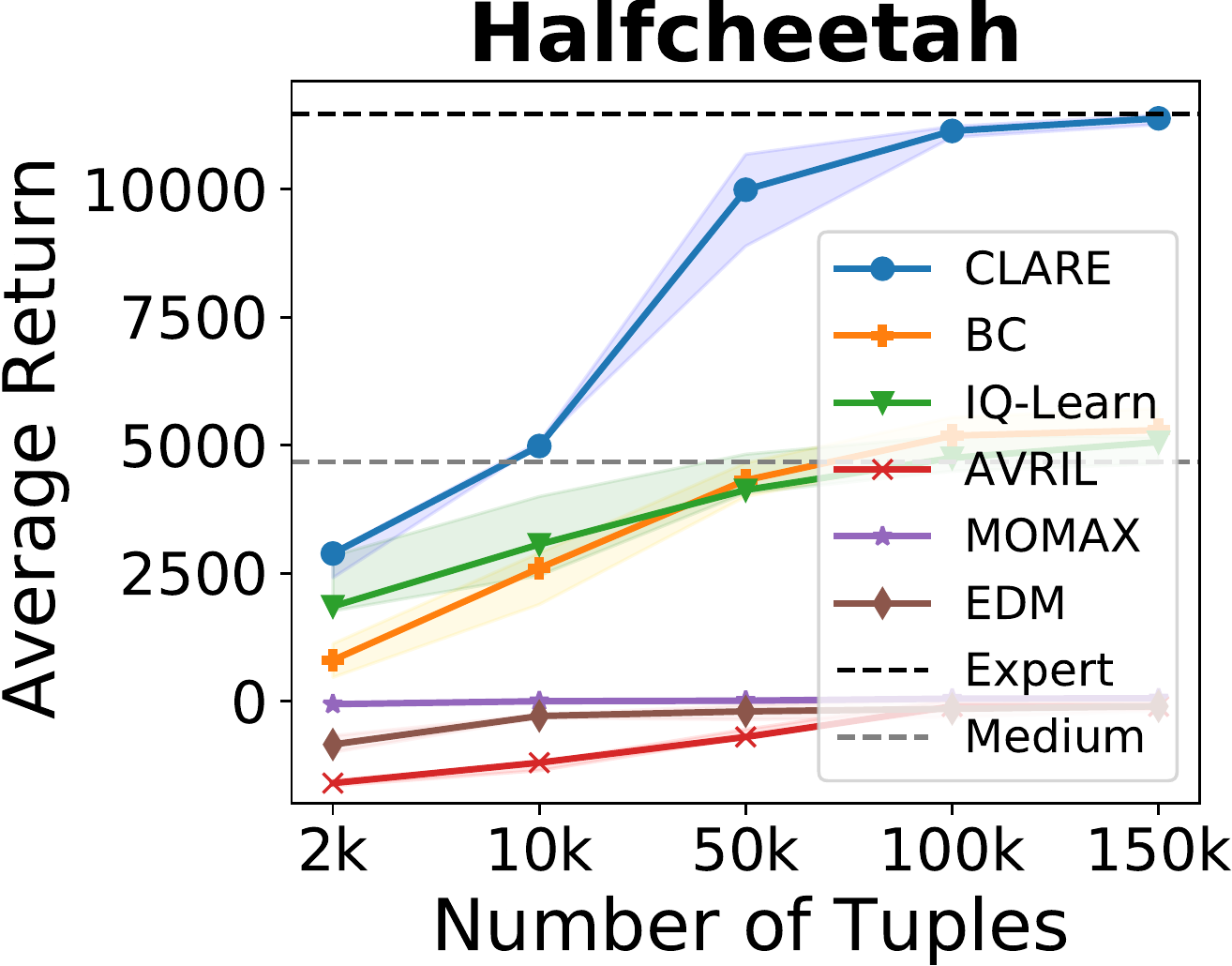}}
		\vspace{-1.75em}
		\caption{CLARE against other algorithms on all tasks over different dataset sizes consisting of expert and medium data equally.}
		\label{fig:impact_datasize}
	\end{figure}
	
	To answer these questions, we compare CLARE with  the following  existing offline IRL methods on the D4RL benchmark \citep{fu2020d4rl}: 1) IQ-LEARN \citep{garg2021iq}, a state-of-the-art model-free offline IRL algorithm; 2) AVRIL \citep{chan2021scalable}, another recent model-free offline IRL method; 3) EDM \citep{jarrett2020strictly}, a state-of-the-art offline IL approach; and 4) Behavior Cloning (BC). To demonstrate the poor performance of the naive approach using a simple combination of IRL with model-based offline forward RL (MORL) method, we also consider a baseline algorithm, namely MOMAX, by directly using COMBO \citep{yu2021combo} in the inner loop of MaxEnt IRL. We present the results on continuous control tasks (including Half-Cheetah, Walker2d, Hopper, and Ant) consisting of three data qualities (random, medium, and expert). Experimental set-up and hyperparameters are  described in detailed in Appendix~\ref{sec:experimental_details}.
	
	\begin{table}[t]
		\centering
		\label{table:performance}
		\caption{\emph{Results on D4RL datasets.} For each task, the experiments are carried out with three different data combinations: 1) 10k expert tuples, 2) 5k expert and 5k medium tuples, and 3) 5k expert and 5k random tuples. The data scores below for 1), 2), and 3) correspond to expert, medium, and random data, respectively. We tune IQ-LEARN, EDM, and AVRIL based on their publicly available source code. Results are averaged  over 7 random seeds. The highest score across all algorithms is bold.}
		\resizebox{\textwidth}{!}{
			\begin{tabular}{crrrrrrrr} 
				\toprule
				Dataset type                                                    & Environment & Data score & CLARE & BC & IQ-LEARN & EDM & AVRIL & MOMAX  \\ 
				\midrule
				\multirow{4}{*}{\emph{Exp. \& Rand.}} & Walker2d                        & 1.9                & \textbf{2873.8}                                                         & 17.8        & 256.9             & 165.5        & 100.9          & -525.4                                  \\
				& Hopper                        & 18.4               & \textbf{1891.5}                                                         & 110.2       & 523.6             & 178.8        & 178.3          & 0.7                                     \\
				& Ant                           & -64.4              & \textbf{1960.0}                                                         & -427.6      & -247.2            & -3000.9      & 1000.1         & 113.8                                   \\
				& Half-Cheetah                   & -505.1             & \textbf{1113.7}                                                         & -86.7       & 123.9             & -346.7       & -1093.5        & -11.0                                   \\ 
				\midrule
				\multirow{4}{*}{\emph{Exp. \& Med.}} & Walker2d                        & 3496.3              & \textbf{3613.4}                                                         & 1674.2      & 1676.8            & 175.7        & 184.0          & 19.6                                    \\
				& Hopper                        & 1422.7              & \textbf{2135.0}                                                         & 947.0       & 2049.8            & 194.4        & 183.7          & 27.6                                    \\
				& Ant                           & 3969.0              & \textbf{3879.4}                                                         & 2146.0      & 222.2             & -3001.5      & 1001.0         & -33.2                                   \\
				& Half-Cheetah                   & 4667.8              & \textbf{4888.6}                                                         & 2375.0      & 2957.7            & -298.3       & -1195.6        & -0.2                                    \\ 
				\midrule
				\multirow{4}{*}{\emph{Exp.}}                                                        & Walker2d                        & 5010.4              & \textbf{4990.5}                                                         & 1665.7      & 2445.4            & 189.7        & 194.1          & 23.2                                    \\
				& Hopper                        & 3603.2              & 2604.5                                                                  & 1436.1      & \textbf{2854.4}   & 192.5        & 183.9          & 34.5                                    \\
				& Ant                           & 5172.8              & \textbf{3940.3}                                                         & 1797.9      & 375.4             & -3000.6      & 1000.2         & 48.1                                    \\
				& Half-Cheetah                   & 10748.7             & \textbf{4975.1}                                                         & 242.4       & 3750.5            & -299.5       & -619.0         & -0.4                                    \\
				\bottomrule
		\end{tabular}}
		\vspace{-1.0em}
	\end{table}

	\textbf{Results on MuJoCo control.} 
	To answer the first question and validate the effectiveness of the learned reward, we evaluate CLARE on different tasks using limited state-action tuples sampled from D4RL datasets. {\color{black}The ranges of standard deviations of the results in \emph{Exp.~\&~Rand.}, \emph{Exp.~\&~Med.} and \emph{Exp.} are 156.4-280.5, 15.7-127.8 and 42.4-89.5, respectively.}  As shown in Table~\ref{table:performance}, CLARE yields the best performance by a significant margin on almost all datasets, especially with low-quality data thereof. It demonstrates that the reward function learned by CLARE can effectively guide offline policy search while exploiting the useful knowledge in the diverse data.
	
	
	
	
	\textbf{Results under different dataset sizes.} 
	To answer the second question, we vary the total numbers of state-action tuples from 2k to 100k and present the results on different tasks in Figure~\ref{fig:impact_datasize}. CLARE reaches expert performance on each task with sufficient data. Albeit with very limited data, CLARE also achieves strong performance over existing algorithms, revealing its great sample efficiency.
	
	\begin{figure}[t]
		\centering
		\subfigure[Impact of $u$.]{\label{subfig:bar}\includegraphics[width=0.245\columnwidth]{./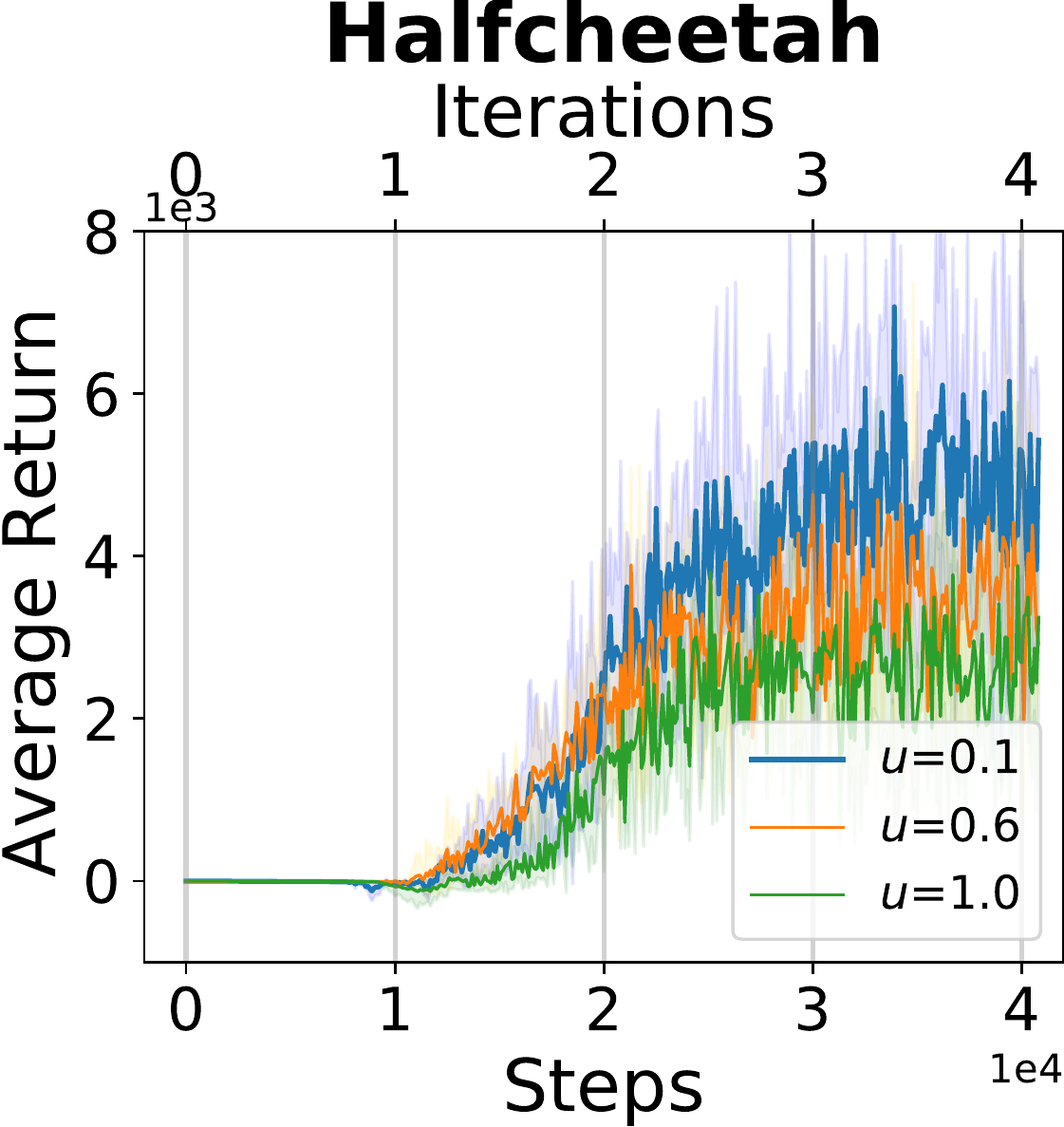}}
		\subfigure[Convergence speed.]{\label{subfig:convergence_walker}\includegraphics[width=0.245\columnwidth]{./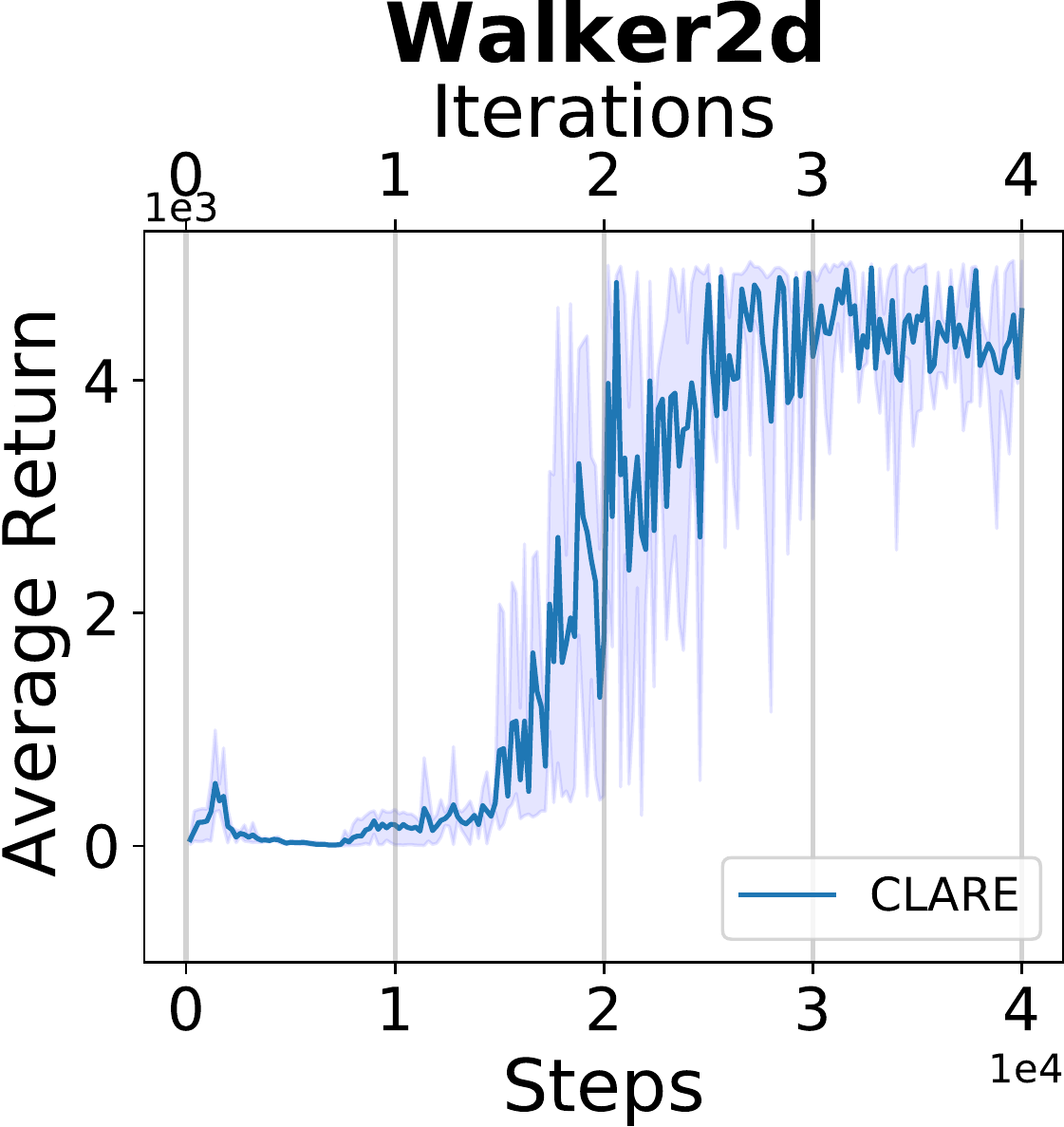}}
		\subfigure[Convergence speed.]{\label{subfig:convergence_ant}\includegraphics[width=0.245\columnwidth]{./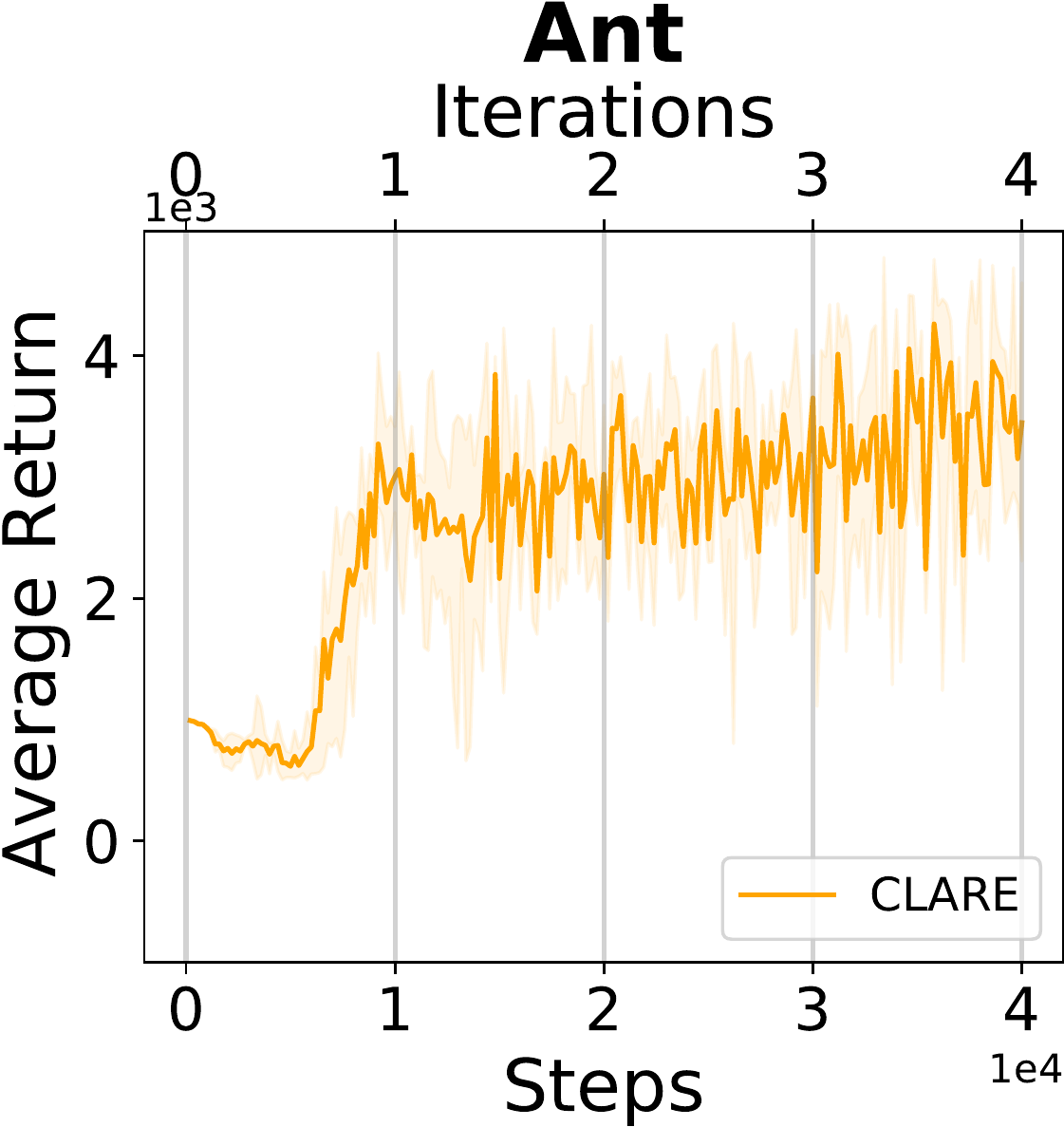}}
		\subfigure[Recovered reward.]{\label{subfig:online}\includegraphics[width=0.245\columnwidth]{./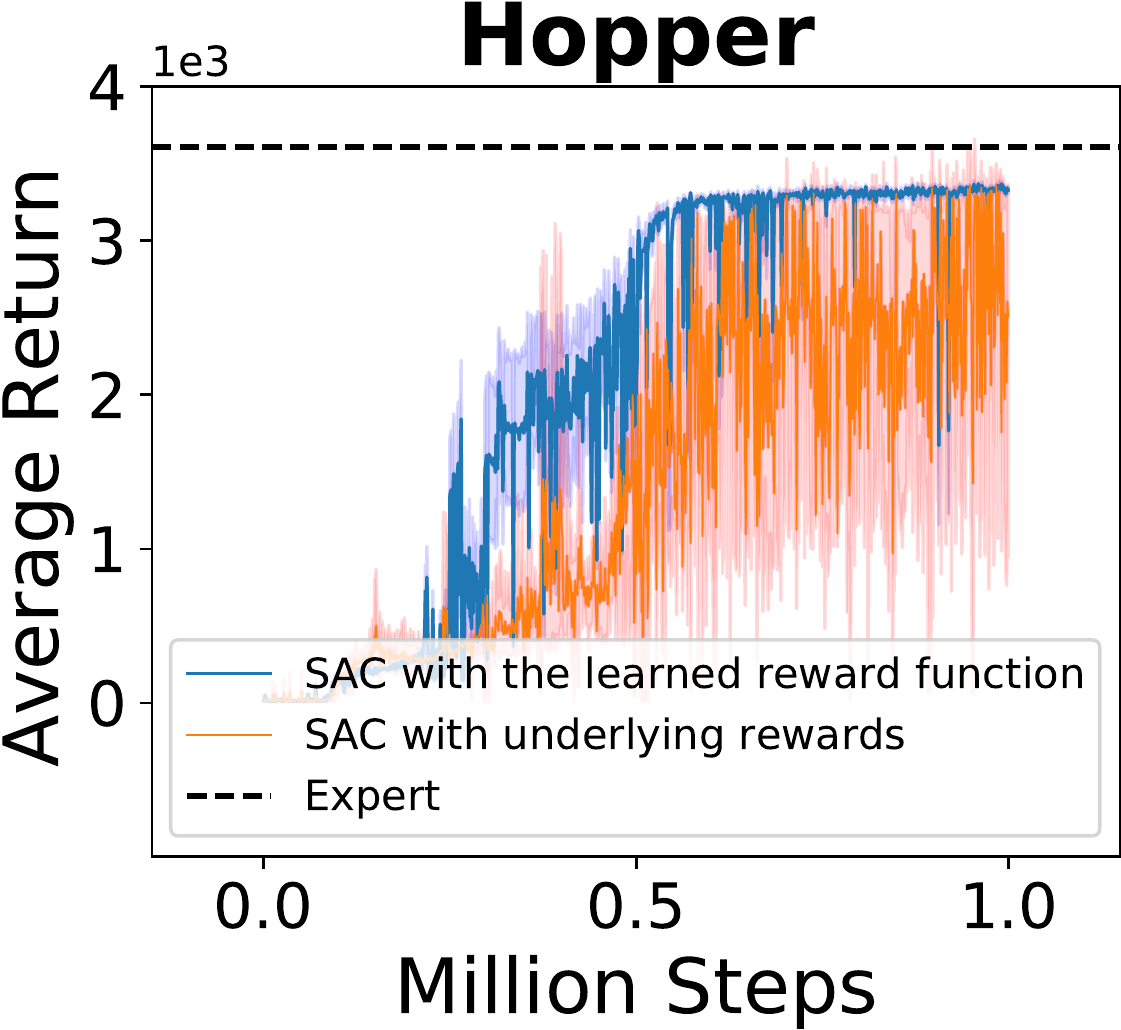}}
		\vspace{-1.0em}
		\caption{\emph{Performance of CLARE.} 1) \emph{Impact of $u$:} Figure~\ref{subfig:bar} shows the impact of user-chosen parameter $u$ on the performance using 10k expert tuples. 2) \emph{Convergence speed:} Figures \ref{subfig:convergence_ant} and \ref{subfig:convergence_walker} show the convergence of CLARE using 10k expert and 10k medium tuples. In each iteration, CLARE carries out policy improvement by total 10k gradient updates (total 500 epochs with 20 gradient steps per epoch) for the actor and critic networks using SAC. 3) \emph{Recovered reward:} Figure~\ref{subfig:online} shows the result of training SAC via replacing the underlying reward by the one learned from CLARE. }
		\vspace{-1.7em}
	\end{figure} 
	
	\textbf{Results under different $\bm{u}$.} 
	To answer the third question, we normalize the uncertainty measure to $[0,1]$ and vary $u$ from 0.1 to 1.0. Due to \cref{eqn:weight_comp_prac}, a smaller $u$ corresponds to a more conservative CLARE. As illustrated in Figure~\ref{subfig:bar}, the performance becomes better with the decrease of $u$ value. It validates the importance of the embedded conservatism in alleviating the extrapolation error. We empirically find that the performance with respect to $u$ varies in different tasks. Thus, we treat it as a hyper-parameter to tune In practice.
	
	\textbf{Convergence speed.} 
	To answer the fourth question, we present the results on the convergence speed of CLARE in Figure~\ref{subfig:convergence_walker}, revealing its great learning efficiency. It showcases that CLARE converges in 5 iterations with totally less than 50k gradient steps. 
	
	\textbf{Recovered reward function.} To answer the last question, we evaluate the learned reward function by transferring it to the real environment. As demonstrated in Figure~\ref{subfig:convergence_ant}, 
	the reward function is highly instructive for online learning. It implies that it can effectively reduce the reward extrapolation error and represent the task preferences well. Surprisingly, compared to the true reward function, the policy trained via the learned one performs more stably. The reason is that the learned one incorporates conservatism and thus is capable of penalizing risks and guide safe  policy search.
	
	
	\section{Related work}
	\label{sec:related_work}
	
	
	\textbf{Offline IRL.} To side-step the expensive online environmental interactions in classic IRL, offline IRL aims to infer a reward function and recover the expert policy only from a static dataset with no access to the environment. \citet{klein2011batch} extend the classic apprenticeship learning (i.e., \citet{abbeel2004apprenticeship}) to batch and off-policy cases by introducing a temporal difference method, namely LSTD-$\mu$, to compute the feature expectations therein. {\color{black}\citet{klein2012inverse} further introduce a linearly parameterized score function-based multi-class classification algorithm to output reward function based on an estimate of expert feature expectation.} \citet{herman2016inverse} present a gradient-based solution that simultaneously estimates the feature weights and parameters of the transition model by taking into account the bias of the demonstrations. \citet{lee2019truly} propose Deep Successor Feature Networks (DSFN) that estimates feature expectations in an off-policy setting. 
	{\color{black}However, the assumption of full knowledge of the reward feature functions in \citet{klein2011batch,herman2016inverse,lee2019truly,jain2019model,pirotta2016inverse,ramponi2020truly} is often unrealistic}, because the choice of features is problem-dependent and can become a very hard task for complex problems \citep{arora2021survey,piot2014boosted}. To address this problem, \citet{piot2014boosted} propose a non-parametric algorithm, called RCAL, using boosting method to minimize directly the criterion without the step of choosing features. \citet{konyushkova2020semi} propose two semi-supervised learning algorithms that learn a reward function from limited human reward annotations. \citet{zolna2020offline} further propose ORIL that can learn from both expert demonstrations and a large unlabeled set of experiences without human annotations. \citet{chan2021scalable} use a variational method to jointly learn an approximate posterior distribution over the reward and policy. \citet{garg2021iq} propose an off-policy IRL approach, namely IQ-Learn, implicitly representing both reward and policy via a learned soft Q-function. Nevertheless, these methods primarily concentrate on offline policy learning with learning reward function being an intermediate step. Due to the intrinsic covariate shift, these methods may suffer from severe reward extrapolation error, leading to misguidance in unseen environments and low learning efficiency. 
	
	
	\textbf{Offline IL.} Akin to offline IRL, offline imitation learning (offline IL) deals with training an agent to directly mimic the actions of a demonstrator in an entirely offline fashion. Behavioral cloning (BC \citep{ross2010efficient}) is indeed an intrinsically offline solution, but it fails to exploit precious dynamics information. To tackle this issue, several recent works propose dynamics-aware offline IL approaches, e.g., \citet{kostrikov2019imitation,jarrett2020strictly,chang2021mitigating,swamy2021moments}. In contrast to directly mimicking the expert as done in offline IL,  offline IRL explicitly learns the expert’s reward function from offline datasets, which can take into account the temporal structure and inform what the expert wishes to achieve, rather than simply what they are reacting to. It enables agents to understand and generalize these ``intentions'' when encountering similar environments and therefore makes offline IRL more robust \citep{lee2019truly}. In addition, the learned reward function can succinctly explain the expert's objective, which is also useful in a number of broader applications (e.g., task description \citet{ng2000algorithms} and transfer learning \citet{herman2016inverse}).
	
	\section{Conclusion}
	\label{sec:conclusion}
	
	This paper introduces a new offline IRL algorithm (namely CLARE) to approaching the reward extrapolation error (caused by covariate shift) via incorporating conservatism into a learned reward function and utilizing an estimated dynamics model. Our theoretical analysis characterizes the impact of covariate shift by quantifying a subtle two-tier exploitation-exploration tradeoffs, and we show that CLARE can provably alleviate the reward extrapolation error by striking the right tradeoffs therein. Extensive experiments corroborate that CLARE outperforms existing methods in continuous, high-dimensional environments by a significant margin, and the learned reward function represents the task preferences well.

\subsubsection*{Acknowledgments}
This research was supported in part by the National Natural Science Foundation of China under Grant No. 62122095, 62072472, and U19A2067, by NSF Grants CNS-2203239, CNS-2203412, and RINGS-2148253, and by a grant from the Guoqiang Institute, Tsinghua University.

	\bibliography{iclr2023_conference}

\begin{thebibliography}{38}
\providecommand{\natexlab}[1]{#1}
\providecommand{\url}[1]{\texttt{#1}}
\expandafter\ifx\csname urlstyle\endcsname\relax
  \providecommand{\doi}[1]{doi: #1}\else
  \providecommand{\doi}{doi: \begingroup \urlstyle{rm}\Url}\fi

\bibitem[Abbeel \& Ng(2004)Abbeel and Ng]{abbeel2004apprenticeship}
Pieter Abbeel and Andrew~Y Ng.
\newblock Apprenticeship learning via inverse reinforcement learning.
\newblock In \emph{Proc. ICML}, pp.\ ~1, 2004.

\bibitem[Arora \& Doshi(2021)Arora and Doshi]{arora2021survey}
Saurabh Arora and Prashant Doshi.
\newblock A survey of inverse reinforcement learning: Challenges, methods and
  progress.
\newblock \emph{Artif. Intell.}, 297:\penalty0 103500, 2021.

\bibitem[Boularias et~al.(2011)Boularias, Kober, and
  Peters]{boularias2011relative}
Abdeslam Boularias, Jens Kober, and Jan Peters.
\newblock Relative entropy inverse reinforcement learning.
\newblock In \emph{Proc. AISTATS}, pp.\  182--189, 2011.

\bibitem[Chan \& van~der Schaar(2021)Chan and van~der Schaar]{chan2021scalable}
Alex~J Chan and M~van~der Schaar.
\newblock Scalable bayesian inverse reinforcement learning.
\newblock In \emph{International Conference on Learning Representations}, 2021.

\bibitem[Chang et~al.(2021)Chang, Uehara, Sreenivas, Kidambi, and
  Sun]{chang2021mitigating}
Jonathan Chang, Masatoshi Uehara, Dhruv Sreenivas, Rahul Kidambi, and Wen Sun.
\newblock Mitigating covariate shift in imitation learning via offline data
  with partial coverage.
\newblock \emph{Proc. NeurIPS}, 34:\penalty0 965--979, 2021.

\bibitem[Du \& Pardalos(2013)Du and Pardalos]{du2013minimax}
Ding-Zhu Du and Panos~M Pardalos.
\newblock \emph{Minimax and applications}, volume~4.
\newblock Springer Sci. Bus. Media, 2013.

\bibitem[Fu et~al.(2018)Fu, Luo, and Levine]{fu2018learning}
Justin Fu, Katie Luo, and Sergey Levine.
\newblock Learning robust rewards with adverserial inverse reinforcement
  learning.
\newblock In \emph{Proc. ICLR}, 2018.

\bibitem[Fu et~al.(2020)Fu, Kumar, Nachum, Tucker, and Levine]{fu2020d4rl}
Justin Fu, Aviral Kumar, Ofir Nachum, George Tucker, and Sergey Levine.
\newblock D4rl: Datasets for deep data-driven reinforcement learning.
\newblock \emph{arXiv preprint arXiv:2004.07219}, 2020.

\bibitem[Garg et~al.(2021)Garg, Chakraborty, Cundy, Song, and
  Ermon]{garg2021iq}
Divyansh Garg, Shuvam Chakraborty, Chris Cundy, Jiaming Song, and Stefano
  Ermon.
\newblock Iq-learn: Inverse soft-q learning for imitation.
\newblock \emph{Proc. NeurIPS}, 2021.

\bibitem[Haarnoja et~al.(2018)Haarnoja, Zhou, Abbeel, and
  Levine]{haarnoja18soft}
Tuomas Haarnoja, Aurick Zhou, Pieter Abbeel, and Sergey Levine.
\newblock Soft actor-critic: Off-policy maximum entropy deep reinforcement
  learning with a stochastic actor.
\newblock In \emph{Proc. ICML}, volume~80, pp.\  1861--1870. PMLR, 2018.

\bibitem[Herman et~al.(2016)Herman, Gindele, Wagner, Schmitt, and
  Burgard]{herman2016inverse}
Michael Herman, Tobias Gindele, J{\"o}rg Wagner, Felix Schmitt, and Wolfram
  Burgard.
\newblock Inverse reinforcement learning with simultaneous estimation of
  rewards and dynamics.
\newblock In \emph{Proc. AISTATS}, pp.\  102--110. PMLR, 2016.

\bibitem[Ho \& Ermon(2016)Ho and Ermon]{ho2016generative}
Jonathan Ho and Stefano Ermon.
\newblock Generative adversarial imitation learning.
\newblock In \emph{Proc. NeurIPS}, volume~29, pp.\  4565--4573. Curran
  Associates, Inc., 2016.

\bibitem[Jain et~al.(2019)Jain, Doshi, and Banerjee]{jain2019model}
Vinamra Jain, Prashant Doshi, and Bikramjit Banerjee.
\newblock Model-free irl using maximum likelihood estimation.
\newblock In \emph{Proc. AAAI}, pp.\  3951--3958, 2019.

\bibitem[Jarrett et~al.(2020)Jarrett, Bica, and van~der
  Schaar]{jarrett2020strictly}
Daniel Jarrett, Ioana Bica, and Mihaela van~der Schaar.
\newblock Strictly batch imitation learning by energy-based distribution
  matching.
\newblock \emph{Proc. NeurIPS}, 2020.

\bibitem[Klein et~al.(2011)Klein, Geist, and Pietquin]{klein2011batch}
Edouard Klein, Matthieu Geist, and Olivier Pietquin.
\newblock Batch, off-policy and model-free apprenticeship learning.
\newblock In \emph{Proc. EWRL}, pp.\  285--296. Springer, 2011.

\bibitem[Klein et~al.(2012)Klein, Geist, Piot, and Pietquin]{klein2012inverse}
Edouard Klein, Matthieu Geist, Bilal Piot, and Olivier Pietquin.
\newblock Inverse reinforcement learning through structured classification.
\newblock \emph{Proc. NeurIPS}, 25, 2012.

\bibitem[Konyushkova et~al.(2020)Konyushkova, Zolna, Aytar, Novikov, Reed,
  Cabi, and de~Freitas]{konyushkova2020semi}
Ksenia Konyushkova, Konrad Zolna, Yusuf Aytar, Alexander Novikov, Scott Reed,
  Serkan Cabi, and Nando de~Freitas.
\newblock Semi-supervised reward learning for offline reinforcement learning.
\newblock In \emph{Proc. NeurIPS Workshop}, 2020.

\bibitem[Kostrikov et~al.(2019)Kostrikov, Nachum, and
  Tompson]{kostrikov2019imitation}
Ilya Kostrikov, Ofir Nachum, and Jonathan Tompson.
\newblock Imitation learning via off-policy distribution matching.
\newblock In \emph{Proc. ICLR}, 2019.

\bibitem[Kumar et~al.(2020)Kumar, Zhou, Tucker, and
  Levine]{kumar2020conservative}
Aviral Kumar, Aurick Zhou, George Tucker, and Sergey Levine.
\newblock Conservative q-learning for offline reinforcement learning.
\newblock \emph{Proc. NeurIPS}, pp.\  1179--1191, 2020.

\bibitem[Lee et~al.(2019)Lee, Srinivasan, and Doshi-Velez]{lee2019truly}
Donghun Lee, Srivatsan Srinivasan, and Finale Doshi-Velez.
\newblock Truly batch apprenticeship learning with deep successor features.
\newblock In \emph{Proc. IJCAI}, 2019.

\bibitem[Lin et~al.(2021)Lin, Wan, Xu, Liang, and Zhang]{lin2021model}
Sen Lin, Jialin Wan, Tengyu Xu, Yingbin Liang, and Junshan Zhang.
\newblock Model-based offline meta-reinforcement learning with regularization.
\newblock In \emph{Proc. ICLR}, 2021.

\bibitem[Ng et~al.(2000)Ng, Russell, et~al.]{ng2000algorithms}
Andrew~Y Ng, Stuart~J Russell, et~al.
\newblock Algorithms for inverse reinforcement learning.
\newblock In \emph{Proc. ICML}, volume~1, pp.\ ~2, 2000.

\bibitem[Osa et~al.(2018)Osa, Pajarinen, Neumann, Bagnell, Abbeel, and
  Peters]{osa2018algorithmic}
Takayuki Osa, Joni Pajarinen, Gerhard Neumann, J~Andrew Bagnell, Pieter Abbeel,
  and Jan Peters.
\newblock An algorithmic perspective on imitation learning.
\newblock \emph{Found. Trends Rob.}, 2018.

\bibitem[Piot et~al.(2014)Piot, Geist, and Pietquin]{piot2014boosted}
Bilal Piot, Matthieu Geist, and Olivier Pietquin.
\newblock Boosted and reward-regularized classification for apprenticeship
  learning.
\newblock In \emph{Proc. AAMAS}, pp.\  1249--1256, 2014.

\bibitem[Pirotta \& Restelli(2016)Pirotta and Restelli]{pirotta2016inverse}
Matteo Pirotta and Marcello Restelli.
\newblock Inverse reinforcement learning through policy gradient minimization.
\newblock In \emph{Thirtieth AAAI Conference on Artificial Intelligence}, 2016.

\bibitem[Qureshi et~al.(2018)Qureshi, Boots, and Yip]{qureshi2018adversarial}
Ahmed~H Qureshi, Byron Boots, and Michael~C Yip.
\newblock Adversarial imitation via variational inverse reinforcement learning.
\newblock In \emph{Proc. ICLR}, 2018.

\bibitem[Ramponi et~al.(2020)Ramponi, Likmeta, Metelli, Tirinzoni, and
  Restelli]{ramponi2020truly}
Giorgia Ramponi, Amarildo Likmeta, Alberto~Maria Metelli, Andrea Tirinzoni, and
  Marcello Restelli.
\newblock Truly batch model-free inverse reinforcement learning about multiple
  intentions.
\newblock In \emph{International Conference on Artificial Intelligence and
  Statistics}, pp.\  2359--2369. PMLR, 2020.

\bibitem[Ratliff et~al.(2006)Ratliff, Bagnell, and
  Zinkevich]{ratliff2006maximum}
Nathan~D Ratliff, J~Andrew Bagnell, and Martin~A Zinkevich.
\newblock Maximum margin planning.
\newblock In \emph{Proc. ICML}, pp.\  729--736, 2006.

\bibitem[Ross \& Bagnell(2010)Ross and Bagnell]{ross2010efficient}
St{\'e}phane Ross and Drew Bagnell.
\newblock Efficient reductions for imitation learning.
\newblock In \emph{Proc. AISTATS}, pp.\  661--668. JMLR Workshop and Conference
  Proceedings, 2010.

\bibitem[Russell(1998)]{russell1998learning}
Stuart Russell.
\newblock Learning agents for uncertain environments.
\newblock In \emph{Proc. COLT}, pp.\  101--103, 1998.

\bibitem[Swamy et~al.(2021)Swamy, Choudhury, Bagnell, and Wu]{swamy2021moments}
Gokul Swamy, Sanjiban Choudhury, J~Andrew Bagnell, and Steven Wu.
\newblock Of moments and matching: A game-theoretic framework for closing the
  imitation gap.
\newblock In \emph{Proc. ICML}, pp.\  10022--10032. PMLR, 2021.

\bibitem[Syed \& Schapire(2007)Syed and Schapire]{syed2007game}
Umar Syed and Robert~E Schapire.
\newblock A game-theoretic approach to apprenticeship learning.
\newblock \emph{Proc. NeurIPS}, 2007.

\bibitem[Syed et~al.(2008)Syed, Bowling, and Schapire]{syed2008apprenticeship}
Umar Syed, Michael Bowling, and Robert~E Schapire.
\newblock Apprenticeship learning using linear programming.
\newblock In \emph{Proc. ICML}, pp.\  1032--1039. ACM, 2008.

\bibitem[Tanwani \& Billard(2013)Tanwani and Billard]{tanwani2013inverse}
Ajay~Kumar Tanwani and Aude Billard.
\newblock Inverse reinforcement learning for compliant manipulation in letter
  handwriting.
\newblock \emph{National Center of Competence in Robotics (NCCR)}, 2013.

\bibitem[Yu et~al.(2020)Yu, Thomas, Yu, Ermon, Zou, Levine, Finn, and
  Ma]{yu2020mopo}
Tianhe Yu, Garrett Thomas, Lantao Yu, Stefano Ermon, James~Y Zou, Sergey
  Levine, Chelsea Finn, and Tengyu Ma.
\newblock Mopo: Model-based offline policy optimization.
\newblock In \emph{Proc. NeurIPS}, volume~33, pp.\  14129--14142. Curran
  Associates, Inc., 2020.

\bibitem[Yu et~al.(2021)Yu, Kumar, Rafailov, Rajeswaran, Levine, and
  Finn]{yu2021combo}
Tianhe Yu, Aviral Kumar, Rafael Rafailov, Aravind Rajeswaran, Sergey Levine,
  and Chelsea Finn.
\newblock Combo: Conservative offline model-based policy optimization.
\newblock \emph{Proc. NeurIPS}, 2021.

\bibitem[Ziebart et~al.(2008)Ziebart, Maas, Bagnell, Dey,
  et~al.]{ziebart2008maximum}
Brian~D Ziebart, Andrew~L Maas, J~Andrew Bagnell, Anind~K Dey, et~al.
\newblock Maximum entropy inverse reinforcement learning.
\newblock In \emph{Proc. AAAI}, volume~8. AAAI Press, 2008.

\bibitem[Zolna et~al.(2020)Zolna, Novikov, Konyushkova, Gulcehre, Wang, Aytar,
  Denil, de~Freitas, and Reed]{zolna2020offline}
Konrad Zolna, Alexander Novikov, Ksenia Konyushkova, Caglar Gulcehre, Ziyu
  Wang, Yusuf Aytar, Misha Denil, Nando de~Freitas, and Scott Reed.
\newblock Offline learning from demonstrations and unlabeled experience.
\newblock In \emph{Proc. NeurIPS Workshop}, 2020.

\end{thebibliography}
	\bibliographystyle{iclr2023_conference}

	\clearpage
	\appendix
	\section{Experimental details}
	\label{sec:experimental_details}
	
	In this section, we present necessary experimental details for reproducibility. 
	
	\subsection{Practical implementation details}
	\label{sec:implementation_details}
	
	In the experiment, our implementation is built upon the open source framework of offline RL algorithms, provided at: \url{https://github.com/polixir/OfflineRL}, including data sampling, policy testing, dynamics model structure, etc. The implementation of SAC in the policy improvement uses the open source code available at: \url{https://github.com/pranz24/pytorch-soft-actor-critic} (under the MIT License). Additionally, the expert and diverse state-action pairs are sampled at random from the D4RL dataset provided at: \url{https://github.com/rail-berkeley/d4rl} (under the Apache License 2.0).
	
	\textbf{Model learning.} Following the same line as in \citet{yu2020mopo,yu2021combo}, we model the transition dynamics by an ensemble of probabilistic neural networks, each of which takes the current state and action as input and outputs a Gaussian distribution over next states, i.e., $\{\widehat{T}_i(s'|s,a)=\mathcal{N}(\mu_i(s,a),\Sigma_i(s,a))\}^N_{i=1}$. Using offline state-action pairs, 7 models are trained independently via maximum likelihood, each of which is represented as by a 4-layer feedforward neural network with 256 hidden units. The best 5 models are picked based on the validation prediction error on a held-out set. During model rollouts, one model will be selected randomly from the ensemble. 
	
	\textbf{Policy improvement.} We represent both critic and actor as a 2-layer feedforward neural network with 256 hidden units and Swish activation functions. In each iteration, we update the critic and actor networks using SAC \citep{haarnoja18soft} for 500 epochs (each has 20 gradient updates). As described in \cref{sec:practical_implementation}, we use a KL divergence with the behavior policy to accelerate inner-loop policy search. It is implemented by adding $-\mathbb{E}_{s,a\sim\mathcal{D}'}[\log\pi(a|s)]$ ($\mathcal{D}'\in\mathcal{D}$) to the actor loss. An instantiation of the policy improvement can be found in Algorithm~\ref{alg:innerloop_rl}. 
	\begin{algorithm}[ht]
		\caption{Safe policy improvement}
		\label{alg:innerloop_rl}
		\KwIn{offline dataset $\mathcal{D}$,  policy regularizer weight $\lambda$, rollout horizon $H$, rollout batchsize $B$, the number of epochs $E$, dynamics ensemble $\{\widehat{T}_i\}^N_{i=1}$, reward function $r_\phi$, policy $\pi_\theta$}
		Initialize model buffer $\mathcal{D}_\mathrm{model}\leftarrow\varnothing$\;
		\For{$\textit{epoch}=1$ \KwTo $E$}{
			\For{$b=1$ \KwTo $B$ \rm{\bf in parallel}}{
				Sample state $s_1$ from $\mathcal{D}$ as the initial state of the rollout\;
				\For{$h=1$ \KwTo $H$}{
					Sample action $a_h\sim\pi_\theta(\cdot|s_h)$\;
					Randomly pick dynamics $\widehat{T}$ from $\{\widehat{T}_i\}^N_{i=1}$ and sample $s_{h+1}\sim \widehat{T}(\cdot|s_h,a_h)$\;
					Compute $r_h\leftarrow r_\phi(s_h,a_h)$\;
					Add sample $(s_h,a_h,r_h,s_{h+1})$ to $\mathcal{D}_\mathrm{model}$\;
				}
			}
			Sample batches from $\mathcal{D}_\mathrm{model}$ and use SAC to update policy $\pi_\theta$ with $-\mathbb{E}_{s,a\sim\mathcal{D}'}[\log\pi_\theta(a|s)]$ ($\mathcal{D}'\in\mathcal{D}$) added on the policy loss\;
		}
	\end{algorithm} 
	
	\textbf{Reward updating.} We represent the reward function as a 4-layer feedforward neural network with 256 hidden units and Swish activate functions. In each iteration, the reward function is updated by 5 gradient steps with stepsize $5\times10^{-5}$, based on the following practical reward loss:
	\begin{align}
		\label{eqn:practical_reward_loss}
		L(r_\phi)\doteq\;& Z_{\beta} \mathbb{E}_{\mathcal{D}_\mathrm{replay}}\big[r_\phi(s,a)\big] + Z_\beta \mathbb{E}_{s,a\sim\mathcal{D}\cup\mathcal{D}_\mathrm{replay}}\big[r_\phi(s,a)^2\big]\nonumber\\
		&- \mathbb{E}_{s,a\sim\mathcal{D}_E}\big[r_\phi(s,a)\big] - \mathbb{E}_{s,a\sim\mathcal{D}}\big[{\beta}(s,a)r_\phi(s,a)\big].
	\end{align}
	We use replay buffer $\mathcal{D}_\mathrm{replay}$ across iterations to save the simulated data for training stability. An instantiation of reward updating is shown in Algorithm~\ref{alg:reward_updating}.
	
	\begin{algorithm}[ht]
		\caption{Conservative reward updating}
		\label{alg:reward_updating}
		\KwIn{expert data $\mathcal{D}_E$, diverse data $\mathcal{D}_B$, replay buffer $\mathcal{D}_\mathrm{replay}$, model buffer $\mathcal{D}_\mathrm{model}$, reward function $r_\phi$, learning rate $\eta$, the number of steps $T$}
		Update replay buffer $\mathcal{D}_\mathrm{replay}\leftarrow\mathcal{D}_\mathrm{replay}\cup\mathcal{D}_\mathrm{model}$\;
		\For{$t=1$ \KwTo $T$}{
			Update the parameters of reward function $r_\phi$ by $\phi\leftarrow\phi-\eta\nabla L(r_\phi)$\;
		}
	\end{algorithm}
	
	\textbf{Practical algorithm.} Based on Algorithms \ref{alg:innerloop_rl} and \ref{alg:reward_updating}, a detailed  CLARE algorithm is outlined in Algorithm~\ref{alg:practical_clare}.
	
	\begin{algorithm}[t]
		\caption{Conservative model-based reward learning (CLARE)}
		\label{alg:practical_clare}
		\KwIn{expert data $\mathcal{D}_E$, diverse data $\mathcal{D}_B$, bar $u$, learning rate $\eta$, policy regularizer weight $\lambda$}
		Learn dynamics model $\widehat{T}$ represented by an ensemble of neural networks using all offline data\;
		Set weight $\beta(s,a)$ for each offline state-action tuple $(s,a)\in\mathcal{D}_E\cup\mathcal{D}_B$ by \cref{eqn:weight_comp_prac}\;
		Initialize the policy $\pi_\theta$ and reward function $r_\phi$ parameterized by $\theta$ and $\phi$ respectively\;
		Initialize replay buffer $\mathcal{D}_\mathrm{replay}\leftarrow\varnothing$\;
		\While{not done}{
			(Safe policy improvement) Run Algorithm \ref{alg:innerloop_rl} to update policy $\pi_\theta$ and get model buffer $\mathcal{D}_\mathrm{model}$\;
			(Conservative reward updating) Run Algorithm \ref{alg:reward_updating} to update reward function $r_\phi$\;
		}
	\end{algorithm} 
	
	\subsection{Hyperparameters}
	\label{sec:hyperparameters}
	
	We summarize the hyperparameters used in the evaluation as follows. 
	
	\textbf{Conservatism level $u$.} For all tasks, we normalize the uncertainty measure to $[0,1]$ and test $u$ from set $\{0.4,0.6,0.8\}$. The result is shown in Table~\ref{table:tune_u}. In each experiment, we select the $u$ value that achieves the maximum corresponding score.
	
	
	\textbf{Learning rates.} For all experiments, the reward learning rate is $\eta=5\times10^{-5}$. Our empirical studies indicate that a relatively small reward learning rate leads to more stable training. Additionally, the learning rates for actor and critic are both $3\times10^{-4}$, and that for dynamics model is $10^{-3}$.
	
	\textbf{Policy regularization.} For all experiments, the policy regularization weight is $\lambda=0.25$. 
	
	
	The additional hyperparameters are listed in Table~\ref{table:hyperparameters}.
	
	\begin{table}[ht]
		\centering
		\label{table:hyperparameters}
		\caption{\emph{Hyperparameters for CLARE.} Instead of $u$, the hyperparameters used in the evaluation are identical across different tasks (Half-Cheetah, Walker2d, Hopper, and Ant).} 
		\begin{tabular}{ll} 
			\toprule
			Hyperparameter                           & Value            \\ 
			\midrule
			Reward learning rate~($\eta$)            & $5\times 10^{-5}$           \\
			Rollout batchsize~($B$)                  & $5000$                       \\
			Rollout horizon~($H$)                    & $5$               \\
			Policy regularization weight ($\lambda$) & $0.25$            \\
			Discount factor~($\gamma$) & $0.99$ \\
			\# steps per reward updating~($T$)       & $5$                         \\
			\# epochs~($E$)            & $500$ \\
			\# steps per epoch~        & $20$ \\
			
			Actor learning rate        & $3\times 10^{-4}$ \\
			Critic learning rate       & $3\times 10^{-4}$ \\
			\bottomrule
		\end{tabular}
	\end{table}
	
	\begin{table}
		\centering
		\caption{\emph{Performance under different $u$ values.} We tune $u$ from set $\{0.4,0.6,0.8\}$. For each MuJoCo task, the experiments are carried out with three data combinations: 1) 10k expert state-action tuples, 2) 5k expert and 5k medium state-action tuples, and 3) 5k expert and 5k random state-action tuples. The highest score across different $u$ is bold.}
		\label{table:tune_u}
		\begin{tabular}{crrrr} 
			\toprule
			Dataset type                                                                & Environment & \begin{tabular}[c]{@{}c@{}}\textbf{$u=0.4$}\\\end{tabular} & $u=0.6$\textbf{} & $u=0.8$  \\ 
			\midrule
			\multirow{4}{*}{\emph{Exp. \& Rand.}} & Walker2d             & 2896.93                                                    & {\textbf{2989.79}} & 1083.17  \\
			& Hopper               & 1187.22                                                    & \textbf{1841.15} & 1508.09  \\
			& Ant                  & \textbf{2047.98}                                           & 1496.09          & 1337.01  \\
			& Half-Cheetah          & 453.03                                                     & \textbf{1118.58} & 849.42   \\ 
			\midrule
			\multirow{4}{*}{\emph{Exp. \& Med.}} & Walker2d             & 3334.55                                                    & {\textbf{3680.78}} & 3275.21  \\
			& Hopper               & 1722.44                                                    & {\textbf{2107.90}}  & 1963.59  \\
			& Ant                  & 3568.49                                                    & \textbf{3805.64} & 2635.30   \\
			& Half-Cheetah          & \textbf{4955.23}                                           & 4349.88          & 4000.17  \\ 
			\midrule
			\multirow{4}{*}{\emph{Exp.}}                                                        & Walker2d             & 4674.52                                                    & \textbf{4958.04} & 4742.20   \\
			& Hopper               & 1954.04                                                    & \textbf{2605.82} & 2328.22  \\
			& Ant                  & 2747.10                                                     & \textbf{3925.90}  & 3330.48  \\
			& Half-Cheetah          & {\textbf{5050.05}}                                           & 4942.20           & 4542.45  \\
			\bottomrule
		\end{tabular}
	\end{table}
	
	\subsection{More experimental results}
	\label{sec:more_results}
	
	We further evaluate CLARE by answering the following two questions: 1) Can CLARE exploit the useful information from diverse datasets? 2) How does CLARE perform compared to the simple combination of MORL and (online) IRL methods? 3) What is the impact of reward weighting? 4) What is the impact of expert sample sizes?
	
	\textbf{Exploitation on diverse data.} Table~\ref{table:extract_information} shows the results under different data combinations. By using additional medium data, the performance can be improved over that only using 5k expert tuples. The underlying rationale is: 1) The diverse datasets contain some good state-actions; 2) the diverse data support enables CLARE to safely generalize to the states beyond expert data manifold. 
	
	
	\begin{table}[ht]
		\centering
		\caption{\emph{Impact of diverse data.} For each MuJoCo task, the experiments are carried out with three data combinations: 1) 10k expert state-action tuples, 2) 5k expert state-action tuples, 3) 5k expert and 5k medium state-action tuples, and 4) 5k expert and 5k random state-action tuples. }
		\label{table:extract_information}
			\begin{tabular}{lcccc} 
				\toprule
				Task         & \emph{Exp. (5k) \& Rand. (5k)} & \emph{Exp. (5k) \& Med. (5k)}  & \emph{Exp. (5k)} & \emph{Exp. (10k)}  \\ 
				\midrule
				Walker2d     & {2973.88}                   & {3613.49}                  & {2858.29}     & 4990.57       \\ 
				Hopper       & 1891.55                   & {2135.07}                   & {1885.76}     & 2604.59       \\ 
				Ant          & 1960.05                   & 3879.48                   & {1978.08}     & 3940.30       \\ 
				Half-Cheetah & 1113.75                   & 4888.64                   & {1714.30}     & {4975.17}       \\
				\bottomrule
			\end{tabular}
	\end{table}
	
	{\color{black}
	\textbf{Expert sample sizes.} \cref{table:impact_expert} shows the average returns (over 5 random seeds) under different expert sample sizes with the fixed number of medium data (50k). It can corroborate our analytical results that with a relatively sufficient data coverage of the empirical expert behaviors, the performance is dominated by the expert sample size (combining \cref{thm:true_problem_lb}, \cref{thm:optimal_rho} and \cref{coro:opt_beta}).
    }
    
    \begin{table}[htpb]
        \centering
    	\caption{{\color{black}\emph{Results under different expert sample sizes.}}}
    	\label{table:impact_expert}
        \begin{tabular}{lrrrrrr}
        \toprule
        {Dataset}      & \emph{2k}     & \emph{5k}     & \emph{10k}    & \emph{20k}    & \emph{50k}     & \emph{100k}    \\ \midrule
        {Half-Cheetah} & {4753.9} & {4978.2} & {5206.5} & {7865.5} & {10930.1} & {11121.9} \\
        {Hopper}       & {1989.9} & {2273.4} & {2507.8} & {2991.8} & {3571.1}  & {3566.4}  \\
        {Walker}       & {3439.9} & {3632.2} & {4753.3} & {4982.4} & {4977.8}  & {4991.3}  \\
        {Ant}          & {3375.4} & {3866.5} & {3968.9} & {4385.7} & {4797.5}  & {4910.6}  \\ \bottomrule
        \end{tabular}
    \end{table}
	
	\textbf{Comparison to MOMAX.} To demonstrate the poor performance of the naive approach using a simple combination of IRL with model-based offline forward RL (MORL) method, we design a baseline directly using a state-of-the-art MORL method, COMBO \citep{yu2021combo}, in the inner loop of MaxEnt IRL (\cref{prob:minimax}), called MOMAX. As shown in Figure~\ref{fig:conv}, MOMAX does not work well in these continuous control tasks. It reveals the challenges of repurposing the online IRL methods in the offline IRL setting.
	
	\begin{figure}[t]
		\centering
		\subfigure{\label{subfig:conv_walker}\includegraphics[width=0.245\columnwidth]{./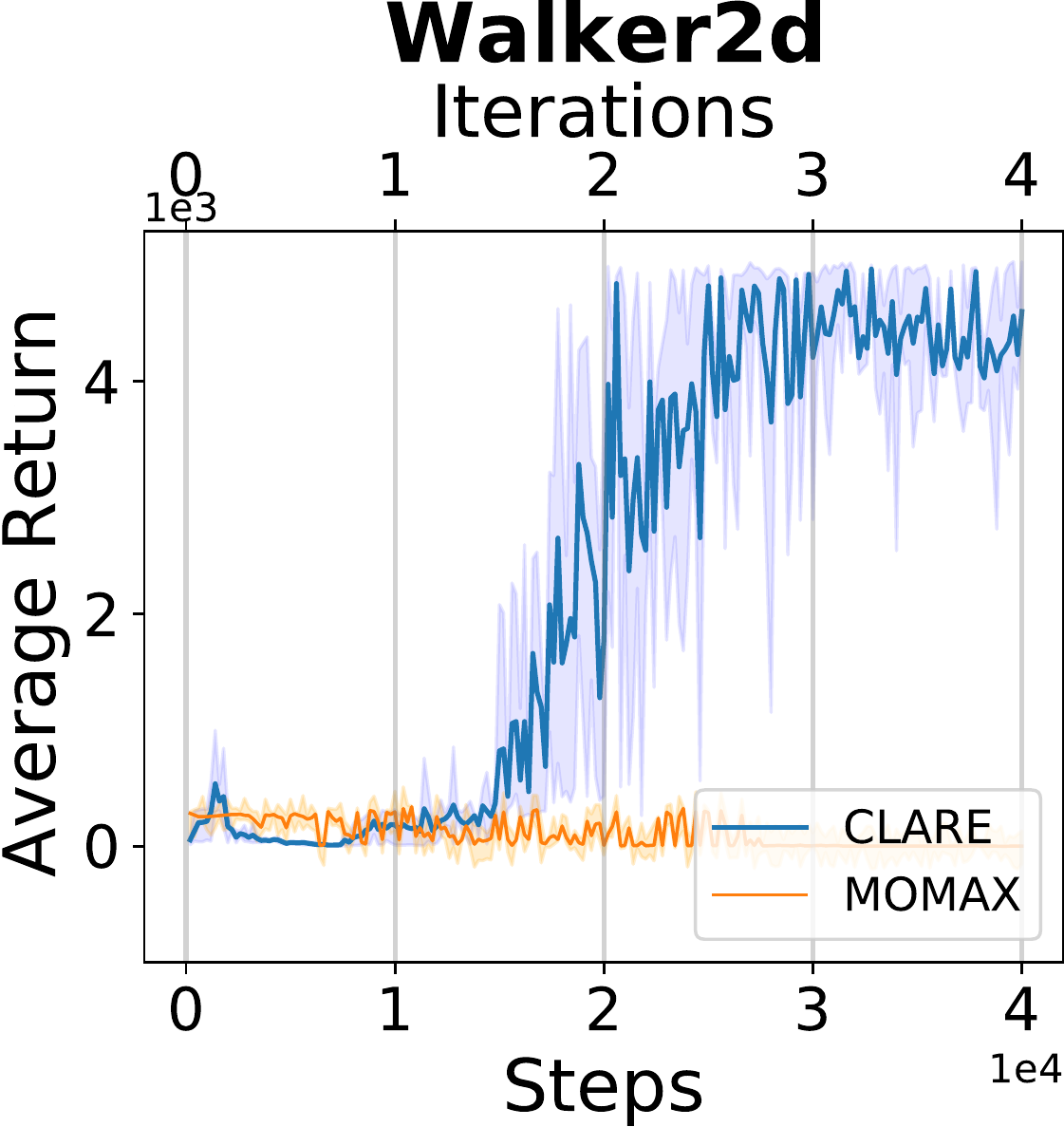}}
		\subfigure{\label{subfig:conv_hopper}\includegraphics[width=0.245\columnwidth]{./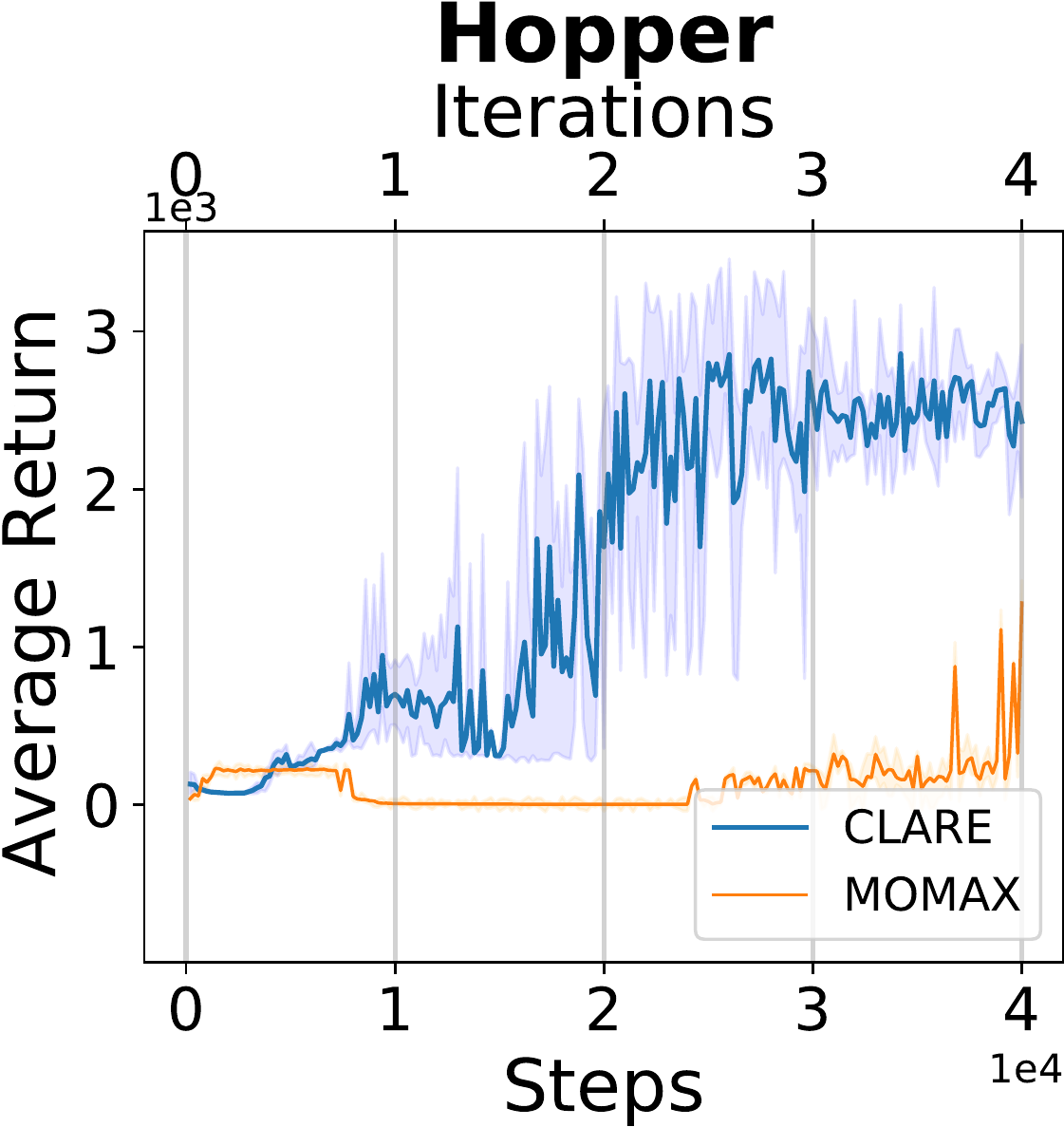}}
		\subfigure{\label{subfig:conv_ant}\includegraphics[width=0.245\columnwidth]{./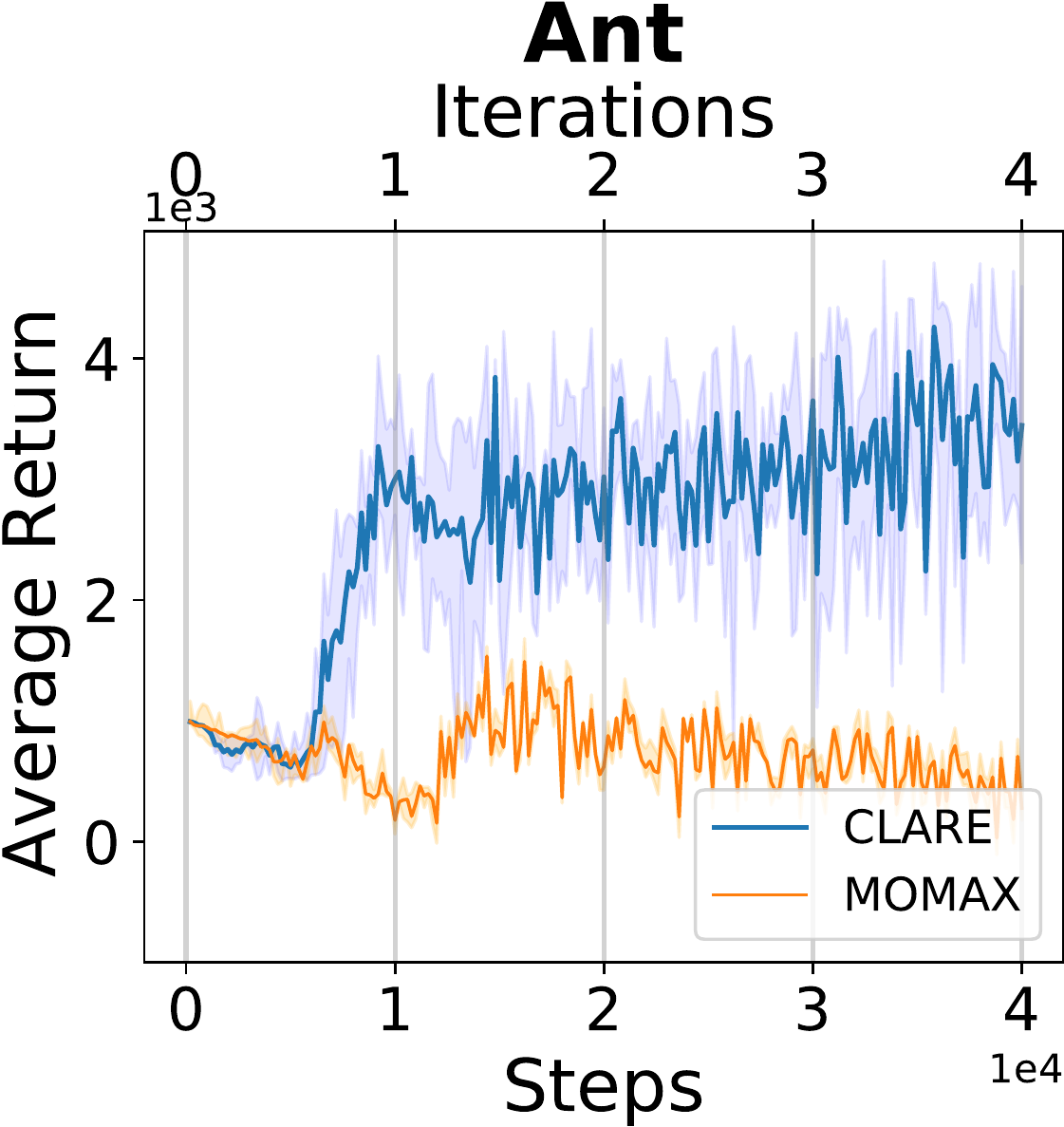}}
		\subfigure{\label{subfig:conv_halfcheetah}\includegraphics[width=0.245\columnwidth]{./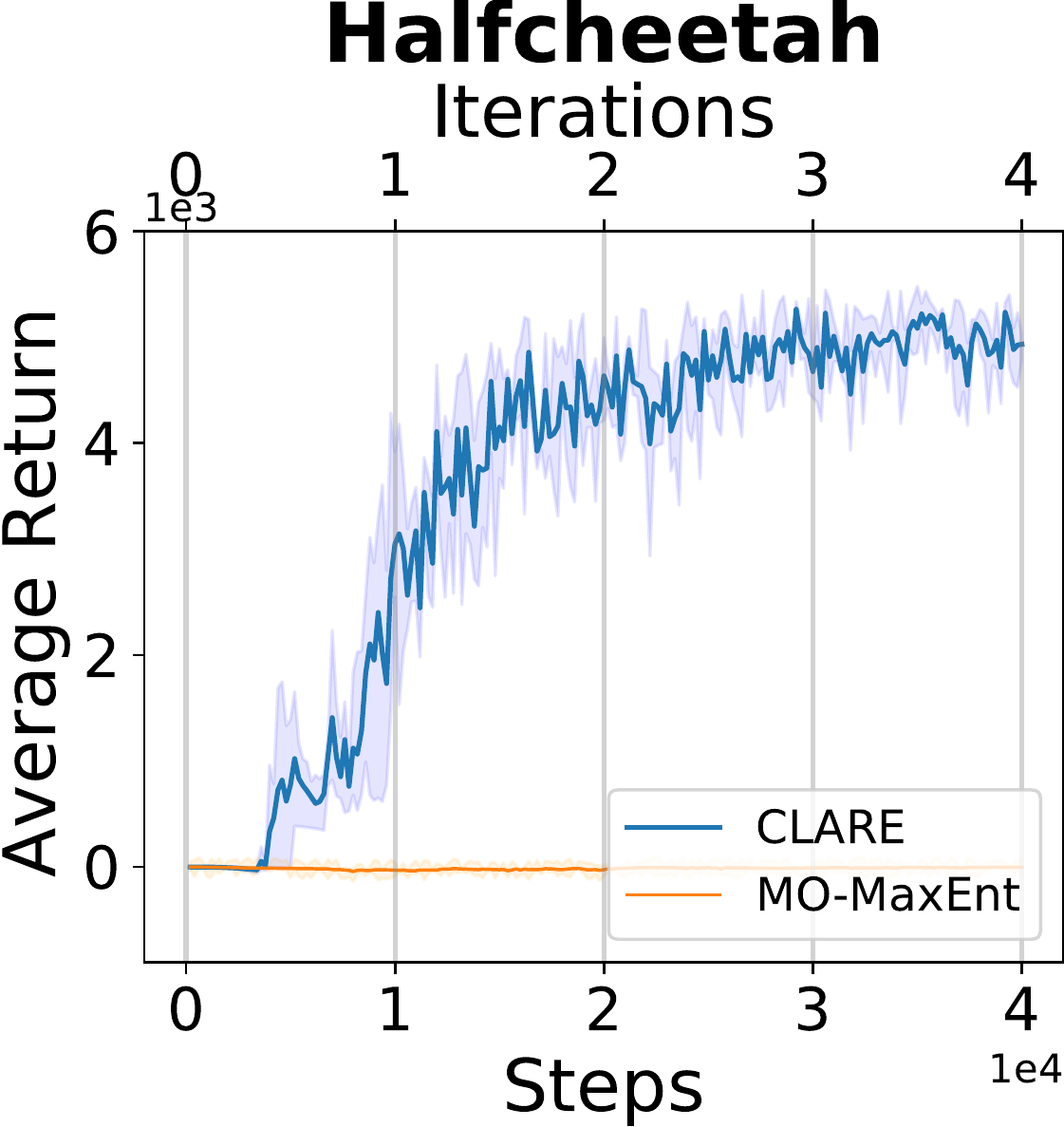}}
		\caption{\emph{Comparison to MOMAX.} Each experiment uses 10k expert and 10k medium state-actions.}
		\label{fig:conv}
	\end{figure}
	
	{\color{black}\textbf{Ablation study of reward weighting.} \cref{fig:impact_beta} shows the impact of reward weighting on performance. CLARE basically reduces to MaxEnt IRL with no reward weighting and thus can not deal with the extrapolation error effectively in offline learning. \cref{fig:impact_beta} also demonstrates that the conservative reward function can stabilize the training.}
	
	\begin{figure}[t]
		\centering
		\subfigure{\label{subfig:beta_walker}\includegraphics[width=0.245\columnwidth]{./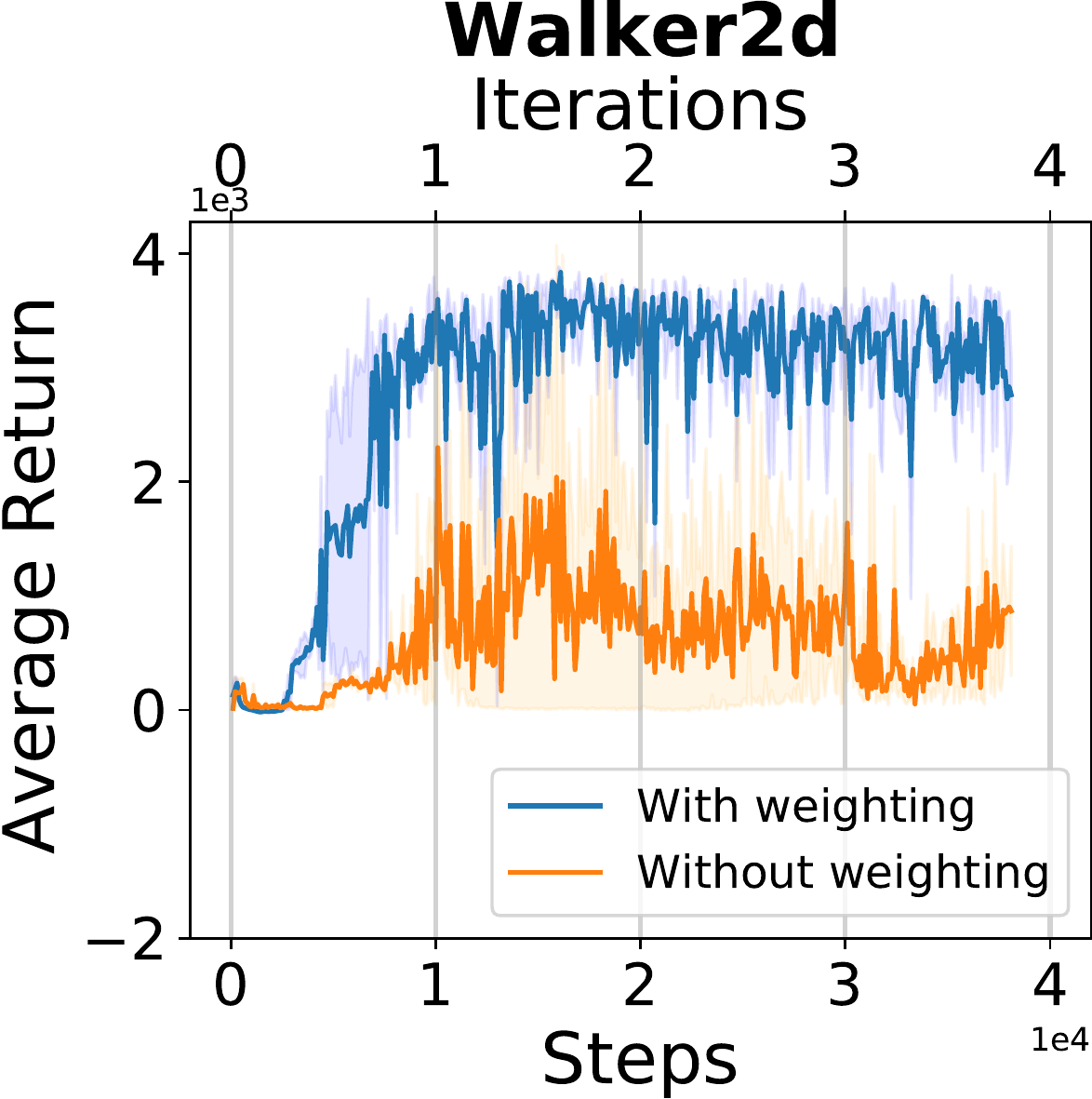}}
		\subfigure{\label{subfig:beta_hopper}\includegraphics[width=0.245\columnwidth]{./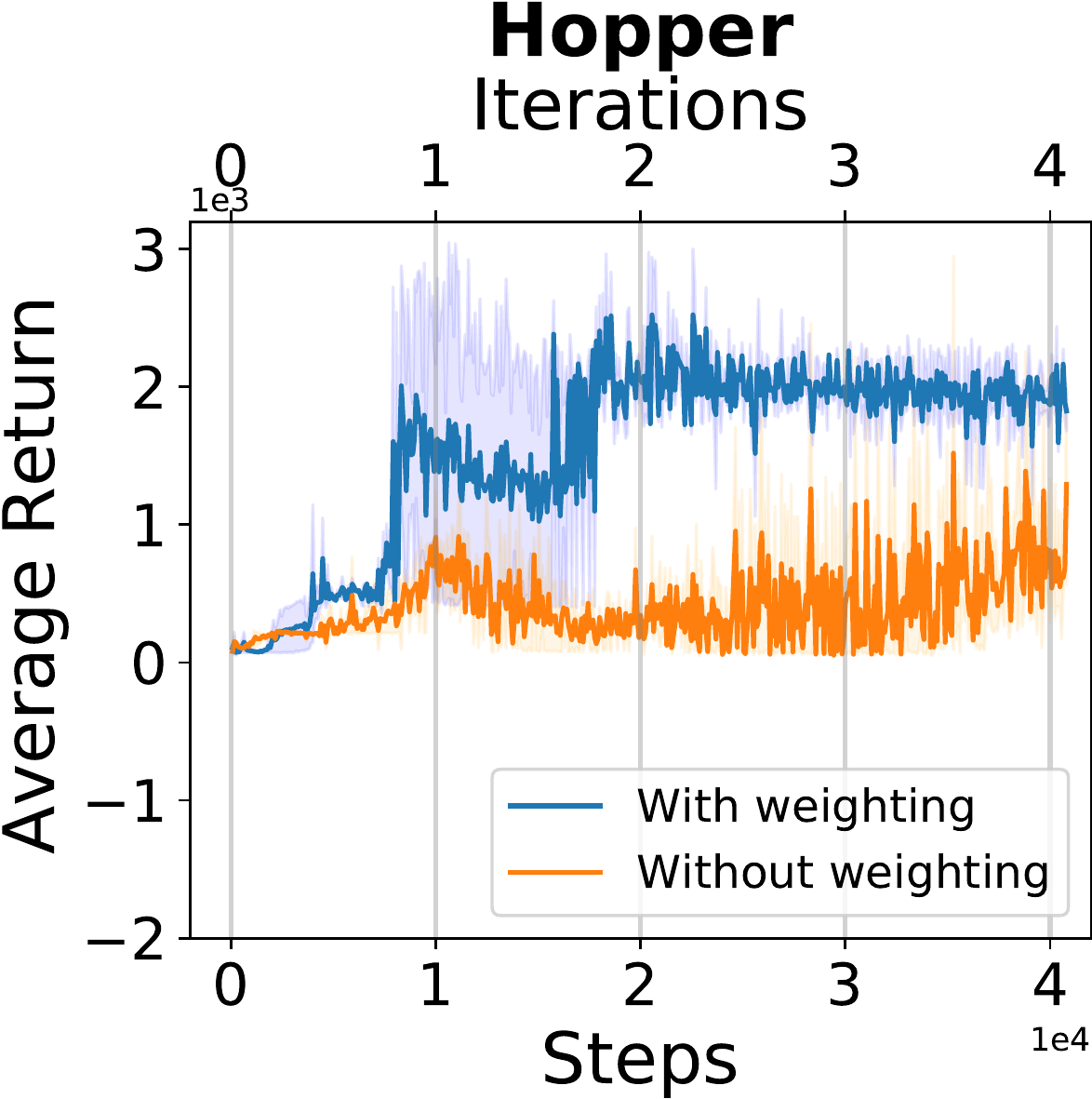}}
		\subfigure{\label{subfig:beta_ant}\includegraphics[width=0.245\columnwidth]{./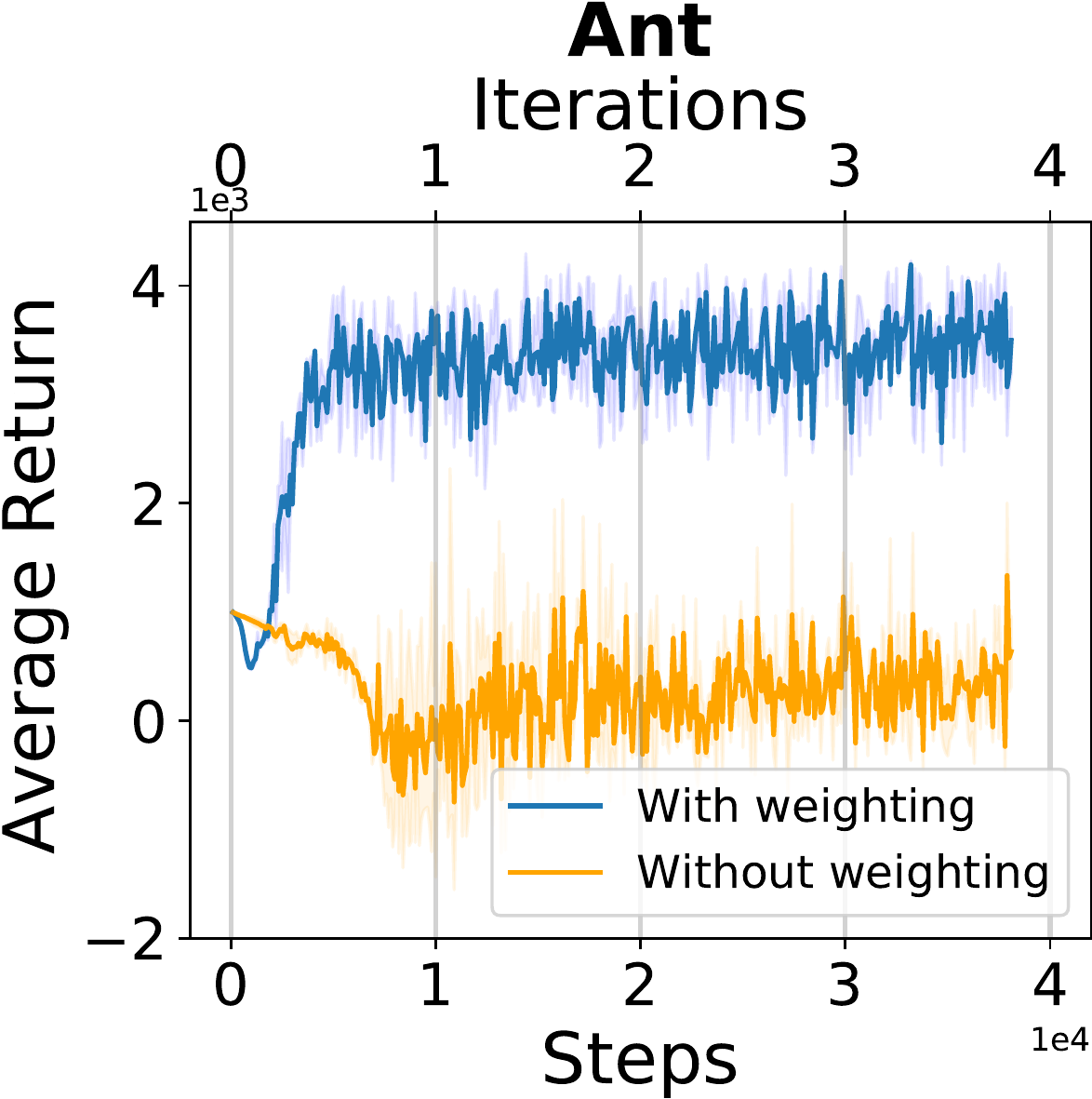}}
		\subfigure{\label{subfig:beta_halfcheetah}\includegraphics[width=0.245\columnwidth]{./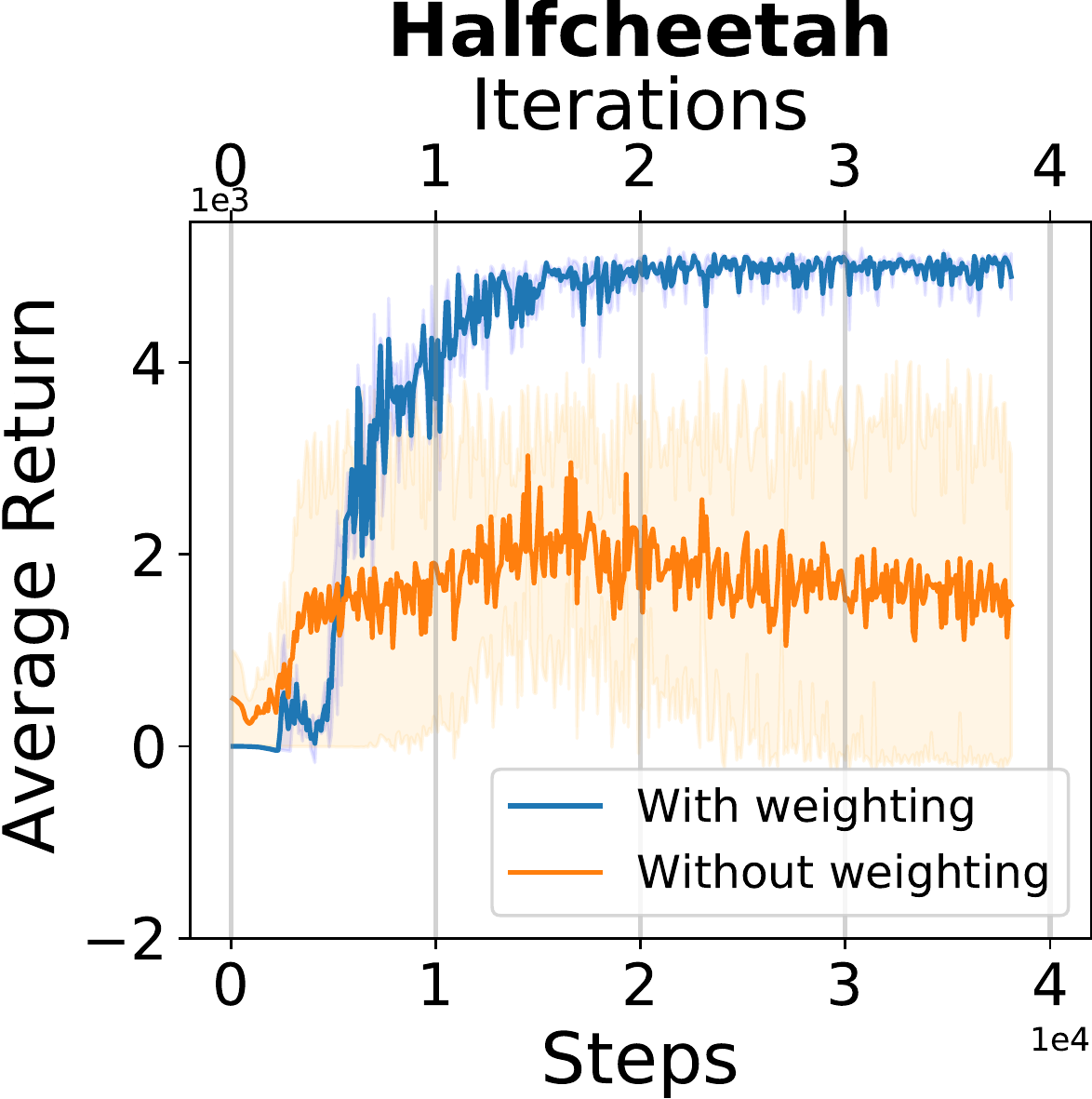}}
		\vspace{-1.75em}
		\caption{{\color{black}\emph{Ablation study of reward weighting.} Each experiment uses 5k expert and 50k medium state-action samples from the D4RL benchmark.} }
		\label{fig:impact_beta}
	\end{figure}
	
	\subsection{Computational complexity}
	\label{sec:computational_complexity}
 
	We implement the code in PyTorch 1.11.0 on a server with a 32-Cores AMD Ryzen  Threadripper PRO 3975WX and a Intel GeForch RTX 3090 Ti. For all tasks, CLARE converges in one hour (around 5-10 iterations with total 50k-100k gradient steps).

	\section{Proofs}
	\label{sec:proof}
	
	In this section, we provide  detailed proofs of main results in  \cref{sec:theoretical_analysis}.
	
	\subsection{Proof of Theorem \ref{thm:true_problem_general}}
	\label{proof:true_problem_general}
	
	This proof is built on that for \citet[Proposition 3.1]{ho2016generative}.
	
	First, it follows from \cref{prob:clare} that
	\begin{align}
		L(\pi,{r})&= \alpha\widehat{{H}}(\pi) + Z_{\beta} \mathbb{E}_{\hat{\rho}^{\pi}}\big[{r}(s,a)\big] - \mathbb{E}_{\tilde{\rho}^D}\big[{\beta}(s,a){r}(s,a)\big]- \mathbb{E}_{\tilde{\rho}^E}\big[{r}(s,a)\big]+Z_\beta\psi({r})\nonumber\\
		&= \alpha\widehat{{H}}(\pi) + \sum_{s,a}\left(Z_{\beta}\hat{\rho}^{\pi}(s,a)-\tilde{\rho}^D(s,a){\beta}(s,a)-\tilde{\rho}^E(s,a)\right){r}(s,a)+Z_\beta\psi({r})\nonumber\\
		&= \alpha\widehat{{H}}(\pi) + Z_{\beta}\sum_{s,a}\left(\hat{\rho}^{\pi}(s,a)-\frac{\tilde{\rho}^D(s,a){\beta}(s,a)+\tilde{\rho}^E(s,a)}{Z_\beta}\right){r}(s,a)+Z_\beta\psi({r})\nonumber\\
		&= \alpha\widehat{{H}}(\pi) + Z_{\beta} \left(\mathbb{E}_{\hat{\rho}^{\pi}}\big[{r}(s,a)\big] - \mathbb{E}_{\tilde{\rho}^I}\big[{r}(s,a)\big]+\psi({r})\right). \tag*{(denoting $\tilde{\rho}^I(s,a)\doteq\frac{\tilde{\rho}^D(s,a){\beta}(s,a)+\tilde{\rho}^E(s,a)}{Z_\beta}$)}
	\end{align}
	where the last equality holds due to \begin{align}
		Z_\beta &= 1 +  \mathbb{E}_{s,a\sim\tilde{\rho}^D}[{\beta}(s,a)]\nonumber\\
		&= \sum_{s,a}\tilde{\rho}^E(s,a) + \tilde{\rho}^D(s,a){\beta}(s,a)\nonumber\\
		&\ge 0\tag*{(from ${\beta}(s,a)\ge-\tilde{\rho}^E(s,a)/\tilde{\rho}^D(s,a)$ for $(s,a)\in\mathcal{D}$)}.
	\end{align}
	Thanks to \cref{lem:equivalence}, there exists a one-to-one correspondence between $\Pi$ and $\mathcal{C}_{\widehat{T}}$. Thus, we can rewrite
	\begin{align}
		\label{eqn:prob_gen_2}
		\min_{{r}\in\mathcal{R}}\max_{\pi\in\Pi}L(\pi,{r}) = \min_{{r}\in\mathcal{R}}\max_{\hat{\rho}\in\mathcal{C}_{\widehat{T}}} \underbrace{\alpha\widehat{{H}}(\pi) + Z_{\beta} \left(\mathbb{E}_{\hat{\rho}}\big[{r}(s,a)\big] - \mathbb{E}_{\tilde{\rho}^I}\big[{r}(s,a)\big]+\psi({r})\right)}_{\doteq \bar{L}(\hat{\rho},{r})}.
	\end{align}
	It is easy to see that $\mathcal{R}$ is compact and convex. Besides, from the proof of \citet[Proposition 3.1]{ho2016generative}, $\mathcal{C}_{\widehat{T}}$ is also a compact and convex set. Accordingly, based on the concavity of $\bar{{H}}$ (\cref{lem:entropy_equivalence}), the minimax theorem holds \citep{du2013minimax}, and hence we have
	\begin{align}
		\min_{{r}\in\mathcal{R}}\max_{\hat{\rho}\in\mathcal{C}_{\widehat{T}}}\bar{L}(\hat{\rho},{r}) &= \max_{\hat{\rho}\in\mathcal{C}_{\widehat{T}}}\min_{{r}\in\mathcal{R}}\bar{L}(\hat{\rho},{r})\nonumber\\ &= \max_{\hat{\rho}\in\mathcal{C}_{\widehat{T}}}\alpha\bar{{H}}(\hat{\rho}) + Z_{\beta} \left(\min_{{r}\in\mathcal{R}}\mathbb{E}_{\hat{\rho}}\big[{r}(s,a)\big] - \mathbb{E}_{\tilde{\rho}^I}\big[{r}(s,a)\big]+\psi({r})\right)\nonumber\\
		&= \max_{\hat{\rho}\in\mathcal{C}_{\widehat{T}}}\alpha\bar{{H}}(\hat{\rho}) + Z_{\beta} \psi^*\big(\tilde{\rho}^I - \tilde{\rho}\big)\tag*{(from the definition of convex conjugate)}\\
		\label{eqn:prob_gen_1}
		&= \max_{\hat{\rho}\in\mathcal{C}_{\widehat{T}}}\alpha\bar{{H}}(\hat{\rho}) + Z_{\beta} D_\psi \big(\tilde{\rho},\tilde{\rho}^I \big).
	\end{align}
	Additionally, denote ${r}^*$ and $\hat{\rho}^*$ as
	\begin{align}
		{r}^*\in\arg\min_{{r}\in\mathcal{R}}\max_{\hat{\rho}\in\mathcal{C}_{\widehat{T}}}\bar{L}(\hat{\rho},{r}),\quad\hat{\rho}^*\in\arg\max_{\hat{\rho}\in\mathcal{C}_{\widehat{T}}}\alpha\bar{{H}}(\hat{\rho}) + Z_{\beta} D_\psi \big(\tilde{\rho},\tilde{\rho}^I \big).
	\end{align}
	Due to \cref{eqn:prob_gen_1}, $({r}^*,\hat{\rho}^*)$ is a saddle point of $\bar{L}$, and thus $\hat{\rho}^*\in\arg\max_{\hat{\rho}\in\mathcal{C}_{\widehat{T}}}\bar{L}(\hat{\rho},{r}^*)$. By \cref{lem:equivalence}, it is easy to see that policy $\pi^*$ (that corresponds to $\hat{\rho}^*$) satisfies $\pi^*\in\arg\max_{\pi\in\Pi}L(\pi,{r}^*)$, thereby completing the proof.

	\subsection{Proof of Theorem \ref{thm:true_problem_lb}}
	\label{proof:true_problem_lb}
	
	We present the following two lemmas before our main result.
	
	\begin{lemma}
		\label{lem:tvd_joint}
		Denoting $p_1(x,y)=q_1(x)q_1(y|x)$ and $p_2(x,y)=q_2(x)q_2(y|x)$ as two joint distributions over finite spaces, we can bound the total variation distance (TVD) between $p_1$ and $p_2$ as 
		\begin{align}
			D_\mathrm{TV}(p_1,p_2)\le E_{x\sim q_1(x)}\left[D_\mathrm{TV}(q_1(\cdot|x),q_2(\cdot|x))\right]  + D_\mathrm{TV}(q_1,q_2).
		\end{align}
	\end{lemma}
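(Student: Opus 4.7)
The plan is to prove this standard total variation bound by a direct add-and-subtract argument using the definition $D_\mathrm{TV}(p_1,p_2)=\tfrac{1}{2}\sum_{x,y}|p_1(x,y)-p_2(x,y)|$. The key algebraic identity I will exploit is the decomposition
\begin{equation*}
q_1(x)q_1(y|x)-q_2(x)q_2(y|x) = q_1(x)\bigl(q_1(y|x)-q_2(y|x)\bigr) + \bigl(q_1(x)-q_2(x)\bigr)q_2(y|x),
\end{equation*}
which is obtained by inserting and removing the cross term $q_1(x)q_2(y|x)$. This splits the joint deviation into a ``conditional'' piece weighted by $q_1(x)$ and a ``marginal'' piece weighted by $q_2(y|x)$.

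Next, I would apply the triangle inequality inside the sum $\sum_{x,y}|p_1(x,y)-p_2(x,y)|$, obtaining two nonnegative sums. For the first sum, I factor out $q_1(x)$ and recognize $\sum_y |q_1(y|x)-q_2(y|x)| = 2 D_\mathrm{TV}(q_1(\cdot|x),q_2(\cdot|x))$, so summing over $x$ with weights $q_1(x)$ yields $2\,\mathbb{E}_{x\sim q_1}[D_\mathrm{TV}(q_1(\cdot|x),q_2(\cdot|x))]$. For the second sum, I perform the $y$-summation first using $\sum_y q_2(y|x)=1$, which reduces it to $\sum_x |q_1(x)-q_2(x)| = 2 D_\mathrm{TV}(q_1,q_2)$.

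Finally, combining the two bounds and dividing by the factor of $2$ from the TVD definition gives exactly the claimed inequality. There is no real obstacle: the entire argument is elementary and relies only on the triangle inequality and the definitional identity $\sum_y |\cdot| = 2 D_\mathrm{TV}$ applied to discrete distributions; the only care needed is to keep the marginal-versus-conditional factors straight in the add-and-subtract step so that the conditional term gets the $q_1$-expectation (matching the statement) rather than a $q_2$-expectation.
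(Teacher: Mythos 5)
Your proposal is correct and follows exactly the paper's argument: the same insertion and removal of the cross term $q_1(x)q_2(y|x)$, the same application of the triangle inequality, and the same use of $\sum_y q_2(y|x)=1$ to reduce the second sum to $2D_\mathrm{TV}(q_1,q_2)$. No differences worth noting.
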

	\begin{proof}
		The proof is straight-forward:
		\begin{align}
			D_\mathrm{TV}(p_1,p_2) &= \frac{1}{2}\sum_{x,y}\left|p_1(x,y) - p_2(x,y)\right|\nonumber\\
			&= \frac{1}{2}\sum_{x,y}\left|q_1(x)q_1(y|x) - q_2(x)q_2(y|x)\right|\nonumber\\
			&= \frac{1}{2}\sum_{x,y}\left|q_1(x)q_1(y|x) - q_1(x)q_2(y|x) + q_1(x)q_2(y|x) - q_2(x)q_2(y|x)\right|\nonumber\\
			&\le \frac{1}{2}\sum_{x,y}q_1(x)\left|q_1(y|x) - q_2(y|x)\right| + \frac{1}{2}\sum_{x,y}q_2(y|x)\left|q_1(x) - q_2(x)\right|\nonumber\\
			&= \sum_x q_1(x)\cdot\frac{1}{2}\sum_y \left|q_1(y|x) - q_2(y|x)\right| + \frac{1}{2}\sum_{x}\left|q_1(x) - q_2(x)\right|\sum_y q_2(y|x)\nonumber\\
			&= E_{x\sim q_1(x)}\left[D_\mathrm{TV}(q_1(y|x),q_2(y|x))\right] + D_\mathrm{TV}(q_1,q_2),
		\end{align}
		where the last equality is obtained due to $\sum_y q_2(y|x)=1$.
	\end{proof}
	
	\begin{lemma}
		\label{lem:tvd_markov}
		Suppose that we have two Markov chain transition distributions $T_1(s'|s)$ and $T_2(s'|s)$, and the initial state distributions are the same. Then, for each $h\in[1,2,\dots)$, the TVD of state marginals in time step $h$ is bounded as
		\begin{align}
			D_\mathrm{TV}(p^h_1,p^h_2)\le\sum^{h-1}_{h'=0}E_{s\sim p^{h'}_2}\left[D_\mathrm{TV}\left(T_1(\cdot|s), T_2(\cdot|s)\right)\right],
		\end{align}
		where $p^h_i(s) \doteq \Pr(s_h=s \mid T_i,\mu)$ for $i=1,2$.
	\end{lemma}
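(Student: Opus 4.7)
The plan is to prove the bound by induction on $h$, using the previously established Lemma~\ref{lem:tvd_joint} for the TVD of joint distributions as the key step.

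For the base case $h=1$, I observe that since the initial distributions coincide ($p^0_1 = p^0_2 = \mu$), we have $p^1_i(s') = \sum_{s} \mu(s) T_i(s'|s)$, and a direct application of the usual convexity bound yields $D_\mathrm{TV}(p^1_1, p^1_2) \le \mathbb{E}_{s \sim \mu}\bigl[D_\mathrm{TV}(T_1(\cdot|s), T_2(\cdot|s))\bigr]$, which matches the stated right-hand side for $h=1$ (a single summand indexed by $h'=0$).

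For the inductive step, assume the bound holds at step $h$. The key idea is to build the joint distribution $\tilde{p}^h_i(s,s') \doteq p^h_i(s)\, T_i(s'|s)$, whose $s'$-marginal is precisely $p^{h+1}_i(s') = \sum_s p^h_i(s)\,T_i(s'|s)$. Since marginalization is a deterministic map, the data processing inequality for TVD gives $D_\mathrm{TV}(p^{h+1}_1, p^{h+1}_2) \le D_\mathrm{TV}(\tilde{p}^h_1, \tilde{p}^h_2)$. Then I apply Lemma~\ref{lem:tvd_joint} to the joints $\tilde{p}^h_1, \tilde{p}^h_2$ to obtain
\begin{equation*}
D_\mathrm{TV}(\tilde{p}^h_1, \tilde{p}^h_2) \le \mathbb{E}_{s \sim p^h_2}\bigl[D_\mathrm{TV}(T_1(\cdot|s), T_2(\cdot|s))\bigr] + D_\mathrm{TV}(p^h_1, p^h_2).
\end{equation*}
Invoking the inductive hypothesis on the second term and combining yields the claimed sum from $h'=0$ to $h$, closing the induction.

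The main subtlety (and the only real obstacle) is that Lemma~\ref{lem:tvd_joint} as stated places the expectation under $q_1$, whereas we need it under $p^h_2$ to match the statement of the current lemma. This is handled either by invoking the symmetry $D_\mathrm{TV}(p_1,p_2)=D_\mathrm{TV}(p_2,p_1)$ and relabelling the two joints, or equivalently by redoing the add-and-subtract step of Lemma~\ref{lem:tvd_joint} with the alternative splitting $q_1(x)q_1(y|x) - q_2(x)q_1(y|x) + q_2(x)q_1(y|x) - q_2(x)q_2(y|x)$, which puts the expectation under $q_2$. Beyond this bookkeeping point, the argument is routine telescoping, and no further machinery is required.
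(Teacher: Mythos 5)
Your proof is correct and follows essentially the same route as the paper's: both establish the one-step recursion $D_\mathrm{TV}(p^{h}_1,p^{h}_2)\le E_{s\sim p^{h-1}_2}[D_\mathrm{TV}(T_1(\cdot|s),T_2(\cdot|s))]+D_\mathrm{TV}(p^{h-1}_1,p^{h-1}_2)$ and then telescope it down to the common initial distribution. The only difference is packaging --- you obtain the recursion by forming the joint over $(s,s')$ and invoking the data-processing inequality together with Lemma~\ref{lem:tvd_joint} (with the roles of the two distributions swapped, which you correctly justify via symmetry of TVD), whereas the paper inlines the same add-and-subtract computation directly on the marginals.
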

	\begin{proof}
		First, we have
		\begin{align}
			\label{eqn:gap_state_prob}
			&\left|p^h_1(s) - p^h_2(s)\right| \nonumber\\
			=& \left|\sum_{s'}T_1(s|s') p^{h-1}_1(s') - \sum_{s'}T_2(s|s') p^{h-1}_2(s')\right|\nonumber\\
			\le& \sum_{s'}\left|T_1(s|s') p^{h-1}_1(s') - T_2(s|s') p^{h-1}_2(s')\right|\nonumber\\
			=& \sum_{s'}\left|T_1(s|s') p^{h-1}_1(s') - T_1(s|s') p^{h-1}_2(s') + T_1(s|s') p^{h-1}_2(s') - T_2(s|s') p^{h-1}_2(s')\right|\nonumber\\
			\le& \sum_{s'} \left(T_1(s|s')\left|p^{h-1}_1(s') - p^{h-1}_2(s')\right| + p^{h-1}_2(s')\left|T_1(s|s') - T_2(s|s')\right|\right)\nonumber\\
			=& \sum_{s'} T_1(s|s')\left|p^{h-1}_1(s') - p^{h-1}_2(s')\right| + E_{s'\sim p^{h-1}_2}\left[\left|T_1(s|s') - T_2(s|s')\right|\right].
		\end{align}
		Thus, we can write
		\begin{align}
			&D_\mathrm{TV}(p^h_1,p^h_2)\nonumber\\
			=& \frac{1}{2}\sum_s \left|p^h_1(s) - p^h_2(s)\right|\nonumber\\
			\le& \frac{1}{2}\sum_s E_{s'\sim p^{h-1}_2}\left[\left|T_1(s|s') - T_2(s|s')\right|\right] + \frac{1}{2}\sum_s\sum_{s'} T_1(s|s')\left|p^{h-1}_1(s') - p^{h-1}_2(s')\right|\tag*{(using \cref{eqn:gap_state_prob})}\nonumber\\
			=& \frac{1}{2}\sum_s E_{s'\sim p^{h-1}_2}\left[\left|T_1(s|s') - T_2(s|s')\right|\right] + \frac{1}{2}\sum_{s'}\left|p^{h-1}_1(s') - p^{h-1}_2(s')\right|\sum_s T_1(s|s')\nonumber\\
			=& E_{s'\sim p^{h-1}_2}\left[\frac{1}{2}\sum_s\left|T_1(s|s') - T_2(s|s')\right|\right] + \frac{1}{2}\sum_{s'}\left|p^{h-1}_1(s') - p^{h-1}_2(s')\right| \tag*{(using $\sum_s T_1(s|s')=1$)}\\
			\label{eqn:tvd_markov_2}
			=& E_{s'\sim p^{h-1}_2}\left[D_\mathrm{TV}\left(T_1(\cdot|s'), T_2(\cdot|s')\right)\right] + D_\mathrm{TV}(p^{h-1}_1, p^{h-1}_2)\\
			\le& \sum^{h-1}_{h'=0}E_{s\sim p^{h'}_2}\left[D_\mathrm{TV}\left(T_1(\cdot|s), T_2(\cdot|s)\right)\right] + D_\mathrm{TV}(p^0_1,p^0_2) \tag*{(iteratively using \cref{eqn:tvd_markov_2})}\\
			=& \sum^{h-1}_{h'=0}E_{s\sim p^{h'}_2}\left[D_\mathrm{TV}\left(T_1(\cdot|s), T_2(\cdot|s)\right)\right], \tag*{(due to same initial state distributions)}
		\end{align}
		which completes the proof.
	\end{proof}
	
	Observe that	\cref{lem:tvd_joint} bounds the TVD of a joint distribution by the TVDs of its corresponding conditional and marginal distributions, and that \cref{lem:tvd_markov} bounds the difference of two MDPs' state visitations in each time step by the cumulative dynamics differences.
	Next,  we provide the following lemma that bounds the difference between the expert's and learned policy's occupancy measures from above. 
	
	\begin{lemma}
		\label{lem:upperbound}
		For each $\hat{\rho}\in\mathcal{C}_{\widehat{T}}$, denote $\hat{\pi}$ as its corresponding stationary policy, i.e., $\hat{\pi}\doteq\hat{\rho}(s,a)/\sum_{a'}\hat{\rho}(s,a')$, and $\rho^{\hat{\pi}}$ denote the occupancy measure of $\hat{\pi}$ under true transition dynamics $T$. Then, the following holds:
		\begin{align}
			\label{eqn:upperbound}
			D_\mathrm{TV}(\rho^{\hat{\pi}},\rho^E)\le\frac{\gamma}{1-\gamma}\mathbb{E}_{s,a\sim\hat{\rho}}\Big[D_\mathrm{TV}\big(T(\cdot|s,a),\widehat{T}(\cdot|s,a)\big)\Big]+ D_\mathrm{TV}(\hat{\rho},\tilde{\rho}^E) + D_\mathrm{TV}(\tilde{\rho}^E,\rho^E),
		\end{align}
		where $\rho^E$ is the occupancy measure of expert policy $\pi^E$ under the true transition dynamics. 
	\end{lemma}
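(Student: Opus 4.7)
The plan is to apply the triangle inequality to split $D_\mathrm{TV}(\rho^{\hat\pi},\rho^E)$ into three pieces and then concentrate almost all the effort on bounding the one new piece. Specifically, inserting $\hat\rho$ and $\tilde\rho^E$ between $\rho^{\hat\pi}$ and $\rho^E$ gives
\begin{align*}
D_\mathrm{TV}(\rho^{\hat\pi},\rho^E)\;\le\; D_\mathrm{TV}(\rho^{\hat\pi},\hat\rho) \;+\; D_\mathrm{TV}(\hat\rho,\tilde\rho^E) \;+\; D_\mathrm{TV}(\tilde\rho^E,\rho^E),
\end{align*}
where the last two summands are already the second and third terms on the right-hand side of \cref{eqn:upperbound}. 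Thus everything reduces to showing $D_\mathrm{TV}(\rho^{\hat\pi},\hat\rho) \le \frac{\gamma}{1-\gamma}\mathbb{E}_{s,a\sim\hat\rho}[D_\mathrm{TV}(T(\cdot|s,a),\widehat T(\cdot|s,a))]$; this is a ``same policy, two dynamics'' simulation-lemma type statement.

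To obtain that estimate, I would expand both occupancy measures as $\rho^{\hat\pi}(s,a)=(1-\gamma)\hat\pi(a|s)\sum_{h\ge 0}\gamma^h p^h_T(s)$ and $\hat\rho(s,a)=(1-\gamma)\hat\pi(a|s)\sum_{h\ge 0}\gamma^h p^h_{\widehat T}(s)$, where $p^h_T,p^h_{\widehat T}$ denote the step-$h$ state marginals under the induced Markov chains $T_{\hat\pi}(s'|s)=\sum_a\hat\pi(a|s)T(s'|s,a)$ and $\widehat T_{\hat\pi}$. Taking absolute values, summing over $(s,a)$, and using $\sum_a\hat\pi(a|s)=1$ gives
\begin{align*}
D_\mathrm{TV}(\rho^{\hat\pi},\hat\rho) \;\le\; (1-\gamma)\sum_{h\ge 0}\gamma^h\, D_\mathrm{TV}(p^h_T,p^h_{\widehat T}).
\end{align*}
Then \cref{lem:tvd_markov} applied to the two induced Markov chains (which share the initial distribution $\mu$) controls each $D_\mathrm{TV}(p^h_T,p^h_{\widehat T})$ by $\sum_{h'=0}^{h-1}\mathbb{E}_{s\sim p^{h'}_{\widehat T}}[D_\mathrm{TV}(T_{\hat\pi}(\cdot|s),\widehat T_{\hat\pi}(\cdot|s))]$, and \cref{lem:tvd_joint} (viewing $T_{\hat\pi}(\cdot|s)$ as the $s'$-marginal of $\hat\pi(a|s)T(s'|s,a)$) further bounds the integrand by $\mathbb{E}_{a\sim\hat\pi(\cdot|s)}[D_\mathrm{TV}(T(\cdot|s,a),\widehat T(\cdot|s,a))]$.

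Putting these inequalities together, the crucial step is a swap of summation: after replacing the state marginal in the lemma applications,
\begin{align*}
D_\mathrm{TV}(\rho^{\hat\pi},\hat\rho)\;\le\;(1-\gamma)\sum_{h\ge 0}\gamma^h\sum_{h'=0}^{h-1}\mathbb{E}_{s,a\sim p^{h'}_{\widehat T}\cdot\hat\pi}\!\left[D_\mathrm{TV}(T(\cdot|s,a),\widehat T(\cdot|s,a))\right],
\end{align*}
interchanging the order of summation collapses $\sum_{h=h'+1}^{\infty}\gamma^h=\gamma^{h'+1}/(1-\gamma)$, which produces exactly the prefactor $\gamma/(1-\gamma)$ and the remaining discounted sum $(1-\gamma)\sum_{h'\ge 0}\gamma^{h'}\mathbb{E}_{s,a\sim p^{h'}_{\widehat T}\cdot\hat\pi}[\,\cdot\,]=\mathbb{E}_{s,a\sim\hat\rho}[\,\cdot\,]$, finishing the argument.

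I expect the main obstacle to be the bookkeeping of the double summation and the justification that the induced Markov chains can legitimately be fed into \cref{lem:tvd_markov} with the identical initial distribution $\mu$ and with the policy-averaged transition kernels; relating the policy-averaged TVD on states back to the $(s,a)$-wise TVD on the original transition models via \cref{lem:tvd_joint} is the conceptually subtle part, while the final geometric-series collapse is routine.
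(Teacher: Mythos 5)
Your proposal is correct and follows essentially the same route as the paper: the same triangle-inequality decomposition through $\hat\rho$ and $\tilde\rho^E$, the same reduction of $D_\mathrm{TV}(\rho^{\hat\pi},\hat\rho)$ to a discounted sum of per-step state-marginal TVDs via \cref{lem:tvd_joint}, the same application of \cref{lem:tvd_markov} to the policy-averaged kernels followed by \cref{lem:tvd_joint} to return to the $(s,a)$-wise model error, and the same interchange of the double summation to collapse the geometric series into the $\gamma/(1-\gamma)$ prefactor and the occupancy measure $\hat\rho$. No gaps.
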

	\begin{proof}
		For conciseness, let $\rho_1\doteq\rho^{\hat{\pi}}$ and $\rho_2\doteq\hat{\rho}$.
		Using the triangle inequality, it is easy to see that
		\begin{align}
			\label{eqn:three_term}
			D_\mathrm{TV}(\rho_1,\rho^E)\le D_\mathrm{TV}(\rho_1,\rho_2) + D_\mathrm{TV}(\rho_2,\tilde{\rho}^E) + D_\mathrm{TV}(\tilde{\rho}^E,\rho^E),
		\end{align}
		where $\tilde{\rho}^E$ is the empirical occupancy measure of expert policy $\pi^E$. To bound $D_\mathrm{TV}(\rho_1,\rho_2)$, denoting $p^h_1(s,a)\doteq\Pr(s_h=s,a_h=a\mid {\color{black}T},\hat{\pi},\mu)$ and $p^h_2(s,a)\doteq\Pr(s_h=s,a_h=a\mid {\color{black}\widehat{T}},\hat{\pi},\mu)$ (the difference between them is marked in red), we can write
		\begin{align}
			D_\mathrm{TV}(\rho_1,\rho_2) &= \frac{1}{2}\sum_{s,a}\left|\rho_1(s,a) - \rho_2(s,a)\right|\nonumber\\
			&= \frac{1}{2}\sum_{s,a}\left|(1-\gamma)\sum^\infty_{h=0}\gamma^h p^h_1(s,a) - (1-\gamma)\sum^\infty_{h=0}\gamma^h p^h_2(s,a)\right|\tag*{(using the definition of occupancy measure in \cref{sec:preliminaries})}\nonumber\\
			&= \frac{1-\gamma}{2}\sum_{s,a}\left|\sum^\infty_{h=0}\gamma^h \left(p^h_1(s,a) -  p^h_2(s,a)\right)\right|\nonumber\\
			&\le \frac{1-\gamma}{2}\sum^\infty_{h=0}\sum_{s,a}\gamma^h \left|p^h_1(s,a) -  p^h_2(s,a)\right|\nonumber\\
			&= (1-\gamma)\sum^\infty_{h=0}\gamma^h\cdot\frac{1}{2}\sum_{s,a} \left|p^h_1(s,a) -  p^h_2(s,a)\right|\nonumber\\
			&= (1-\gamma)\sum^\infty_{h=0}\gamma^h\cdot D_\mathrm{TV}\left(p^h_1(s,a),p^h_2(s,a)\right)\nonumber\\
			\label{eqn:upperbound_1}
			&\le (1-\gamma)\sum^\infty_{h=0}\gamma^h D_\mathrm{TV}\left(p^h_1(s),p^h_2(s)\right), 
		\end{align}
		where $p^h_1(s) \doteq \Pr(s_h=s \mid T,\hat{\pi},\mu)$, $p^h_2(s) \doteq \Pr(s_h=s \mid \widehat{T},\hat{\pi},\mu)$, and the last inequation holds due to \cref{lem:tvd_joint} (note that $p^h_1(s,a)=p^h_1(s)\hat{\pi}(a|s)$ and $p^h_2(s,a)=p^h_2(s)\hat{\pi}(a|s)$).\footnote{To avoid ambiguity, we use $D_\mathrm{TV}(p^h_1(s,a),p^h_2(s,a))$ and $D_\mathrm{TV}(p^h_1(s),p^h_2(s))$ to denote the TVDs between the corresponding state-action distributions and state distributions respectively.}  Denote $T_1(s'|s)\doteq\sum_a T(s',a|s)$ and $T_2(s'|s)\doteq\sum_a \widehat{T}(s',a|s)$, where we slightly overload notations using $T(s',a|s)\doteq \hat{\pi}(a|s)T(s'|s,a)$ and $\widehat{T}(s',a|s)\doteq \hat{\pi}(a|s)\widehat{T}(s'|s,a)$. We obtain
		\begin{align}
			D_\mathrm{TV}\left(T_1(\cdot|s),T_2(\cdot|s)\right) &= \frac{1}{2}\sum_{s'}\left|T_1(s'|s) - T_2(s'|s)\right|\nonumber\\
			&= \frac{1}{2}\sum_{s'}\left|\sum_a T(s',a|s) -\widehat{T}(s',a|s)\right|\nonumber\\
			&\le \frac{1}{2}\sum_{s',a}\left| T(s',a|s) -\widehat{T}(s',a|s)\right|\nonumber\\
			&= D_\mathrm{TV}\left(T(s',a|s),\widehat{T}(s',a|s)\right)\nonumber\\
			\label{eqn:upperbound_2}
			&\le E_{a\sim\hat{\pi}(a|s)}\left[D_\mathrm{TV}\left(T(\cdot|s,a),\widehat{T}(\cdot|s,a)\right)\right].\tag*{(seeing $\hat{\pi}(a|s)$ as $q_1(x),q_2(x)$, $T(s'|s,a)$ as $q_1(y|x)$, and $\widehat{T}(s'|s,a)$ as $q_2(y|x)$, and then using \cref{lem:tvd_joint})}
		\end{align}
		Based on that, the following holds:
		\begin{align}
			D_\mathrm{TV}(\rho_1,\rho_2) &\le (1-\gamma)\sum^\infty_{h=0}\gamma^h D_\mathrm{TV}\left(p^h_1(s),p^h_2(s)\right)\tag*{(from \cref{eqn:upperbound_1})}\nonumber\\
			&\le (1-\gamma)\sum^\infty_{h=1}\gamma^h \sum^{h-1}_{h'=0}E_{s\sim p^{h'}_2(s)}\left[D_\mathrm{TV}\left(T_1(\cdot|s), T_2(\cdot|s)\right)\right]\tag*{(using fact $D_\mathrm{TV}(p^0_1(s),p^0_2(s))=D_\mathrm{TV}(\mu,\mu)=0$ and \cref{lem:tvd_markov})}\nonumber\\
			&\le (1-\gamma)\sum^\infty_{h=1}\gamma^h \sum^{h-1}_{h'=0}E_{s\sim p^{h'}_2(s)}\left[E_{a\sim\hat{\pi}(a|s)}\left[D_\mathrm{TV}\left(T(\cdot|s,a),\widehat{T}(\cdot|s,a)\right)\right]\right]\tag*{(using the above result)}\nonumber\\
			&= (1-\gamma)\sum^\infty_{h=1}\gamma^h \sum^{h-1}_{h'=0}\mathbb{E}_{s,a\sim p^{h'}_2(s,a)}\left[D_\mathrm{TV}\left(T(\cdot|s,a),\widehat{T}(\cdot|s,a)\right)\right]\tag*{(noting that $p^h_2(s,a)=p^h_2(s)\hat{\pi}(a|s)$)}\nonumber\\
			&= (1-\gamma)\sum_{s,a}D_\mathrm{TV}\left(T(\cdot|s,a),\widehat{T}(\cdot|s,a)\right)\sum^\infty_{h=1}\gamma^h \sum^{h-1}_{h'=0}p^{h'}_2(s,a)\tag*{(expanding the expectation and rearranging terms)}\nonumber\\
			&= \gamma(1-\gamma)\sum_{s,a}D_\mathrm{TV}\left(T(\cdot|s,a),\widehat{T}(\cdot|s,a)\right)\underbrace{\sum^\infty_{h=1} \sum^{h-1}_{h'=0}\gamma^{h-1}p^{h'}_2(s,a)}_{\doteq A}\nonumber\\
			&= \gamma(1-\gamma)\sum_{s,a}D_\mathrm{TV}\left(T(\cdot|s,a),\widehat{T}(\cdot|s,a)\right)\underbrace{\sum^\infty_{h=1}\gamma^{h-1}\sum^\infty_{h'=0}\gamma^{h'}p^{h'}_2(s,a)}_{\doteq B}\tag*{($B$ is derived by rearranging the terms in $A$)}\nonumber\\
			&= \gamma(1-\gamma)\sum_{s,a}D_\mathrm{TV}\left(T(\cdot|s,a),\widehat{T}(\cdot|s,a)\right)\sum^\infty_{h=1}\gamma^{h-1}\frac{\rho_2(s,a)}{1-\gamma}\tag*{(noting that $\rho_2(s,a)=(1-\gamma)\sum^\infty_{h'=0}\gamma^{h'}p^{h'}_2(s,a)$)}\nonumber\\
			&= \sum_{s,a}D_\mathrm{TV}\left(T(\cdot|s,a),\widehat{T}(\cdot|s,a)\right)\sum^\infty_{h=1}\gamma^{h}\rho_2(s,a)\nonumber\\
			&= \sum_{s,a}\rho_2(s,a)D_\mathrm{TV}\left(T(\cdot|s,a),\widehat{T}(\cdot|s,a)\right)\sum^\infty_{h=1}\gamma^{h}\nonumber\\
			\label{eqn:dist_state_action_prob}
			&=\frac{\gamma}{1-\gamma}\mathbb{E}_{s,a\sim\rho_2}\left[D_\mathrm{TV}\left(T(\cdot|s,a),\widehat{T}(\cdot|s,a)\right)\right].
		\end{align}
		Substituting \cref{eqn:dist_state_action_prob} in \cref{eqn:three_term} gives the desired result.
	\end{proof}
	
	Denoting $\rho^\pi$ as the occupancy measure of $\pi$ under underlying dynamics model $T$, We can write
	\begin{align}
		J(\pi^E) - J(\rho^{\pi}) &= \sum_{s,a}\rho^E(s,a)R(s,a) - \sum_{s,a}\rho^{\pi}(s,a)R(s,a) \tag*{(from the definition)}\nonumber\\
		&= \sum_{s,a}\left(\rho^E(s,a) - \rho^{\pi}(s,a)\right)R(s,a) \nonumber\\
		&\le \sum_{s,a}\left|\rho^E(s,a) - \rho^{\pi}(s,a)\right| \tag*{(due to $|R(s,a)|\le1$)}\nonumber\\
		\label{eqn:true_lb_1}
		&= 2D_\mathrm{TV}(\rho^{\pi},\rho^E).
	\end{align}
	Then, based on \cref{lem:upperbound}, the desired result in \cref{thm:true_problem_lb} can be obtained by combining \cref{eqn:true_lb_1} with \cref{eqn:upperbound}.
	
	\subsection{Proof of Theorem \ref{thm:optimal_rho}}
	\label{proof:optimal_rho}
	
	Recall that $c(s,a)= C\cdot D_\mathrm{TV}(T(\cdot|s,a),\widehat{T}(\cdot|s,a))$. We define
	\begin{align}
		f(\rho) &\doteq \mathbb{E}_{s,a\sim\rho}[c(s,a)]+2D_\mathrm{TV}(\rho,\tilde{\rho}^E)\nonumber\\
		&= \sum_{s,a}c(s,a)\rho(s,a)+\left|\rho(s,a)-\tilde{\rho}^E(s,a)\right|.
	\end{align}
	Thanks to \cref{lem:equivalence}, minimizing the RHS of \cref{eqn:true_problem_lb} is equivalent to the following problem:
	\begin{align}
		\label{prob:lb_original}
		\min_{\rho\in\mathcal{P}(\mathcal{S}\times\mathcal{A})} f(\rho).
	\end{align}
	Let $\delta(s,a)\doteq\rho(s,a)-\tilde{\rho}^E(s,a)$. Then, Problem (\ref{prob:lb_original}) can be transformed to the following one:
	\begin{align}
		\label{prob:lb_trans_1}
		\min_\delta~&  \sum_{s,a}c(s,a)\delta(s,a) + \left|\delta(s,a)\right|\\
		\label{eqn:trans_1_c1}
		\text{s.t.}~&\sum_{s,a}\delta(s,a)=0\\
		\label{eqn:trans_1_c2}
		&\delta(s,a)\ge-\tilde{\rho}^E(s,a)\quad s\in\mathcal{S},a\in\mathcal{A}.
	\end{align}
	For conciseness, we rewrite Problem (\ref{prob:lb_trans_1})-(\ref{eqn:trans_1_c2}) as the following form:
	\begin{align}
		\label{prob:lb_trans_2}
		\min_\delta~& g(\delta)\doteq \sum^n_{i=1}c_i\delta_i + \left|\delta_i\right|\\
		\label{eqn:trans_2_c1}
		\textrm{s.t.}~&\sum^n_{i=1}\delta_i=0\\
		\label{eqn:trans_2_c2}
		&\delta_i\ge-\tilde{\rho}^E_i\quad i\in[n]
	\end{align}
	where $i$ corresponds to a state-action pair, $n\doteq|\mathcal{S}|\cdot|\mathcal{A}|$, $[n]\doteq\{1,2,\dots,n\}$, and $\delta\doteq\{\delta_i:i\in[n]\}$.
	
	Due to \cref{eqn:trans_2_c1} and \cref{eqn:trans_2_c2}, $[n]$ can be divided into two disjoint sets, $\mathcal{N}_{1}(\delta)\doteq\{i\in[n]:\delta_i>0\}$ and $\mathcal{N}_{2}(\delta)\doteq\{i\in[n]:\delta_i\le0\}$ ($\mathcal{N}_1(\delta)=\emptyset$ iff all $\delta_i=0$). Thus, we can write
	\begin{align}
		g(\delta)=\sum_{i\in\mathcal{N}_{1}(\delta)}(c_i+1)\delta_i + \sum_{j\in\mathcal{N}_{2}(\delta)}(c_j - 1)\delta_j.
	\end{align}
	For any $\delta$ meeting Constraints (\ref{eqn:trans_2_c1}) and (\ref{eqn:trans_2_c2}), we denote $\delta'$ (which should be $\delta'_{\mathcal{N}_1}$ if written in full) satisfying $\delta'_j=-{\1}[c_j-c^\mathrm{min}_1>2]\cdot\tilde{\rho}^E_j$ for all $j\in\mathcal{N}_2(\delta)$, $\delta'_i=0$ for all $i\in\mathcal{N}_1(\delta)\backslash\mathcal{N}_\mathrm{min}$, and $\delta'_i=\sum_{j\in\mathcal{N}_2(\delta)}{\1}[c_j-c^\mathrm{min}_1>2]\cdot\tilde{\rho}^E_j/|\mathcal{N}_\mathrm{min}(\delta)|$ for all $i\in\mathcal{N}_\mathrm{min}(\delta)$, where $\mathcal{N}_\mathrm{min}(\delta)\doteq\{i\in\mathcal{N}_1(\delta):i\in\arg\min_{i'\in\mathcal{N}_1(\delta)}c_{i'}\}$ and $c^\mathrm{min}_1\doteq\min_{i\in\mathcal{N}_1(\delta)}c_{i}$. Then, we have
	\begin{align}
		g(\delta') &= \sum_{i\in\mathcal{N}_{1}(\delta)}(c_i+1)\delta'_i + \sum_{j\in\mathcal{N}_{2}(\delta)}(c_j - 1)\delta'_j\nonumber\\
		&= \sum_{i\in\mathcal{N}_\mathrm{min}(\delta)}(c_i+1)\delta'_i + \sum_{i'\in\mathcal{N}_{1}(\delta)\backslash\mathcal{N}_\mathrm{min}(\delta)}(c_{i'}+1)\delta'_{i'} + \sum_{j\in\mathcal{N}_{2}(\delta)}(c_j - 1)\delta'_j\nonumber\\
		&= \sum_{i\in\mathcal{N}_\mathrm{min}(\delta)}(c_i+1)\delta'_i + \sum_{{i'} \in\mathcal{N}_{1}(\delta)\backslash\mathcal{N}_\mathrm{min}(\delta)}(c_{i'} +1)\delta'_{i'}  - \sum_{j\in\mathcal{N}_2(\delta)}{\1}[c_j-c^\mathrm{min}_1>2]\cdot(c_j-1)\tilde{\rho}^E_j \nonumber\\
		&= (c^\mathrm{min}_1+1)\sum_{j\in\mathcal{N}_2(\delta)}{\1}[c_j-c^\mathrm{min}_1>2]\cdot\tilde{\rho}^E_j - \sum_{j\in\mathcal{N}_2(\delta)}{\1}[c_j-c^\mathrm{min}_1>2]\cdot(c_j-1)\cdot\tilde{\rho}^E_j\tag*{(due to $\delta'_{i'}=0$)}\nonumber\\
		\label{eqn:g_delta_prime}
		&= \sum_{j\in\mathcal{N}_2(\delta)}{\1}[c_j-c^\mathrm{min}_1>2]\cdot(c^\mathrm{min}_1-c_j+2)\cdot\tilde{\rho}^E_j.
	\end{align}
	Regarding $g(\delta)$, the following holds:
	\begin{align}
		g(\delta) =\;& \sum_{i\in\mathcal{N}_{1}(\delta)}(c_i+1)\delta_i + \sum_{j\in\mathcal{N}_{2}(\delta)}(c_j - 1)\delta_j \nonumber\\
		\ge\;& \sum_{i\in\mathcal{N}_{1}(\delta)}(c^\mathrm{min}_1+1)\delta_i + \sum_{j\in\mathcal{N}_{2}(\delta)}{\1}[c_j-c^\mathrm{min}_1>2]\cdot(c_j - 1)\delta_j \nonumber\\
		&+ \sum_{j\in\mathcal{N}_{2}(\delta)}{\1}[c_j-c^\mathrm{min}_1\le 2]\cdot(c_j - 1)\delta_j \nonumber\\
		=\;& -\sum_{j\in\mathcal{N}_{2}(\delta)}{\1}[c_j-c^\mathrm{min}_1>2]\cdot(c^\mathrm{min}_1+1)\delta_j + \sum_{j\in\mathcal{N}_{2}(\delta)}{\1}[c_j-c^\mathrm{min}_1>2]\cdot(c_j - 1)\delta_j \nonumber\\
		&+ (c^\mathrm{min}_1+1)\left(\sum_{i\in\mathcal{N}_{1}(\delta)}\delta_i + \sum_{j\in\mathcal{N}_{2}(\delta)}{\1}[c_j-c^\mathrm{min}_1>2]\cdot\delta_j\right) \nonumber\\
		&+ \sum_{j\in\mathcal{N}_{2}(\delta)}{\1}[c_j-c^\mathrm{min}_1\le 2]\cdot(c_j - 1)\delta_j \tag*{(adding and subtracting $-\sum_{j\in\mathcal{N}_{2}(\delta)}{\1}[c_j-c^\mathrm{min}_1>2]\cdot(c^\mathrm{min}_1+1)\delta_j$)}\nonumber\\
		=\;& \sum_{j\in\mathcal{N}_{2}(\delta)}{\1}[c_j-c^\mathrm{min}_1>2]\cdot(c_j-c^\mathrm{min}_1-2)\delta_j \nonumber\\
		&+ (c^\mathrm{min}_1+1)\left(\sum_{i\in\mathcal{N}_{1}(\delta)}\delta_i + \sum_{j\in\mathcal{N}_{2}(\delta)}{\1}[c_j-c^\mathrm{min}_1>2]\cdot\delta_j\right)\nonumber\\
		&+ \sum_{j\in\mathcal{N}_{2}(\delta)}{\1}[c_j-c^\mathrm{min}_1\le 2]\cdot(c_j - 1)\delta_j \nonumber\\
		\ge\;& \sum_{j\in\mathcal{N}_{2}(\delta)}{\1}[c_j-c^\mathrm{min}_1>2]\cdot(c^\mathrm{min}_1-c_j+2)\tilde{\rho}^E_j \nonumber\\
		&+ (c^\mathrm{min}_1+1)\left(\sum_{i\in\mathcal{N}_{1}(\delta)}\delta_i + \sum_{j\in\mathcal{N}_{2}(\delta)}{\1}[c_j-c^\mathrm{min}_1>2]\cdot\delta_j\right)\nonumber\\
		&+ \sum_{j\in\mathcal{N}_{2}(\delta)}{\1}[c_j-c^\mathrm{min}_1\le 2]\cdot(c^\mathrm{min}+1)\delta_j \tag*{(noting that $\delta_j\le0$)} \nonumber\\
		=\;&\sum_{j\in\mathcal{N}_{2}(\delta)}{\1}[c_j-c^\mathrm{min}_1>2]\cdot(c^\mathrm{min}_1-c_j+2)\tilde{\rho}^E_j \nonumber\\
		&+ (c^\mathrm{min}_1+1)\left(\sum_{i\in\mathcal{N}_{1}(\delta)}\delta_i + \sum_{j\in\mathcal{N}_{2}(\delta)}\left({\1}[c_j-c^\mathrm{min}_1>2]+{\1}[c_j-c^\mathrm{min}_1\le 2]\right)\cdot\delta_j\right)\nonumber\\
		=\;&\sum_{j\in\mathcal{N}_{2}(\delta)}{\1}[c_j-c^\mathrm{min}_1>2]\cdot(c^\mathrm{min}_1-c_j+2)\tilde{\rho}^E_j+ (c^\mathrm{min}_1+1)\underbrace{\left(\sum_{i\in\mathcal{N}_{1}(\delta)}\delta_i + \sum_{j\in\mathcal{N}_{2}(\delta)}\delta_j\right)}_{=0}\tag*{(due to Constraint (\ref{eqn:trans_2_c1}))}\nonumber\\
		=\;& g(\delta').\tag*{(due to \cref{eqn:g_delta_prime})}
	\end{align}
	Denoting $\mathcal{G}\doteq\{\delta\in\mathbb{R}^{|\mathcal{S}|\cdot|\mathcal{A}|}~\text{s.t.}~\text{(\ref{eqn:trans_2_c1})}~\text{and}~\text{(\ref{eqn:trans_2_c2})}\}$, we have the following fact:
	\begin{align}
		\delta'_{\mathcal{N}_1}=\mathop{\arg\min}_{\delta\in\mathcal{G}(\mathcal{N}_1)}g(\delta),
	\end{align}
	where $\mathcal{G}(\mathcal{N}_1)\doteq\{\delta\in\mathcal{G}:\mathcal{N}_1(\delta)=\mathcal{N}_1~\text{and}~\mathcal{N}_2(\delta)=[n]\backslash\mathcal{N}_1\}$. Due to \cref{eqn:g_delta_prime}, we have
	\begin{align}
		\label{prob:lb_trans_3}
		\min_{\delta\in\mathcal{G}}g(\delta)=\min_{\mathcal{N}_1\subset [n]} g(\delta'_{\mathcal{N}_1}) =\min_{\mathcal{N}_1\subset [n]} \sum_{j\in[n]\backslash\mathcal{N}_1}{\1}[c_j-c^\mathrm{min}_1>2]\cdot(c^\mathrm{min}_1-c_j+2)\cdot \tilde{\rho}^E_j.
	\end{align}
	Let $c^\mathrm{min}\doteq\min_{i\in[n]}c_i$ and $\mathcal{N}^*_1\doteq\{i\in[n]:c_i=c^\mathrm{min}\}$. The following fact is true:
	\begin{align}
		g(\delta'_{\mathcal{N}_1}) - g(\delta'_{\mathcal{N}^*_1})=\;&\sum_{j\in[n]\backslash\mathcal{N}_1}{\1}[c_j-c^\mathrm{min}_1>2]\cdot(c^\mathrm{min}_1-c_j+2) \nonumber\\
		&- \sum_{j'\in[n]\backslash\mathcal{N}^*_1}{\1}[c_{j'}-c^\mathrm{min}>2]\cdot(c^\mathrm{min}-c_{j'}+2)\nonumber\\
		\ge\;&\sum_{j\in[n]\backslash\mathcal{N}_1}{\1}[c_j-c^\mathrm{min}_1>2]\cdot(c^\mathrm{min}-c_j+2) \nonumber\\
		&- \sum_{j'\in[n]\backslash\mathcal{N}^*_1}{\1}[c_{j'}-c^\mathrm{min}>2]\cdot(c^\mathrm{min}-c_{j'}+2)\tag*{(due to $c^\mathrm{min}\le c^\mathrm{min}_1$)}\nonumber\\
		\ge\;&0,
	\end{align}
	where the last inequality holds because $\{j\in[n]/\mathcal{N}_1:c_j-c^\mathrm{min}_1>2\}$ is a subset of $\{j'\in[n]/\mathcal{N}^*_1:c_{j'}-c^\mathrm{min}>2\}$. Thus, $\delta^*\doteq\delta'_{\mathcal{N}^*_1}=\min_{\delta\in\mathcal{G}}g(\delta)$, and we can express $\delta^*$ as 
	\begin{align}
		\delta^*(s,a)=
		\begin{cases}
			\frac{\sum_{s',a'}{\1}[c(s',a')-c^\mathrm{min}>2]\cdot\tilde{\rho}^E(s',a')}{|\mathcal{N}_\mathrm{min}|},~&\textit{if}~c(s,a) \le  c^\mathrm{min}\\
			-\tilde{\rho}^E(s,a),~&\textit{if}~c(s,a)> c^\mathrm{min}+2\\
			0,~&\textit{otherwise}
		\end{cases}
	\end{align}
	where $\mathcal{N}_\mathrm{min}=\mathcal{N}^*_1$. Due to $\delta(s,a)+\tilde{\rho}^E(s,a)=\rho(s,a)$, we obtain the optimal solution of Problem (\ref{prob:lb_original}) as follows:
	\begin{align}
		\rho^*(s,a)=
		\begin{cases}
			\frac{\sum_{s',a'}{\1}[c(s',a')-c^\mathrm{min}>2]\cdot\tilde{\rho}^E(s',a')}{|\mathcal{N}_\mathrm{min}|}+\tilde{\rho}^E(s,a),~&\textit{if}~c(s,a) \le c^\mathrm{min}\\
			0,~&\textit{if}~c(s,a)> c^\mathrm{min}+2\\
			\tilde{\rho}^E(s,a),~&\textit{otherwise}
		\end{cases}
	\end{align}
	thereby completing the proof.
	
	\subsection{Proof of Corollary \ref{coro:opt_beta}}
	\label{proof:optimal_beta}
	
	Because $c(s,a)>c^{\min}$ when $\tilde{\rho}^D(s,a)=0$, if $\tilde{\rho}^D(s,a)=0$, then $\rho^*(s,a)=0$ holds. The desired result can be easily obtained by seeing $\beta^*(s,a)\tilde{\rho}^D(s,a)$ as $\delta^*(s,a)$ in the proof of \cref{thm:optimal_rho}.

{\color{black}	
\subsection{Minimizing a Chi-squared divergence}

The $f$-divergence between two distributions $\rho_1$ and $\rho_2$ is defined as
\begin{align}
    D_f(\rho_1 \| \rho_2)=\mathbb{E}_{\rho_2}\left[f\left(\frac{\rho_1}{\rho_2}\right)\right]=\sup_{g} \mathbb{E}_{X\sim\rho_1}[g(X)]-\mathbb{E}_{X\sim\rho_2}\left[f^*(g(X))\right]
\end{align}
where $f^*$ is the convex conjugate. The $\chi^2$-divergence is the $f$-divergence with $f(x)=(x-1)^2$ and  $f^*(y)=\frac{y^2}{4}+y$, i.e.,
\begin{align}
    \chi^2(\rho_1,\rho_2) =\sup_g E_{X\sim\rho_1}[g(X)]-E_{X\sim\rho_2}\left[\frac{g(X)^2}{4}+g(X)\right]
\end{align}
By interpreting $g=-r$ and $X=(s,a)$, the following holds:
\begin{align}
    \chi^2(\rho_1,\rho_2) =\sup_r E_{(s,a)\sim\rho_2}\left[r(s,a)\right] -E_{(s,a)\sim\rho_1}[r(s,a)] - \frac{1}{4}E_{(s,a)\sim\rho_2}\left[r(s,a)^2\right]
\end{align}
Thus, using a convex reward regularizer $\psi(r)=\frac{r^2}{4\delta}$ enables CLARE to minimize a $\chi^2$-divergence between the target policy and learned policy, i.e., $\max_{\hat{\rho}\in\mathcal{C}_{\widehat{T}}}\alpha\bar{{H}}(\hat{\rho}) - Z_\beta \delta \cdot\chi^2(\hat{\rho},\hat{\rho}^*)$.
}

\end{document}